\newtheorem{definition}{Definition}
\newtheorem{theorem}{Theorem}
\newtheorem{lemma}{Lemma}
\newtheorem{proposition}{Proposition}
\newtheorem{assump}{Assumption}
\newtheorem{fact}{Fact}
\newcommand*{\infn}[1]{\left\|{#1}\right\|_{\infty}}
\newcommand{\cA}{\mathcal{A}}
\newcommand{\cG}{\mathcal{G}}
\newcommand{\cL}{\mathcal{L}}
\newcommand{\cS}{\mathcal{S}}
\newcommand{\cF}{\mathcal{F}}
\newcommand{\cM}{\mathcal{M}}
\newcommand{\cN}{\mathcal{N}}
\newcommand{\cP}{\mathcal{P}}
\newcommand{\cX}{\mathcal{X}}
\newcommand{\cZ}{\mathcal{Z}}
\newcommand{\expec}{\mathbb{E}}
\newcommand{\real}{\mathbb{R}}
\newcolumntype{L}[1]{>{\raggedright\arraybackslash}p{#1}}
\DeclareMathOperator*{\argmin}{argmin}
\DeclareMathOperator*{\argmax}{argmax}
\title{
Finite-Time Bounds for \\
Average-Reward
Fitted Q-Iteration
}
\author{Jongmin Lee\\
Seoul National University\\
Department of Mathematical Sciences\\
\texttt{dlwhd2000@snu.ac.kr} \\
\And
Ernest K. Ryu \\
UCLA \\
Department of Mathematics \\
\texttt{eryu@math.ucla.edu} 
}
\begin{document}

\maketitle

\begin{abstract}
Although there is an extensive body of work characterizing the sample complexity of discounted-return offline RL with function approximations, prior work on the average-reward setting has received significantly less attention, and existing approaches rely on restrictive assumptions, such as ergodicity or linearity of the MDP. In this work, we establish the first sample complexity results for average-reward offline RL with function approximation for weakly communicating MDPs, a much milder assumption. To this end, we introduce Anchored Fitted Q-Iteration, which combines the standard Fitted Q-Iteration with an anchor mechanism. We show that the anchor, which can be interpreted as a form of weight decay, is crucial for enabling finite-time analysis in the average-reward setting. We also extend our finite-time analysis to the setup where the dataset is generated from a single-trajectory rather than IID transitions, again leveraging the anchor mechanism.
\end{abstract}

\section{Introduction}

The goal of offline Reinforcement Learning (RL) is to find a near-optimal policy using a precollected dataset without any direct interaction with the environment. Characterizing the sample complexity for finding an $\epsilon$-optimal policy using function approximation under assumptions that the offline data has sufficient coverage over the whole state-action space has been an active area of theoretical RL research. However, more prior work focuses on the discounted cumulative reward setup, and research on obtaining sample complexity in the average reward has been limited due to the absence of the discount factor and the complexity of the Bellman equation. Specifically, all prior works with function approximation rely on restrictive assumptions such as ergodicity or linearity of the MDP. 

Although theoretical RL research often focuses on the discounted return setup due to the theoretical convenience offered by the discount factor and the simpler Bellman equation, many practical scenarios are more naturally modeled as agents aim to maximize the average reward. In fact, many practical RL applications do not use discounting at all. These considerations make the sample complexity of average-reward RL relevant, despite the additional technical challenges this setting presents.

\paragraph{Contribution.}
In this work, we introduce the Anchored Fitted Q-Iteration and establish the sample complexity on average reward MDPs with general function approximation for weakly communicating MDPs for the first time. We consider the cases with IID data and with single-trajectory data. Then, we show that by using the relative normalization mechanism from the classical relative value iteration, we can further improve the sample complexity.


\begin{table}[h]
    \centering
    \begin{tabular}{ccccc}
        \toprule
\textbf{ Prior works} & \makecell{\textbf{MDP class} } & \makecell{\textbf{dataset}} &
        \makecell{\textbf{Coverage}\\\textbf{coefficient}}  \\
        \midrule
        Ozdaglar et al.\ \cite{ozdaglar2024}   & \makecell{ergodic$^*$}  & IID samples & partial  \\
        \midrule
        Gabbianelli et al.\ \cite{gabbianelli2024offline} & \makecell{unichain$^*$ (+ linear)  } & IID samples  & partial \\
        \midrule
        Our work & weakly communicating$^*$ & IID samples & full \\
        \midrule
        Our work & \makecell{weakly communicating$^*$  } &  \makecell{$\beta$-mixing single-trajectory} & full \\
        \bottomrule
    \end{tabular}
    
    \vspace{0.1in}
    
    \caption{Comparison of analyses of offline average-reward MDPs. Our work, which assumes the MDP is weakly communicating, significantly relaxes the structural assumption on the MDP compared to prior work. (Clarification$^*$: Ergodic, unichain, and weakly communicating are respectively the standard MDP classes for which the results of \cite{ozdaglar2024}, \cite{gabbianelli2024offline}, and our work apply. However, the precise conditions are slightly more general in each case. See Section~\ref{ss:mdp-class} for detailed definitions.) 
    }
    \label{table:main}
\end{table}


\newpage

\begin{figure}
    \vspace{-0.1in}
    \centering
\begin{tikzpicture}
    \draw[thick, rounded corners] (-2.9, 1.5) rectangle (2.9, -1.2);
    \node at (0, 1.05) {\fontsize{10.5}{5}\selectfont weakly communicating };
    
    \draw[thick, rounded corners] (-2.1, 0.6) rectangle (2.1, -1.1);
    \node at (0, 0.2) {\fontsize{10.5}{5}\selectfont unichain};
    
    \draw[thick, rounded corners] (-1.4, -0.2) rectangle (1.4, -1.0);
    \node at (0, -0.6) {\fontsize{10.5}{5}\selectfont ergodic};
\end{tikzpicture}
    \caption{The MDP classes satisfy the inclusion: ergodic $\subset$ unichain $\subset$ weakly communicating 
    }
    \label{fig:class_MDP}
\end{figure}
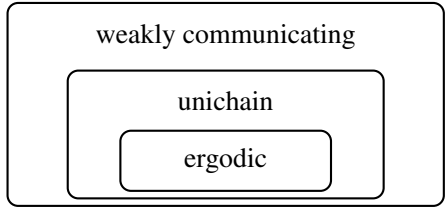

\subsection{Preliminaries and notations}
We briefly review the basic notions of average-reward Markov decision processes (MDPs) and reinforcement learning (RL) and refer the readers to standard references for further details \cite{10.5555/528623, bertsekas2015dynamic, sutton2018reinforcement}. 
\paragraph{Average-reward MDP.}
Let $\cM(\cX)$ be the space of probability distributions over $\cX$ and $\cF(\cX)$ be the space of bounded real-valued functions over $\cX$. Write $(\cS, \cA, P, r)$ to denote an infinite-horizon undiscounted MDP with finite state space $\cS$, finite action space $\cA$, transition matrix $P\colon \cS \times \cA \rightarrow \cM(\cS)$, bounded reward $r\colon  \cS \times \cA \rightarrow [-R,R]$. Denote $\pi\colon \cS \rightarrow \cM(\cA)$ for a policy, $g^{\pi}(s,a)=\liminf_{T\rightarrow \infty} \frac{1}{T}\expec_{\pi}\left[\sum^{T}_{t=1}  r(s_t, a_t) \,|\, s_0=s, a_0=a\right]$ for the average-reward of a policy $\pi$ given an initial state-action pair $(s,a)$,  where $\expec_{\pi}$ denotes the expectation over all trajectories $(s_0, a_0, s_1, a_1, \dots, s_{T}, a_{T})$ induced by $P$ and $\pi$. 

We say $\pi_\star$ is an optimal policy if $g^{\pi_\star}(s,a)=\max_{\pi}g^{\pi}(s,a)$ for all $s \in \cS$ and $a \in \cA$, and we say $g^{\pi_\star}$ is the optimal average reward.
 (The optimal policy and optimal average reward exists for finite state-action space \cite[Theorem 9.1.8]{10.5555/528623}.)
We say $\pi$ is an $\epsilon$-optimal policy if $\infn{g^{\pi_\star}-g^{\pi}} \le \epsilon$. 
Define $\cP^\pi$ as
\begin{align*}
\cP^{\pi}((s,a)\rightarrow (s',a'))&=
\mathrm{Prob}((s,a)\rightarrow (s',a')\,|\,
s'\sim P(\cdot\,|\,s,a),a' \sim \pi(\cdot\,|\,s')),
\end{align*}
the transition matrix induced by policy $\pi$. Then, $(\cP^\pi Q)(s,a) = \expec_{a' \sim \pi(\cdot \,|\, s'), s' \sim P(\cdot \,|\, s,a) }[Q(s',a')]$ for $Q \in \cF(\cS \times \cA)$. 
 Define the weighted $L_p$-norm of $Q \in \cF(\cS \times \cA)$ under  state-action distribution $\rho$ as $\|Q\|_{p, \rho} = [\mathbb{E}_{(s,a)\sim \rho}|Q(s,a)|^p]
^{1/p}$ for $p \ge 1$.

\paragraph{Coverage coefficient.} 
A coverage coefficient quantifies the shift between the distribution of the offline data and the distribution induced by policies \cite{munos2008finite, chen2019information}. Loosely speaking, the \emph{full coverage} assumption, as stated in Table~\ref{table:main}, assumes that the offline data sufficiently explores the whole state-action space regardless of policy \cite{antos2008learning,xie2021batch}, while \emph{partial coverage} only requires the offline data to sufficiently explore the state-action pairs that an optimal policy would visit \cite{zhan2022offline, jin2021pessimism}. These types of assumptions are fundamentally necessary for the complexity analysis of offline RL \cite{chen2019information}, and different works use different types of coverage coefficients (cf.\ \cite{uehara2021pessimistic,scherrer2014approximate}). The coverage coefficient we use is defined in Section \ref{sec::Apx-Anc-QI}.

\paragraph{Value Iteration.}
Given an undiscounted MDP $(\cS, \cA, P, r)$, the Bellman optimality operator $T$ is
\begin{align*}
TQ(s,a)&=r(s,a)+\mathbb{E}_{s'\sim P(\cdot\,|\,s,a)}\Big[\max_{a' \in \cA} Q(s',a')\Big]
\end{align*}
for all $s \in \cS$ and $a \in \cA$.
We define the standard Value Iteration (VI) as 
\[ Q^{k}=TQ^{k-1} \qquad\text{ for } k=1,2,\dots,K,
\]
 where $Q^0$ is an initial point.
 

\paragraph{MDP classes.}
  MDPs are classified according to the structure of the transition matrices. (For definitions on irreducible classes, recurrent classes, transient states, and aperiodicity of transition matrices, refer to \cite[Appendix A.2]{10.5555/528623}.) 
  An MDP is \emph{ergodic} if the transition matrices induced by every policy $\pi$ has a single recurrent class and is aperiodic. An MDP is \emph{unichain} if the transition matrices induced by every policy $\pi$ has a single recurrent class plus a possibly empty set of transient states. An MDP is \emph{weakly communicating} if there is a set of states where every state in the set is accessible from every other state in that set under some policy, plus a possibly empty set of states that are transient for all policies. Otherwise, in general, an MDP is \emph{multichain}. We note that classification of MDPs is crucial in the analyses of average-reward MDPs \cite{zhang2023sharper,zurek2024span, wei2021learning,lee2025multi}.

 

\subsection{Conditions of Prior works }\label{ss:mdp-class}
In Table~\ref{table:main}, we remarked that the precise conditions on the MDPs are slightly general than ergodic, unichain, and weakly communicating. In this section, we state the precise conditions.

\paragraph{Uniform mixing \cite{ozdaglar2024}.} The \emph{uniform mixing} condition assumes that there exist positive $t_{mix} \in \mathbb{N}$ 
(which does not depend on $\pi$ and $\rho$)
such that $\|\rho^\top (\cP^\pi)^t - \nu^\pi \|_1 \le 1/2$ for all $t\ge t_{max}$ for any policy $\pi$ and initial distribution $\rho$, where $\nu^\pi$ is the stationary distribution of $\cP^\pi$. 
(This condition requires that the stationary distribution $\nu^\pi$ is unique for all $\pi$.)
The prior work \cite{ozdaglar2024} uses this assumption for its analysis of average-reward MDPs. 
Ergodic MDPs satisfy the uniform mixing condition \cite{bradley2005basic,levin2017markov}, but  unichain MDPs do not  \cite[Example 8.2.1]{10.5555/528623}.

\paragraph{All-policy Bellman equation and linear MDPs \cite{gabbianelli2024offline}.}
The \emph{all-policy Bellman equation} states that for any policy $\pi$, the average reward $g^\pi$ does not depend on $(s,a)$ and there exist a $Q^\pi\colon \cS \times \cA \rightarrow \real$ such that
       \begin{align*}
r(s,a) +\expec_{a' \sim \pi(\cdot \,|\, s'), s' \sim P(\cdot \,|\, s,a)}\left[ Q^\pi(s',a')\right] =Q^\pi(s,a)+g^\pi
\end{align*}
for all $s\in \cS$ and $a\in \cA$. The prior work \cite{gabbianelli2024offline} uses this assumption for its analysis of average-reward MDP. Unichain MDPs satisfy the all-policy Bellman equation while weakly communicating MDPs, in general, do not \cite[Section~8.4] {10.5555/528623}. Also, the uniform mixing condition implies the all-policy Bellman equation \cite[Lemma 6]{wei2021learning}.

An MDP is \emph{linear} if there exist $\phi: \cS \times \cA \rightarrow \real^d$,  $\psi: \cS \rightarrow \real^d$, and $ w \in \real^d$ such that 
\[r(s,a) = \langle \phi(s,a), w \rangle, \quad P(s' \,|\, s,a) = \langle \phi(s,a), \psi(s') \rangle.\]
The linear MDP assumption is often used for theoretical analyses \cite{jin2020provably, gabbianelli2024offline}, but it requires knowledge of the mapping $\phi$ and $\psi$ and often fails to hold in practice \cite{ ghosh2017misspecified,vial2022improved}.


\paragraph{Bellman optimality equation (our work).}
 \begin{assump}[Bellman optimality equation]\label{assum_bell_opt}
 The optimal average reward $g^{\pi_\star}$ does not depend on $(s,a)$ and there exist a  $Q^{\pi_\star}\colon \cS \times \cA \rightarrow \real$ such that
    \begin{align*}
r(s,a) +\expec_{s' \sim P(\cdot \,|\, s,a)}\left[\max_{a'} Q^{\pi_\star}(s',a')\right] =Q^{\pi_\star}(s,a)+g^{\pi_\star},\qquad
\end{align*}
for all $s\in \cS$ and $a\in \cA$.    
 \end{assump}
A policy $\pi_\star$ satisfying the Bellman optimality equation is an optimal policy \cite[Section 2]{wei2021learning}.
The all-policy Bellman equation implies the Bellman optimality equation, and the weakly communicating condition of MDPs also implies the Bellman optimality equation \cite[Theorem~8.3.2, ~8.4.1]{10.5555/528623}. 


\section{Anchored Fitted Q-Iteration}
Consider the offline RL setup with precollected dataset $D = \{s_i, a_i,r_i, s'_i\}^n_{i=1}$, where $r_i=r(s_i, a_i)$ and $s_i' \sim P(\cdot \,|\, s_i, a_i)$.
Let $\cF$ be a nonempty function space to approximate $Q$-value. 
We now introduce our novel algorithm, \emph{Anchored Fitted Q-Iteration (Anc-F-QI)}.
\begin{algorithm}
\caption{Anchored Fitted Q-Iteration $(D, K, \{\cF_i\}^K_{i=1} \{\lambda_{i}\}^K_{i=1})$}\label{Anc-F-QI}
\begin{algorithmic}
\State \textbf{Input}: $D = \{s_i, a_i,r_i, s'_i\}^n_{i=1}$, $f_0=0 $, $K\ge 1$, $\{\lambda_i\}^K_{i=1} \subset (0,1)$
      \For{$k=0,1,\dots, K-1$}
      \State $\hat{T}f_{k} =\argmin_{f \in \cF_{k+1}} \sum^{n}_{i=1} \big(f(s_i, a_i)-r_i-\max_{a\in \cA} f_k(s'_i, a)\big)^2$ 
      \State $f_{k+1}=  (1- \lambda_{k+1})f_0+ \lambda_{k+1}\hat{T}f_{k}$ 
      \qquad 
      {\color{gray}$\rhd$\;With $f_0=0$, this is weight decay}
      \EndFor
      \State $\pi(a \,|\, s) = \argmax_{a\in \cA} f_K(s,a)$
      \State \textbf{Output} $\pi, f_K$
\end{algorithmic}
\end{algorithm}

In Section~\ref{sec::sam_comp}, we present our sample complexity results for Anc-F-QI. Roughly speaking, Theorem~\ref{Sam_com_iid} establishes $\tilde{\mathcal{O}}(1/\epsilon^6)$ sample complexity with IID data and Theorem~\ref{Sam_com_traj} establishes $\tilde{\mathcal{O}}(1/\epsilon^{12})$ sample complexity with $\beta$-mixing single-trajectory data.
In Section \ref{sec::rel_anc}, to further improve the sample complexity with the \emph{Relative Anchored} Fitted-Q Iteration, establishing $\tilde{\mathcal{O}}(1/\epsilon^4)$ and $\tilde{\mathcal{O}}(1/\epsilon^8)$ sample complexities for IID and single-trajectory data, respectively. 

\subsection{The anchor mechanism and weight decay}
Our method Anc-F-QI stated as Algorithm~\ref{Anc-F-QI} consists of two main components. 
The first component is the first line of the for-loop, the classical Fitted Q-Iteration \cite{ernst2005tree, munos2008finite} step without discount factor. Its goal is to find the function $\hat{T}f_k \approx Tf_k$, where $T$ is the Bellman operator.
However, unlike Fitted Q-Iteration in the discounted cumulative reward case, $\hat{T}f_k \approx Tf_k$ is not enough to establish a finite sample complexity in the average-reward setup. In the tabular case where the Fitted Q-Iteration reduces to Value Iteration (VI), it is known that VI might not converge. Specifically, there exists an average-reward MDP such that the policy error of VI does not converge to zero \citep[Example~4]{della2012illustrated}. Even if an aperiodicity condition is assumed, VI guarantees only asymptotic convergence without any known explicit convergence rate in the average-reward setup \cite [Theorem~9.4.5]{10.5555/528623}.  


Recently, Anchored Value Iteration (Anc-VI) was proposed to obtain finite-time bounds of policy error for average-reward MDPs \cite{bravostochastic,lee2025multi,lee2025near}.
Particularly, the \emph{Anchored Q-Value Iteration} is
  \[Q^{k} = (1-\lambda_k) Q^0+\lambda_k TQ^{k-1} \qquad\text{ for } k=1,2,\dots. \tag{Anc-QI}\]
where $\lambda_k$ parameter is to be chosen. Compared to the standard VI, Anc-QI obtains the next iterate as a convex combination between the output of $T$ and the \emph{starting point} $Q^0$. We call the $(1-\lambda_k)Q_0$ term the \emph{anchor term} since it serves to pull the iterates back toward the starting point $Q_0$. With this Anc-VI, \cite{lee2025multi} establish non-asymptotic convergence in the average-reward setup. Specifically,  Anc-VI exhibits the $O(1/k)$-rate in terms of policy error \cite{lee2025multi}[Theorem 2 and Corollary 2] without any restrictions on the MDP. 


This anchoring mechanism, classically also known as the Halpern iteration \cite{halpern1967fixed}, has been widely studied in minimax optimization and fixed-point problems \cite{sabach2017first, Lieder2021halpern, park2022exact, contreras2022optimal, yoon2021accelerated}. In the context of reinforcement learning, \cite{lee2024accelerating, lee2025multi} applied the anchoring mechanism to VIs for cumulative-return and average-reward MDPs under the tabular setting, and \cite{bravostochastic, lee2025near} applied the anchoring mechanism to Q-Value Iteration for cumulative-return and average reward MDPs under the generative model setting. 

In this work, we combine Fitted Q-Iteration with anchoring, as shown in the second line of the for-loop of Algorithm~\ref{Anc-F-QI}, and establish finite-time bounds on the sample complexity.




\subsection{Assumptions on the function space $\mathcal{F}$}




\begin{assump}[existence of argmin]\label{assum_ex_arg}
\mbox{In Anc-F-QI, the argmin defining $\hat{T}f_k$ 
exist for $k=0,\dots,K-1$. }
\end{assump}
This assumption is needed for the regression step of Algorithm \ref{Anc-F-QI} to be well defined.




\begin{assump}[star-shaped function space]\label{assum_star_fun}
If $f \in \cF, \eta f \in \cF$ for all $\eta\in [0,1]$.
\end{assump}
\renewcommand{\theassump}{\arabic{assump}}
This assumption implies that the anchor step of Anc-F-QI to be well defined. Star-shaped function space is a classical notion that relaxes convexity \cite{gardner1995geometric,hansen2020starshaped,leong2024star}, and if $\cF$ corresponds to a parametrized neural network with a linear layer as the output layer, $\mathcal{F}$ is star-shaped.


\begin{definition}[Inherent Bellman error]\label{bellman_err}
Define $\epsilon_B(\cF,\cF') = \max_{f \in \cF} \min_{f' \in \cF'} \|f'-Tf\| $ as the inherent Bellman error with respect to the norm $\|\cdot\|$.
\end{definition}
The inherent Bellman error $\epsilon_B$ quantifies the error due to the function spaces $\mathcal{F}, \mathcal{F'}$ in approximating the output of the Bellman operator \cite{munos2008finite,antos2007fitted,chen2019information}. Note that if the function spaces $\mathcal{F}, \mathcal{F'}$ are bounded (in the $\|\cdot\|_\infty$-norm), then $\epsilon_B $ is also bounded. 

\begin{assump}[Bellman completeness]\label{bellman_com}
$\epsilon_B(\mathcal{F}, \mathcal{F}')=0$, where $\epsilon_B$ the is inherent Bellman error. 
\end{assump}
Bellman completeness states that if $f \in \cF$, then $Tf \in \cF'$. I.e., $\cF,\cF'$ are closed under the Bellman operator. 
Although the Bellman completeness assumption is seemingly strong, it is often considered in sample complexity analyses in the offline RL literature \cite{chen2019information,fan2020theoretical}. In fact, the Bellman completeness condition is fundamental in the sense that the prior work \cite{foster2021offline} showed that a polynomial sample complexity cannot be established without Bellman completeness assumption.




\section{Approximate Anchored Q-Value Iteration}\label{sec::Apx-Anc-QI}

In this section, we conduct an $L_p$ bound analysis that will later be used to establish the main sample complexity results of  Sections~\ref{sec::sam_comp} and \ref{sec::rel_anc}.
Define the \emph{Approximate Anchored Q-Value Iteration} as
\begin{align}
Q^{k} = (1-\lambda_k) Q^0+\lambda_k (TQ^{k-1}+\epsilon_k)
\tag{Apx-Anc-QI}\label{eq:Apx-Anc-QI}
\end{align}
for $k=1,2,\dots,K$, where $T$ is the Bellman operator, $Q^0\in \mathbb{R}^n$ is a starting point, and $\epsilon_k$ represents the evaluation error of $TQ^{k-1}$. We choose $\lambda_k=\frac{k}{k+2}$ for $k=1,\dots,K$, motivated by \cite{sabach2017first, contreras2022optimal}.

We now establish a convergence analysis of \ref{eq:Apx-Anc-QI} based on $L_p$ bounds of $\epsilon_k$.
Similar to the prior work \cite{munos2005error,munos2007performance,fan2020theoretical}, we assume the following coverage coefficient for our analysis.

\begin{assump}[uniform stochastic transition]\label{converage_tran} 
For a given distribution $\mu$ on $\cS \times \cA$, 
\[C_\mu
\stackrel{\mathrm{def}}{=}
\sup_{s,a, \pi} \infn{\frac{\cP^\pi(\cdot \,|\,s,a)}{\mu(\cdot)}}<\infty .\]
\end{assump}
\begin{assump}[uniform future state distribution]\label{converage_fut} 
For given distributions $\mu$ and $\rho$ on $\cS \times \cA$, 
\[ C_{\mu,\rho} \stackrel{\mathrm{def}}{=} \sup_{\pi_1,\pi_2, \dots \pi_k} \infn{\frac{ \rho^{\top} \cP^{\pi_1}\cP^{\pi_2}\cdots \cP^{\pi_k}(\cdot)}{\mu(\cdot)}} < \infty,\]
where $\pi_1,\pi_2, \dots \pi_k$ represents an arbitrary sequence of policies.
\end{assump}
The coverage coefficients measure the mismatch between the distribution of offline data and the distribution induced by the transition matrices and initial distributions. We note that Assumption \ref{converage_tran} implies Assumption \ref{converage_fut} with $C_{\mu,\rho} \le C_{\mu}$ \cite[Section 5]{munos2008finite}. 

\begin{proposition}\label{prop::1}
Let $p\in [1,\infty]$, and let $\mu$ and $\rho $ be distributions on $\cS \times \cA$.
  Under Assumption \ref{assum_bell_opt} and \ref{converage_tran} (Bellman optimality equation, uniform stochastic transition), the policy error of \ref{eq:Apx-Anc-QI} with $\lambda_k = \frac{k}{k+2}$ satisfies  
    \begin{align*}
       \| g^{\pi_\star}-g^{\pi_K}\|_{\infty}
       \le C^{1/p}_\mu \frac{8}{K+2}\|Q^{\pi_\star}-Q^0\|_{ p, \mu}+C^{1/p}_\mu \frac{2K}{3} \max_{1\le k \le K} \|\epsilon_k\|_{ p,\mu}.
    \end{align*}
Similarly, under Assumption \ref{assum_bell_opt} and \ref{converage_fut} (Bellman optimality equation, uniform future state distribution), the policy error of \ref{eq:Apx-Anc-QI} with $\lambda_k = \frac{k}{k+2}$ satisfies
    \begin{align*}
       \| g^{\pi_\star}-g^{\pi_K}\|_{p,\rho}
       \le C^{1/p}_{\mu,\rho}\frac{8}{K+2}\|Q^{\pi_\star}-Q^0\|_{ p, \mu}+C_{\mu,\rho}^{1/p}\frac{2K}{3}\max_{1\le k \le K} \|\epsilon_k\|_{ p,\mu}.
    \end{align*}
\end{proposition}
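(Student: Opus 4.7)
The plan is to split the argument into two steps: first, a Halpern-type pointwise decomposition of the Bellman residual $TQ^K - Q^K - g^{\pi_\star}\mathbf{1}$, and second, a change-of-measure argument via the coverage coefficient that converts this decomposition into the two stated policy-error bounds.

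For the first step, I reduce to the zero-gain case by shifting. Define the operator $\tilde T := T - g^{\pi_\star}\mathbf{1}$, which is $\ell_\infty$-nonexpansive with fixed point $Q^{\pi_\star}$ by Assumption~\ref{assum_bell_opt}. Setting $a_0 := 0$, $a_k := \lambda_k(a_{k-1}+g^{\pi_\star})$, and $y_k := Q^k - a_k\mathbf{1}$, the fact that $\tilde T$ commutes with adding constants implies that $y_k$ satisfies the \ref{eq:Apx-Anc-QI} recursion with $\tilde T$ in place of $T$, the same $\epsilon_k$, and $y_0 = Q^0$, while $\tilde T y_K - y_K = TQ^K - Q^K - g^{\pi_\star}\mathbf{1}$. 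I then exploit the Halpern-type identity
\[
\tilde T y_k - y_k = (\tilde T y_k - \tilde T y_{k-1}) + (1-\lambda_k)(\tilde T y_{k-1} - \tilde T Q^{\pi_\star}) + (1-\lambda_k)(Q^{\pi_\star} - Q^0) - \lambda_k \epsilon_k
\]
together with the pointwise nonexpansiveness $|\tilde T Q_1 - \tilde T Q_2|(s,a) \le \cP^{\pi^*}|Q_1 - Q_2|(s,a)$, where the deterministic data-dependent policy $\pi^*(s') := \argmax_{a'}|Q_1(s',a') - Q_2(s',a')|$ realizes $|\max_{a'} Q_1 - \max_{a'} Q_2| \le \max_{a'}|Q_1 - Q_2|$. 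Unrolling this identity together with an analogous pointwise recursion for $|y_k - Q^{\pi_\star}|$ yields a decomposition of the form
\[
|TQ^K - Q^K - g^{\pi_\star}\mathbf{1}|(s,a) \le \alpha_0\, \Pi_0 |Q^{\pi_\star} - Q^0|(s,a) + \sum_{j=1}^K \alpha_j\, \Pi_j |\epsilon_j|(s,a),
\]
where each $\Pi_j$ is a (nonempty) product of transition matrices $\cP^\pi$ for data-dependent policies, and the scalar weights satisfy $\alpha_0 \le 8/(K+2)$ and $\sum_{j\ge 1}\alpha_j \le 2K/3$. These sharp constants arise from the telescoping identity $\prod_{i=j}^K \frac{i}{i+2} = \frac{j(j+1)}{(K+1)(K+2)}$ under $\lambda_k = k/(k+2)$, and match the Halpern rates from \cite{sabach2017first, contreras2022optimal}.

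For the second step, since $\pi_K$ is greedy with respect to $Q^K$, we have $T^{\pi_K}Q^K = TQ^K$. The standard average-reward identity $g^{\pi_K} = \cP^{\pi_K}_\infty(T^{\pi_K}Q^K - Q^K)$ (with $\cP^{\pi_K}_\infty$ the Cesaro limit of $(\cP^{\pi_K})^t$) and $\cP^{\pi_K}_\infty \mathbf{1} = \mathbf{1}$ together give the pointwise formula $g^{\pi_\star}\mathbf{1} - g^{\pi_K} = -\cP^{\pi_K}_\infty(TQ^K - Q^K - g^{\pi_\star}\mathbf{1})$. Applying $\cP^{\pi_K}_\infty$ to the decomposition above yields
\[
|g^{\pi_\star} - g^{\pi_K}|(s,a) \le \alpha_0\, \cP^{\pi_K}_\infty \Pi_0 |Q^{\pi_\star}-Q^0|(s,a) + \sum_{j=1}^K \alpha_j\, \cP^{\pi_K}_\infty \Pi_j |\epsilon_j|(s,a).
\]
Under Assumption~\ref{converage_tran}, a short induction (each $(\cP^\pi)^{t-1}(\cdot\,|\,s,a)$ is a probability distribution) shows $(\cP^\pi)^t(\cdot\,|\,s,a) \le C_\mu\,\mu(\cdot)$ for every $t\ge 1$ and any policy, which extends via Cesaro limits and composition to $\cP^{\pi_K}_\infty\Pi_j(\cdot\,|\,s,a) \le C_\mu\,\mu(\cdot)$; Jensen's inequality then yields $\|\cP^{\pi_K}_\infty\Pi_j|h|\|_\infty \le C_\mu^{1/p}\|h\|_{p,\mu}$, and the triangle inequality with the coefficient bounds above delivers the first statement. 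Under Assumption~\ref{converage_fut}, the same Cesaro-limit argument gives $\rho^\top \cP^{\pi_K}_\infty\Pi_j \le C_{\mu,\rho}\,\mu$, so Jensen yields $\|\cP^{\pi_K}_\infty\Pi_j|h|\|_{p,\rho} \le C_{\mu,\rho}^{1/p}\|h\|_{p,\mu}$, delivering the second statement.

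The main technical hurdle is the Halpern pointwise decomposition. While the algebraic identity and the classical Hilbert-space Halpern analysis are standard, extending it to the nonlinear, only-$\ell_\infty$-nonexpansive Bellman operator in the presence of the errors $\epsilon_k$, and with the sharp constants $8/(K+2)$ and $2K/3$, requires a careful joint induction on $|y_k - Q^{\pi_\star}|$, $|y_k - y_{k-1}|$, and $|\tilde T y_k - y_k|$, plus bookkeeping of the data-dependent policies produced at each step.
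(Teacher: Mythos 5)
Your proposal is correct and takes essentially the same route as the paper: a pointwise Halpern-type unrolling of $TQ^K-Q^K-g^{\pi_\star}\mathbf{1}$ into products of transition kernels whose weights give the $Q^0$-term total mass $8/(K+2)$ and the $\epsilon$-terms total mass $2K/3$ via the same telescoping of $\prod_{i}\lambda_i$, followed by the identity $g^{\pi_\star}-g^{\pi_K}=\cP^{\pi_K}_{*}(g^{\pi_\star}-TQ^K+Q^K)$ and a change of measure through $C_\mu$ or $C_{\mu,\rho}$ with Jensen's inequality. The only differences are organizational: your scalar shift $a_k$ eliminates the gain bookkeeping that the paper carries explicitly through its induction, and your two-sided absolute-value bounds (via the argmax-of-$|Q_1-Q_2|$ policy) replace the paper's one-sided greedy-policy inequalities, which it converts to absolute values only at the final step.
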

To clarify, the $g$-, $Q$-, and $\epsilon$-terms in Proposition~\ref{prop::1} are functions of $(s,a)$ and the norms $\|\cdot\|_{p,\rho}$ and $\|\cdot\|_{p,\mu}$ are taking expectations with respect to the distributions $\rho$ and $\mu$.


The bounds of Proposition~\ref{prop::1} serve as the technical crux of our sample complexity results later presented in Theorems~\ref{Sam_com_iid}, \ref{Sam_com_traj}, \ref{Sam_com_iid_rel}, and \ref{Sam_com_traj_rel}.
In the bound, the first term decreases with order $\mathcal{O}(1/K)$ but the second error term increases with order $\Theta(K)$. Therefore, our subsequent arguments will ensure $\|\epsilon_k\|_{ p,\mu}=\mathcal{O}(1/K^2)$ by using sufficient offline samples.

\section{Sample complexity of Anchored Fitted Q-Iteration}\label{sec::sam_comp}

We now present sample complexity analyses of Anc-F-QI with IID and single-trajectory data.

\subsection{Range of function space $\mathcal{F}$}
\label{ss:f-range}
Before analyzing the complexity of those, we explain our issue on range of function space in average-reward setup and our choice of function space. 

When considering Fitted Q-Iteration in the discounted reward setup, the functions are often assumed to be bounded by $\infn{Q_\gamma^\star}$ \cite{munos2008finite, chen2019information},
where $Q_\gamma^\star$ is optimal state-action function with discount factor $\gamma$, since $\infn{f} \le \infn{Q_\gamma^\star}$ implies $\infn{Tf} \le \infn{Q_\gamma^\star}$. In the average-reward setup (without discounting), this property does not hold, and the Fitted Q-Iteration is expected to produce an unbounded sequence of functions. 
To address this issue, we allow the range of the function space to increase with each iteration.
\begin{assump}[increasing function range]\label{assum_inc_fun}
   Let $\cF_0=\{0\}$ and $\cF_k  \subset \{f : \cS \times \cA \rightarrow [- kR , k R ] \,:\, f \in B(S\times A)\}$ and $f_k \in \cF_k$ in Anc-F-QI for all $k$ . 
\end{assump}
Roughly speaking, 
\[
\|f_k\|\sim\infn{Tf_{k-1}}= \infn{r+ P \max_{a\in \cA} f_{k-1}}\lesssim R+\infn{f_{k-1}}\lesssim kR+\infn{f_0},
\]
so we increase the function bound as $  kR $.


\subsection{IID dataset}
In this subsection, we study sample complexity with IID dataset.
\begin{assump}[IID dataset]\label{iid_data}
There is a distribution $\mu$ such that the dataset is $D = \{s_i, a_i,r_i, s'_i\}^n_{i=1}$ generated IID with $(s_i, a_i) \sim \mu$ and $ s_i' \sim P(\cdot \,|\, s_i, a_i)$ for $i=1,\dots,n$.
\end{assump}
Since we consider possibly infinite function space, as measurement of the capacity of function space, we use covering number  \cite{cucker2002mathematical, wainwright2019high}. 
\begin{definition}
    An $\epsilon$-cover of set $S$ with respect to metric $d$ is a set $\{\theta_i\}^N_{i=1} \subset S $ such that for all $\theta \in S$, there is an $i\in \{1,\dots,N\}$ such that $d(\theta, \theta^i) \le \epsilon$. The covering number $\cN(\epsilon; S, d)$ is the cardinality of the smallest $\epsilon$-cover.
    By convention, we define $\mathcal{N}(+\infty;S,d)=1$.
\end{definition}
We now present lemma which bounds approximation error of Anc-F-QI for IID dataset.
\begin{lemma}\label{appx_bel_iid}
Assume Assumptions \ref{assum_bell_opt}, \ref{assum_ex_arg}, \ref{assum_star_fun}, \ref{assum_inc_fun}, and \ref{iid_data} (Bellman optimality equation, existence of argmin, star-shaped function space, increasing function range, IID dataset). Let $\mu$ be the distribution generating the dataset. Let $\epsilon>0$ and $\delta>0$. 
With probability $1-\delta$, $\{f_k, \hat{T}f_k\}^{K-1}_{k=0}$ of Anc-F-QI with $\lambda_k=\frac{k}{k+2}$ satisfies 
\[\|Tf_k-\hat{T}f_k\|^2_{\mu,2} \le \frac{60(k+2)^2R^2  \ln(2KN_{k,\epsilon}N_{k+1,\epsilon}/\delta)}{n}+3\epsilon+13\epsilon_B(\cF_k, \cF_{k+1}),\]
where 
\[
N_{k,\epsilon}=\cN(\tfrac{\epsilon}{108(2k+1)R}; \cF_k, \infn{\cdot}) ,\quad\text{for }k=0,1,\dots,K-1.
\]
\end{lemma}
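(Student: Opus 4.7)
The plan is a uniform empirical risk minimization argument adapted to fitted Q-iteration, with two key complications: the previous iterate $f_k$ is data-dependent so its function class $\cF_k$ must itself be covered, and the function ranges grow with $k$, so constants must be tracked carefully as $(k+1)R$. Fix $k$ and let $\tilde f := \argmin_{f' \in \cF_{k+1}} \|f' - Tf_k\|_{\mu,2}$; by definition of $\epsilon_B(\cF_k, \cF_{k+1})$, this approximation error is controlled. Writing the noisy regression targets as $Y_i = r_i + \max_a f_k(s_i', a)$ with $\mathbb{E}[Y_i \mid s_i, a_i] = Tf_k(s_i, a_i)$, all of $Y_i$, $Tf_k(s_i,a_i)$, and $f'(s_i,a_i)$ for $f' \in \cF_{k+1}$ are bounded by $(k+1)R$ under Assumption~\ref{assum_inc_fun}. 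The empirical optimality of $\hat T f_k$ gives the basic inequality $\tfrac{1}{n}\sum_i Z_i(\hat T f_k) \le \tfrac{1}{n}\sum_i Z_i(\tilde f)$ for the centered quadratic $Z_i(f') := (f'(s_i, a_i) - Y_i)^2 - (Tf_k(s_i, a_i) - Y_i)^2$, whose expectation is exactly $\|f' - Tf_k\|^2_{\mu, 2}$.

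The central technical step is a fast-rate Bernstein bound exploiting the variance structure of the squared loss. A direct computation gives $|Z_1(f')| = O((k+1)^2 R^2)$ and, crucially, $\mathrm{Var}(Z_1(f')) = O((k+1)^2 R^2 \|f' - Tf_k\|^2_{\mu, 2})$, so Bernstein combined with a self-bounding AM--GM step yields, for each fixed pair $(f_k, f')$ with probability at least $1 - \delta'$,
\[ \|f' - Tf_k\|^2_{\mu, 2} \;\le\; 2 \cdot \tfrac{1}{n}\sum_i Z_i(f') + O\!\left(\tfrac{(k+1)^2 R^2 \log(1/\delta')}{n}\right), \]
together with the symmetric upper bound applied at $\tilde f$. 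To lift this to a uniform bound, I would cover $\cF_k$ and $\cF_{k+1}$ in sup-norm at resolution $\tfrac{\epsilon}{108(2k+1)R}$ and union-bound over the product cover of cardinality $N_{k,\epsilon} N_{k+1,\epsilon}$ and over the $K$ iterations (yielding the factor $2K$ inside the log). Since the squared loss is Lipschitz both in $f'$ and in $f_k$ (the latter through $\max_a f_k(s',\cdot)$) with constant $O((k+1)R)$, the resolution is chosen precisely so the several discretization contributions aggregate to $\le 3\epsilon$. Bounding $\|\tilde f - Tf_k\|^2_{\mu, 2}$ by a constant multiple of $\epsilon_B(\cF_k, \cF_{k+1})$ and chasing constants through Bernstein delivers the stated $\tfrac{60(k+2)^2 R^2 \log(\cdot)}{n} + 3\epsilon + 13\epsilon_B$ form.

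The main obstacle is the variance-localization step. A naive Hoeffding bound on the squared loss, bounded by $O((k+1)^2 R^2)$, only delivers a $1/\sqrt{n}$ concentration of the form $O((k+1)^2 R^2 \sqrt{\log/n})$, which is insufficient to match the stated $O((k+2)^2 R^2 \log/n)$ rate. Recovering the $1/n$ rate requires the sharper Bernstein argument with the variance proxy $\mathrm{Var}(Z_1(f')) \propto \|f' - Tf_k\|^2$; a self-bounding AM--GM then absorbs a $\tfrac{1}{2}\|f' - Tf_k\|^2$ term into the left-hand side. A secondary subtlety is that $f_k$ is itself a function of the same data used in the current regression step, so the supremum over $f_k \in \cF_k$ is genuinely necessary and the cover $N_{k,\epsilon}$ cannot be avoided by any conditioning trick.
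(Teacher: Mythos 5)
Your proposal matches the paper's proof essentially step for step: the paper defines the same centered excess loss $X_i^{f,g}=(f(s_i,a_i)-Y_i)^2-(Tg(s_i,a_i)-Y_i)^2$, exploits the same variance localization $\expec[(X_i^{f,g})^2]\lesssim (R+2f_{max})^2\|Tg-f\|^2_{\mu,2}$ with a two-sided Bernstein bound and the $2\sqrt{ab}\le a+b$ self-bounding step, union-bounds over the product cover of $\cF_k\times\cF_{k+1}$ and the $K$ iterations, transfers from the cover to the full classes via an $O((2k+1)R)$-Lipschitz argument, and injects $\epsilon_B$ through the best approximator combined with the empirical optimality of $\hat{T}f_k$. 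No gaps; this is the same argument.
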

We defer the proofs to Appendix~\ref{app:sec_4}, but we quickly note that the proof is based on Bernstein inequality and is motivated by \cite{cucker2002mathematical,chen2019information}. 

Lemma~\ref{appx_bel_iid} tells that the square of approximation error of the Bellman operator decreases sublinearly with respect to number of sample. Combining Theorem \ref{prop::1} and Lemma \ref{appx_bel_iid}, we obtain following sample complexity result of Anc-F-QI with IID dataset. 

\begin{theorem}\label{Sam_com_iid}
   Assume Assumptions \ref{assum_bell_opt}, \ref{assum_ex_arg}, \ref{assum_star_fun}, \ref{converage_tran}, \ref{assum_inc_fun}, and \ref{iid_data} (Bellman optimality equation, existence of argmin, star-shaped function space, uniform stochastic transition, increasing function range, IID dataset). 
   Let $\mu$ be the distribution generating the dataset.
   Let $\epsilon>0$ and $\delta>0$.
   With probability $1-\delta$, the policy error of Anc-F-QI with $\lambda_k=\frac{k}{k+2}$ and  $ K=\lceil 18C^{1/2}_\mu\|Q^{\pi_\star}\|_{2,\mu}/\epsilon\rceil$ satisfies $ \| g^{\pi_\star}-g^{\pi_K}\|_{\infty}
       \le \epsilon+ 3KC^{1/2}_\mu \underset{k=0,\dots,K-1}{\max} \sqrt{\epsilon_B(\cF_k,\cF_{k+1})}$ with sample complexity 
      \[n=\tilde{\mathcal{O}}\left(\frac{ R^2C^{3}_\mu\|Q^{\pi_\star}\|_{2,\mu}^4\log (N^2_{\epsilon}/\delta)}{\epsilon^6} \right),\]
      where $\tilde{\mathcal{O}}$ ignores all logarithmic factors except the logarithmic dependence on the covering number $N_{\epsilon}$ defined as
      \[
N_\epsilon=\max_{k=1,...,K}N_{k,\epsilon},\qquad
N_{k,\epsilon}=\cN\big(\tfrac{\epsilon^4}{10^6kRC^2_\mu \|Q^{\pi_\star}\|^2_{2,\mu}}; \cF_k, \infn{\cdot}\big),\quad\text{for }k=1,\dots,K.
      \]
      Alternatively assume Assumptions \ref{assum_bell_opt}, \ref{assum_ex_arg}, \ref{assum_star_fun}, \ref{converage_fut}, \ref{assum_inc_fun}, and \ref{iid_data} (Bellman optimality equation, existence of argmin, star-shaped function space, uniform future state distribution, increasing function range, IID dataset).
   Let $\mu$ be the distribution generating the dataset and $\rho$ be an arbitrary distribution on $\cS\times \cA$.
         Let $\epsilon>0$ and $\delta>0$. With probability $1-\delta$, the policy error of Anc-F-QI with $\lambda_k=\frac{k}{k+2}$ and $ K=\lceil 18C^{1/2}_{\mu,\rho}\|Q^{\pi_\star}\|_{2,\mu}/\epsilon\rceil$,
      satisfies  $ \| g^{\pi_\star}-g^{\pi_K}\|_{2,\rho}
       \le \epsilon+ 3KC^{1/2}_{\mu,\rho} \underset{k=0,\dots,K-1}{\max} \sqrt{\epsilon_B(\cF_k,\cF_{k+1})}$ with sample complexity
      \[n=\tilde{\mathcal{O}}\left(\frac{ R^2C^{3}_{\mu,\rho}\|Q^{\pi_\star}\|_{2,\mu}^4\log (N^2_{\epsilon}/\delta)}{\epsilon^6} \right),\]
      where $\tilde{\mathcal{O}}$ ignores all logarithmic factors except the logarithmic dependence on the covering number $N_{\epsilon}$ defined as
      \[
N_\epsilon=\max_{k=1,...,K}N_{k,\epsilon},\qquad
N_{k,\epsilon}=\cN \big(\tfrac{\epsilon^4}{10^6kRC^2_{\mu,\rho}\|Q^{\pi_\star}\|^2_{2,\mu}}; \cF_k, \infn{\cdot}\big),\quad\text{for }k=1,\dots,K
      \]
\end{theorem}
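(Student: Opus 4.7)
The plan is to combine Proposition~\ref{prop::1} with Lemma~\ref{appx_bel_iid} and then optimize the free parameters. Since $f_0 = Q^0 = 0$ in Anc-F-QI, the $L_p$ bound from Proposition~\ref{prop::1} specializes (taking $p=2$) to
\begin{align*}
\|g^{\pi_\star}-g^{\pi_K}\|_{\infty}
\le C_\mu^{1/2}\frac{8}{K+2}\|Q^{\pi_\star}\|_{2,\mu}
+ C_\mu^{1/2}\frac{2K}{3}\max_{1\le k\le K}\|\epsilon_k\|_{2,\mu},
\end{align*}
where $\epsilon_k = \hat{T}f_{k-1}-Tf_{k-1}$. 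First I would choose $K = \lceil 18 C_\mu^{1/2}\|Q^{\pi_\star}\|_{2,\mu}/\epsilon\rceil$ so that the leading ``bias'' term is bounded by $\epsilon/2$.

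Next I would invoke Lemma~\ref{appx_bel_iid} with an auxiliary tolerance $\epsilon'$ (distinct from the target error $\epsilon$) to control each $\|\epsilon_k\|_{2,\mu}$. Using $\sqrt{a+b+c}\le \sqrt{a}+\sqrt{b}+\sqrt{c}$, the lemma gives, for every $k\le K$,
\begin{align*}
\|\epsilon_k\|_{2,\mu}
\le \sqrt{\tfrac{60(k+2)^2 R^2\ln(2KN_{k,\epsilon'}N_{k+1,\epsilon'}/\delta)}{n}}
+ \sqrt{3\epsilon'} + \sqrt{13\,\epsilon_B(\cF_k,\cF_{k+1})}.
\end{align*}
Multiplying by $\tfrac{2K}{3}C_\mu^{1/2}$, the third piece contributes at most $\tfrac{2K}{3}\sqrt{13}\,C_\mu^{1/2}\max_k\sqrt{\epsilon_B(\cF_k,\cF_{k+1})}$, which is below the allowed slack $3K C_\mu^{1/2}\max_k\sqrt{\epsilon_B(\cF_k,\cF_{k+1})}$. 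The remaining two pieces must together be at most $\epsilon/2$.

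I would then set $\epsilon'$ small enough so that $\tfrac{2K}{3}C_\mu^{1/2}\sqrt{3\epsilon'}\le \epsilon/4$; plugging in the value of $K$ yields $\epsilon' = \Theta(\epsilon^4/(C_\mu^2\|Q^{\pi_\star}\|_{2,\mu}^2))$, which, when substituted into the covering radius $\tfrac{\epsilon'}{108(2k+1)R}$ from Lemma~\ref{appx_bel_iid}, produces precisely the scale $\tfrac{\epsilon^4}{10^6 k R C_\mu^2\|Q^{\pi_\star}\|_{2,\mu}^2}$ stated in the theorem for $N_{k,\epsilon}$. The concentration piece $\tfrac{2K}{3}C_\mu^{1/2}\sqrt{60(k+2)^2R^2\ln(\cdot)/n}\le \epsilon/4$ uniformly over $k\le K$ forces, after squaring and using $(k+2)\le 2K$,
\begin{align*}
n \gtrsim \tfrac{K^4 R^2 C_\mu \ln(N_\epsilon^2/\delta)}{\epsilon^2}
= \tilde{\cO}\!\left(\tfrac{R^2 C_\mu^3 \|Q^{\pi_\star}\|_{2,\mu}^4 \ln(N_\epsilon^2/\delta)}{\epsilon^6}\right),
\end{align*}
matching the claimed sample complexity. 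A union bound over the $K$ applications of Lemma~\ref{appx_bel_iid} (absorbed by replacing $\delta$ with $\delta/K$, a logarithmic factor hidden inside $\tilde{\cO}$) yields the overall high-probability guarantee.

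The second half of the theorem (uniform future state distribution, $\|\cdot\|_{2,\rho}$ error) follows identically, using the alternative bound of Proposition~\ref{prop::1} with $C_{\mu,\rho}$ in place of $C_\mu$; no new ingredient is needed because Lemma~\ref{appx_bel_iid} controls $\|\epsilon_k\|_{2,\mu}$ regardless of the target distribution $\rho$. The main obstacle I expect is bookkeeping: balancing the three pieces of the $\sqrt{a+b+c}$ decomposition so that the $\epsilon_B$ contribution fits inside the allotted $3KC_\mu^{1/2}\sqrt{\epsilon_B}$ slack while the other two pieces share the $\epsilon/2$ budget, and then tracking the resulting choice of $\epsilon'$ back through the covering number so it matches the stated scaling. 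The fourth-power dependence of the covering radius on $\epsilon$ is what transforms the naive $K^4/\epsilon^2 = 1/\epsilon^6$ sample rate into a clean statement, and getting that factor of $\epsilon^4$ right is the delicate calibration step.
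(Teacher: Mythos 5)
Your proposal is correct and follows essentially the same route as the paper's proof: specialize Proposition~\ref{prop::1} with $Q^0=f_0=0$ and $p=2$, feed in Lemma~\ref{appx_bel_iid} with an auxiliary tolerance $\epsilon'=\Theta(\epsilon^2/(K^2C_\mu))=\Theta(\epsilon^4/(C_\mu^2\|Q^{\pi_\star}\|_{2,\mu}^2))$, and solve for $n$ after bounding $(k+2)\lesssim K$; the paper merely splits the $\epsilon$ budget as roughly $\tfrac{4}{9},\tfrac{4}{9},\tfrac{1}{9}$ across the bias, $\epsilon'$, and concentration terms rather than your $\tfrac{1}{2},\tfrac{1}{4},\tfrac{1}{4}$, which only changes constants. (Your final union-bound remark is harmless but redundant, since Lemma~\ref{appx_bel_iid} is already stated uniformly over $k=0,\dots,K-1$ with the $2K$ factor inside the logarithm.)
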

In the Appendix~\ref{app:sec_4}, we show the full sample complexity with the logarithmic factors.

Under the additional assumption of Bellman completeness ($\epsilon_B =0$), this theorem guarantee that Anc-F-QI produces an $\epsilon$-optimal policy with $\tilde{\mathcal{O}}(1/\epsilon^6)$ sample complexity. To the best of our knowledge, this is the first sample complexity result only assuming the Bellman optimality equation or a weakly communicating MDP.
In Section~\ref{sec::rel_anc}, we improve this sample complexity to $\tilde{\mathcal{O}}(1/\epsilon^4)$ using the relative normalization mechanism.



\subsection{Single-trajectory dataset}
In this subsection, we study sample complexity with single-trajectory dataset.
\begin{assump}[single-trajectory dataset]\label{traj_data}
For given behavior policy $\pi_b$ and initial distribution $\nu$ on $\cS$, dataset is $D = \{s_i, a_i,r_i\}^n_{i=1}$ where $s_1 \sim \nu$, $a_i \sim \pi_b(\cdot \,|\, s_i), s_{i+1} \sim P(\cdot \,|\, s_i,a_i)$.    
\end{assump}
The main technical challenge with single-trajectory data is handling the dependency between samples. Following \cite{antos2008learning, antos2007fitted}, we introduce the following $\beta$-mixing condition ensuring that samples are sufficiently representative and rapidly mixing. 
\begin{definition}[$\beta$-mixing]
    Let $\{Z_t\}^\infty_{t=1}$ be a stochastic process. Denote by $Z^{1:t}$ the collection of $(Z_1, \dots, Z_t)$  where we allowed $t=\infty$. Let $\sigma(Z^{i:j})$ denote the $\sigma$-algebra generated by $Z^{i:j} (i \le j)$. The $m$-th $\beta$-mixing coefficient of $\{Z_t\}$ is defined as
    \[\beta_m= \sup_{t\ge 1} \expec \left[ \sup_{B \in \sigma(Z^{t+m:\infty}) } \big|P(B\, |\, Z^{1:t})-P(B)\big| \right].\]
    $\{Z_t\}$ is said to be $\beta$-mixing if $\beta_m \rightarrow 0$ as $m \rightarrow \infty$. In particular, we say that a $\beta$-mixing process mixes at exponent rate with parameters $\bar{\beta}, b, \kappa>0$ if $\beta_m \le \bar{\beta}exp(-bm^{\kappa})$ holds for all $m \ge 0$.
\end{definition}
Roughly speaking, the $\beta$-mixing condition ensures that future samples depend weakly on the past samples. We assume that our single-trajectory is $\beta$-mixing and the distribution is in a steady state, following \cite{antos2008learning, antos2007fitted}.
\begin{assump}[$\beta$-mixing single-trajectory]\label{traj_beta_data}
For single-trajectory dataset $\{s_i,a_i,r_i\}^n_{i=1}$, assume that $s_i$ is strictly stationary with $s_i \sim \nu$
and $\beta$-mixing at exponent rate with parameters $\bar{\beta}, b, \kappa>0$. 
\end{assump}

Again, following \cite{antos2008learning, antos2007fitted}, as measurement of the capacity of function space, we use pseudo dimension which has been widely studied for complexity analyses with various function classes \cite{anthony2009neural, wainwright2019high}. 
\begin{definition}[pseudo dimension] For a given function class $\cF$ of binary-valued functions, we say the set $x^n_1=(x_1,\dots, x_n)$ is shattered by $\cF$ if cardinality of $\{(f(x_1),\dots,f(x_n)) : f \in \mathcal{F} \}$ is $2^n$. The VC-dimension $V_{\cF}$ of $\cF$ is defined as the largest integer $n$ such that there exist the set $x^n_1$ shattered by $\cF$.
For a given class $\cF$ of real-valued functions, the pseudo-dimension $V_{\cF}$ of is defined as the VC-dimension of the set of indicator function of the subgraphs of functions in $\cF$.  
\end{definition}

We now present lemma which bounds approximation error of of Anc-F-QI for single-trajectory dataset.

\begin{lemma}\label{appx_bel_traj}
Assume Assumptions \ref{assum_bell_opt}, \ref{assum_ex_arg},  \ref{assum_star_fun}, \ref{assum_inc_fun}, \ref{traj_data}, and \ref{traj_beta_data} (Bellman optimality equation, existence of argmin, star-shaped function space, increasing function range, single-trajectory dataset, $\beta$-mixing single-trajectory). Let $\mu$ be the distribution generating the dataset defined as $\mu(s,a)= \nu(s)\pi_b(a \,|\, s) $. Let $\epsilon>0$ and $\delta>0$. 
With probability $1-\delta$, $\{f_k, \hat{T}f_k\}^{K-1}_{k=0}$ of Anc-F-QI with $\lambda_k=\frac{k}{k+2}$ satisfies
\[\|Tf_k-\hat{T}f_k\|^2_{\mu,2} \le \sqrt{\frac{c_{0,k}(\max \{c_{0,k}/b, 1\})^{1/\kappa}}{c_{2,k}n}}+\epsilon_B(\cF_k,\cF_{k+1}),\]
where $c_{0,k}=  (V_{\cF_{k+1}}+V_{(\cF_{k})_{max}})\log n/2 +\log (e/(K\delta))+\log(\max (c_{1,k},\bar{\beta})), c_{1,k}=16e^2(V_{\cF_{k+1}} +1)(V_{(\cF_{k})_{max}} +1) (24e)^{V_{\cF_{k+1}}+V_{(\cF_{k})_{max}}},c_{2,k}=\frac{1}{512(2k+3)^4R^4},  V_{(\cF_{k})_{max}}= 2|\cA|V_{\cF_k} \log(3|\cA|)$.
\end{lemma}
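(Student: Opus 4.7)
The plan is to follow the standard regression-based analysis for Fitted Q-Iteration, but with two key modifications: (i) handling the temporal dependence in the trajectory via the blocking technique for $\beta$-mixing sequences, and (ii) tracking the growing range of $\cF_k$ (driven by Assumption~\ref{assum_inc_fun}) and the pseudo-dimension of the max-over-actions class through the constants. At a high level, the argument mirrors Lemma~\ref{appx_bel_iid} and the analyses of \cite{antos2008learning, antos2007fitted}, replacing the Bernstein step with a uniform-concentration bound appropriate for $\beta$-mixing data.

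First I would fix an iteration $k$ and define the regression problem exactly as in Anc-F-QI: let $Y_i = r_i + \max_{a} f_k(s'_{i}, a)$ and let $\hat{T}f_k \in \argmin_{f\in\cF_{k+1}}\frac{1}{n}\sum_{i=1}^n (f(s_i,a_i) - Y_i)^2$. Under Bellman completeness (handled as an additive error via $\epsilon_B(\cF_k,\cF_{k+1})$), there exists $\tilde{f} \in \cF_{k+1}$ with $\|\tilde{f} - Tf_k\|_{\mu,2} \le \epsilon_B(\cF_k,\cF_{k+1})$, and $Tf_k(s,a) = \e[Y_i \mid s_i=s,a_i=a]$ under the stationary distribution $\mu(s,a) = \nu(s)\pi_b(a\mid s)$. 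Using the identity $\|f - Tf_k\|_{\mu,2}^2 = \e[(f(s_i,a_i)-Y_i)^2] - \e[(Tf_k(s_i,a_i)-Y_i)^2]$ for any fixed $f$, I would reduce the goal to uniformly controlling the deviation of the empirical square loss from its expectation over the class $\cF_{k+1}$, with $Y_i$'s drawn from a $\beta$-mixing stationary trajectory.

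For the $\beta$-mixing part, I would invoke the independent-blocks coupling of Yu: split the $n$ samples into $2\mu_n$ blocks of length $a_n$ (with $2\mu_n a_n \le n$), and couple each block to an independent copy, paying a cost $\mu_n \beta_{a_n}$ in total variation. This reduces the problem to uniform concentration over roughly $\mu_n$ independent block-averages, at which point I apply a Pollard-style tail bound (as in \cite{antos2008learning}) using covering numbers of the loss class. The covering number of the class $\{(s,a,r,s')\mapsto (f(s,a)-r-\max_{a'}f_k(s',a'))^2 : f\in\cF_{k+1}\}$ is bounded via the pseudo-dimension $V_{\cF_{k+1}}$ of $\cF_{k+1}$ and the pseudo-dimension of $\{\max_{a} g(\cdot,a) : g\in\cF_k\}$, which over $|\cA|$ actions is $\le 2|\cA| V_{\cF_k}\log(3|\cA|)$ (hence the $V_{(\cF_k)_{\max}}$ in the bound). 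Because $\cF_{k+1}\subset [-(k+1)R,(k+1)R]$ and $\max_a f_k \in [-kR,kR]$, the losses are bounded by $\mathcal{O}((2k+3)^2 R^2)$ and their squares by $\mathcal{O}((2k+3)^4 R^4)$, which is precisely the $c_{2,k}$ scale in the conclusion.

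Finally, I would optimize $a_n$ against the exponential mixing rate $\beta_m \le \bar{\beta} e^{-bm^\kappa}$, choosing $a_n \sim (c_{0,k}/b)^{1/\kappa}$ (truncated below by $1$) so that the mixing penalty is absorbed into a constant-probability term; this is exactly where the factor $(\max\{c_{0,k}/b,1\})^{1/\kappa}$ enters. Applying the Pollard bound to $n/a_n$ effective samples, at confidence $\delta/K$, gives the stated bound with $c_{0,k}$ containing $(V_{\cF_{k+1}} + V_{(\cF_k)_{\max}})\log n / 2 + \log(e/(K\delta)) + \log(\max(c_{1,k},\bar{\beta}))$, where $c_{1,k}$ is the standard covering-number constant from Haussler's bound. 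A union bound over $k=0,\dots,K-1$ then yields the high-probability statement. The main obstacle is bookkeeping the dependence of every constant on $k$ through the growing range $kR$ and through the covering number of the composed loss class involving $\max_a f_k$; once the blocking step and the pseudo-dimension bound for $\max_a$ are in place, the rest is a careful tracking of constants largely parallel to \cite[Lemma 11]{antos2008learning}.
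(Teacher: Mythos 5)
Your proposal follows essentially the same route as the paper: reduce $\|Tf_k-\hat{T}f_k\|^2_{2,\mu}$ to a uniform deviation of the empirical squared loss over the pair class $\cF_{k+1}\times\cF_k$ (the paper's Lemma~\ref{lem_objec}), control that deviation with the independent-blocks argument for stationary $\beta$-mixing data from \cite{antos2008learning} combined with Haussler's pseudo-dimension covering bound and the $2|\cA|V_{\cF_k}\log(3|\cA|)$ bound for the max-over-actions class, optimize the block length against the exponential mixing rate to obtain the $(\max\{c_{0,k}/b,1\})^{1/\kappa}$ factor, and union bound over $k$ at level $\delta/K$. The range bookkeeping via Assumption~\ref{assum_inc_fun} giving the $(2k+3)^4R^4$ scale in $c_{2,k}$ also matches, so the proposal is correct and essentially identical in structure to the paper's proof.
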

We defer the proofs to Appendix~\ref{app:sec_4}, but we quickly note that the proof strategy closely follow \cite{antos2008learning, antos2007fitted} and relies on the Hoeffding inequality under a mixing condition. 

Rougbly speaking, Lemma~\ref{appx_bel_iid} tells that the square of approximation error of the Bellman operator decreases at a $1/\sqrt{n}$ rate with respect to number of sample. Combining Theorem \ref{prop::1} and Lemma \ref{appx_bel_traj}, we obtain following sample complexity result of Anc-F-QI with single-trajectory dataset.

\begin{theorem}\label{Sam_com_traj}
Assume Assumptions \ref{assum_bell_opt}, \ref{assum_ex_arg}, \ref{assum_star_fun}, \ref{converage_tran}, \ref{assum_inc_fun}, \ref{traj_data}, and \ref{traj_beta_data} (Bellman optimality equation, existence of argmin, star-shaped function space, uniform stochastic transition, increasing function range, single-trajectory dataset, $\beta$-mixing single-trajectory). Let $\mu$ be the distribution generating the dataset defined as $\mu(s,a)= \nu(s)\pi_b(a \,|\, s) $. Let $\epsilon>0$ and $\delta>0$. With $1-\delta$ probability, the policy error of Anc-F-QI with $\lambda_k=\frac{k}{k+2}$ and $ K=\lceil9C^{1/2}_\mu\|Q^{\pi_\star}\|_{2,\mu}/\epsilon\rceil$  satisfies  $ \| g^{\pi_\star}-g^{\pi_K}\|_{\infty}
    \le \epsilon+ KC^{1/2}_\mu \underset{k=0,\dots,K-1}{\max} \sqrt{\epsilon_B(\cF_k,\cF_{k+1})}$ with sample complexity       \[n= \tilde{\mathcal{O}}\left(1/\epsilon^{12}\right),\]    where $\tilde{\mathcal{O}}$ only shows the dependence on $\epsilon$.    
Alternatively, Assume Assumptions \ref{assum_bell_opt}, \ref{assum_ex_arg}, \ref{assum_star_fun}, \ref{converage_fut}, \ref{assum_inc_fun}, \ref{traj_data}, and \ref{traj_beta_data} (Bellman optimality equation, existence of argmin, star-shaped function space, uniform future state distribution, increasing function range, single-trajectory dataset, $\beta$-mixing single-trajectory).
Let $\mu$ be the distribution generating the dataset defined as $\mu(s,a)= \nu(s)\pi_b(a \,|\, s) $ and $\rho$ be an arbitrary distribution on $\cS\times \cA$. Let $\epsilon>0$ and $\delta>0$. With $1-\delta$ probability, the policy error of Anc-F-QI with $\lambda_k=\frac{k}{k+2}$ and  $ K=\lceil 9C^{1/2}_{\mu,\rho}\|Q^{\pi_\star}\|_{2,\mu}/\epsilon\rceil$ satisfies  $ \| g^{\pi_\star}-g^{\pi_K}\|_{2,\rho}
       \le \epsilon+ KC^{1/2}_{\mu,\rho} \underset{k=0,\dots,K-1}{\max} \sqrt{\epsilon_B(\cF_k,\cF_{k+1})}$ with sample complexity
\[n= \tilde{\mathcal{O}}\left(1/\epsilon^{12}\right),\]
where $\tilde{\mathcal{O}}$ only shows the dependence on $\epsilon$.    
\end{theorem}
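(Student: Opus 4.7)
The plan is to combine Proposition \ref{prop::1} with Lemma \ref{appx_bel_traj}, in structural parallel with the IID proof of Theorem \ref{Sam_com_iid}. Because Anc-F-QI initializes $f_0 = 0$, applying Proposition \ref{prop::1} with $p = 2$ and $Q^0 = 0$ yields
\[
\|g^{\pi_\star} - g^{\pi_K}\|_\infty \le C_\mu^{1/2}\frac{8}{K+2}\|Q^{\pi_\star}\|_{2,\mu} + C_\mu^{1/2}\frac{2K}{3}\max_{1\le k\le K}\|\epsilon_k\|_{2,\mu},
\]
where $\epsilon_k = \hat{T} f_{k-1} - T f_{k-1}$. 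First I would choose $K = \lceil 9 C_\mu^{1/2} \|Q^{\pi_\star}\|_{2,\mu}/\epsilon\rceil$, which bounds the first summand by a constant multiple of $\epsilon$, reducing the task to controlling $\max_k \|\epsilon_k\|_{2,\mu}$.

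To control the second summand I would apply Lemma \ref{appx_bel_traj}. Its failure-probability parameter already carries the factor $\log(e/(K\delta))$ inside $c_{0,k}$, so a union bound over $k\in\{0,\ldots,K-1\}$ is implicit, and with probability at least $1-\delta$ the guarantees hold simultaneously. Using $\sqrt{a+b}\le\sqrt{a}+\sqrt{b}$ separates the statistical from the inherent-Bellman piece:
\[
\|\epsilon_k\|_{2,\mu}\le\Bigl(\tfrac{c_{0,k}(\max\{c_{0,k}/b,1\})^{1/\kappa}}{c_{2,k} n}\Bigr)^{1/4}+\sqrt{\epsilon_B(\cF_k,\cF_{k+1})}.
\]
Since $c_{2,k}^{-1}=512(2k+3)^4R^4$ is polynomial in $k\le K$ while $c_{0,k}$ is polylogarithmic in $n$, $K$, $\delta^{-1}$, and the pseudo-dimensions $V_{\cF_k}$, the statistical piece scales as $\tilde{\mathcal{O}}(KR/n^{1/4})$. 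Multiplying by $\frac{2K}{3}C_\mu^{1/2}$ gives a contribution of order $\tilde{\mathcal{O}}(C_\mu^{1/2} K^2 R/n^{1/4})$; requiring this be at most a constant multiple of $\epsilon$ and substituting $K\sim 1/\epsilon$ forces $n\gtrsim 1/\epsilon^{12}$, which is the advertised bound. The inherent-Bellman contribution is $\frac{2K}{3}C_\mu^{1/2}\sqrt{\epsilon_B}\le K C_\mu^{1/2}\sqrt{\epsilon_B}$, matching the theorem statement (and explaining why the prefactor is $K$ here versus $3K$ in the IID case, which was inherited from the $13\epsilon_B$ appearing inside Lemma \ref{appx_bel_iid}).

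The $\|g^{\pi_\star}-g^{\pi_K}\|_{2,\rho}$ bound under Assumption \ref{converage_fut} follows by the identical argument, invoking the second inequality of Proposition \ref{prop::1} with $C_{\mu,\rho}$ in place of $C_\mu$. I expect the main obstacle to be bookkeeping rather than conceptual: the bound of Lemma \ref{appx_bel_traj} depends on $\log n$ through $c_{0,k}$, so solving the resulting sample-complexity inequality requires the standard trick of first positing a polynomial upper bound $n\le\mathrm{poly}(1/\epsilon)$ to replace $\log n$ by $\log(1/\epsilon)$ and then substituting back, incurring only logarithmic overhead. One must also track how $V_{\cF_k}$ and $V_{(\cF_k)_{\max}}$ enter through the maximum over $k\le K$, and absorb all of these polylogarithmic factors into $\tilde{\mathcal{O}}$, which is why the theorem statement records only the explicit $\epsilon$-dependence.
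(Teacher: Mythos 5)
Your proposal is correct and follows essentially the same route as the paper: apply Proposition~\ref{prop::1} with $p=2$ and $Q^0=f_0=0$, invoke Lemma~\ref{appx_bel_traj} (whose $\log(e/(K\delta))$ term already encodes the union bound over iterations), split via $\sqrt{a+b}\le\sqrt a+\sqrt b$, and balance $K\sim C_\mu^{1/2}\|Q^{\pi_\star}\|_{2,\mu}/\epsilon$ against the $\tilde{\mathcal{O}}(C_\mu^{1/2}K^2R/n^{1/4})$ statistical term to get $n\gtrsim C_\mu^6\|Q^{\pi_\star}\|_{2,\mu}^8R^4/\epsilon^{12}$, which matches the paper's full bound in Appendix~\ref{app:sec_4}. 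Your accounting of the $\epsilon_B$ prefactor and the $\log n$ bootstrapping is consistent with the paper's (terser) ``direct calculation.''
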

In the Appendix~\ref{app:sec_4}, we show the full sample complexity with all of the factors.

To the best of our knowledge, this is the first sample complexity result with single-trajectory data in the average-reward setup.
 In Section \ref{sec::rel_anc}, we improve this sample complexity to $\tilde{\mathcal{O}}(\epsilon^{-8})$ using the relative normalization mechanism

\newpage

\section{Relative Anchored Fitted Q-Iteration}\label{sec::rel_anc}
In this section, we propose \emph{Relative Anchored Fitted Q-Iteration (R-Anc-F-QI)} and improve the sample complexity.
We are motivated by the classical \emph{relative value iteration} \cite{white1963dynamic}. In the tabular case, it is known that standard VI diverges in the average-reward setup \cite[Theorem~9.4.1]{10.5555/528623}, and relative value iteration normalizes the divergent vectors \cite[Section 8.5.5]{10.5555/528623}. In the case of (Anchored) Fitted Q-Iteration, this normalization allows the $f_k$ functions to be bounded and removes the inefficiency associated with the increasing function classes described in Section~\ref{ss:f-range}.


\begin{algorithm}
\caption{Relative Anchored Fitted Q-Iteration $(D, K, \cF, \{\lambda_{i}\}^K_{i=1})$}\label{R-Anc-F-QI}
\begin{algorithmic}
\State \textbf{Input}: $D = \{s_i, a_i,r_i, s'_i\}^n_{i=1}$, $f_0=0 $, $K\ge 1$, $\{\lambda_i\}^K_{i=1} \subset (0,1)$
      \For{$k=0,1,\dots, K-1$}
      \State $\hat{T}f_{k} =\argmin_{f \in \cF} \sum^{n}_{i=1} \big(f(s_i, a_i)-r_i-\max_{a\in\cA} f_k(s'_i, a)\big)^2$ 
      \State $f_{k+1}=  (1- \lambda_{k+1})f_0+\lambda_{k+1} (\hat{T}f_{k}-\frac{\max \hat{T}f_{k} + \min \hat{T}f_{k} }{2}\mathbf{1})$ 
      \EndFor
      \State $\pi(a \,|\, s) = \argmax_{a\in \cA} f_K(s,a)$
      \State \textbf{Output} $\pi, f_K$
\end{algorithmic}
\end{algorithm}

The only difference with  Anchored Fitted Q-Iteration is the subtraction of $\frac{\max \hat{T}f_{k} + \min \hat{T}f_{k} }{2}\mathbf{1}$ in the second line of the for-loop. By direct calculation, we can check that $ \big\|f-\frac{\max f + \min f }{2}\mathbf{1}\big\|_\infty \le \infn{f}$ and subtracting a uniform constant does not effect on greedy policy due the fact that the Bellman operator satisfies $T(c\mathbf{1}+x)=c\mathbf{1}+T(x)$. Thus, we can still apply Proposition~\ref{prop::1} to Relative Anchored Fitted Q-Iteration. 

 \begin{assump}[normalized function space]\label{assum_nor_fun}
 If  $ f \in \cF$, $f- \frac{\max f + \min f }{2} \mathbf{1} \in \cF$.
\end{assump}
This assumption ensures that the normalization operation is well ldefined.

 \begin{assump}[range of function space] \label{assum_ran_fun}
  $\cF \subset \{f : \cS \times \cA \rightarrow [- 2\infn{Q^{\pi_\star}},  2\infn{Q^{\pi_\star}}] \,|\, f \in B(S\times A)\}$, where $Q^{\pi_\star}$ is solution of Bellman optimality equation. 
  \end{assump}
Now, unlike increasing function range used for the non-relative Anchored Fitted Q-Iteration, we now have a function space bounded by $Q^{\pi_\star}$. This difference leads to improved efficiency as the following sample complexity results show.

\begin{theorem}\label{Sam_com_iid_rel}
Assume Assumptions \ref{assum_bell_opt}, \ref{assum_ex_arg}, \ref{assum_star_fun}, \ref{converage_tran}, \ref{iid_data}, \ref{assum_nor_fun}, and \ref{assum_ran_fun} (Bellman optimality equation, existence of argmin, star-shaped function space, uniform stochastic transition, normalized function space, range of function space, IID dataset).   Let $\mu$ be the distribution generating the dataset. Let $\epsilon>0$ and $\delta>0$. With probability $1-\delta$, the  policy error of R-Anc-F-QI with $\lambda_k=\frac{k}{k+2}$ and $K=\lceil18C^{1/2}_\mu\|Q^{\pi_\star}\|_{2,\mu}/\epsilon\rceil$ satisfies  $ \| g^{\pi_\star}-g^{\pi_K}\|_{\infty}
       \le \epsilon+ 3KC^{1/2}_\mu \sqrt{\epsilon_B(
       \cF,\cF
       )}$ with sample complexity 
      \[n=\tilde{\mathcal{O}}\left(\frac{ (R+\infn{Q^{\pi_\star}})^2\infn{Q^{\pi_\star}}^2C^{3}_\mu\log (N^2_{\epsilon}/\delta)}{\epsilon^4} \right),\]
      where $\tilde{\mathcal{O}}$ ignores all logarithmic factors except the logarithmic dependence on the covering number $N_{\epsilon}$ defined as
      \[
N_{\epsilon}=\cN \big(\tfrac{\epsilon^4}{10^6C^2_\mu(R+\infn{Q^{\pi_\star}})\infn{Q^{\pi_\star}}^2}; \cF, \infn{\cdot}\big).
      \]
      Alternatively, assume Assumptions \ref{assum_bell_opt}, \ref{assum_ex_arg}, \ref{assum_star_fun}, \ref{converage_fut}, \ref{iid_data}, \ref{assum_nor_fun}, and \ref{assum_ran_fun} (Bellman optimality equation, existence of argmin, star-shaped function space, uniform future state distribution, normalized function space, range of function space, IID dataset)
 Let $\mu$ be the distribution generating the dataset and $\rho$ be an arbitrary distribution on $\cS\times \cA$.
 Let $\epsilon>0$ and $\delta>0$. With probability $1-\delta$, the policy error of R-Anc-F-QI with $\lambda_k=\frac{k}{k+2}$ and $ K=\lceil18C^{1/2}_{\mu,\rho}\|Q^{\pi_\star}\|_{2,\mu}/\epsilon\rceil$ satisfies   $ \| g^{\pi_\star}-g^{\pi_K}\|_{2,\rho}
       \le \epsilon+ 3KC^{1/2}_{\mu,\rho} \sqrt{\epsilon_B(
       \cF,\cF
       )}$ with sample complexity
      \[n=\tilde{\mathcal{O}}\left(\frac{ (R+\infn{Q^{\pi_\star}})^2\infn{Q^{\pi_\star}}^2C^{3}_{\mu,\rho}\log (N^2_{\epsilon}/\delta)}{\epsilon^4} \right),\]
      where $\tilde{\mathcal{O}}$ ignores all logarithmic factors except the logarithmic dependence on the covering number $N_{\epsilon}$ defined as
      \[N_{\epsilon}=\cN\big(\tfrac{\epsilon^4}{10^6C^2_{\mu,\rho}(R+\infn{Q^{\pi_\star}})\infn{Q^{\pi_\star}}^2}; \cF, \infn{\cdot}\big).\]
\end{theorem}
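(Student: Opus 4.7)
The plan is to reduce Relative Anchored Fitted Q-Iteration to Approximate Anchored Q-Value Iteration (Apx-Anc-QI) by exploiting the translation-equivariance of the Bellman operator, apply Proposition~\ref{prop::1}, and then bound the per-iterate regression error via a covering-number argument that is tighter than in Lemma~\ref{appx_bel_iid} because the function space is now fixed rather than growing.

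First, I would strip off the relative normalization using $T(c\mathbf{1}+f)=c\mathbf{1}+Tf$. Define $\tilde{f}_k = f_k + \alpha_k\mathbf{1}$ with $\alpha_0=0$ and $\alpha_{k+1}=\lambda_{k+1}(\alpha_k+c_k)$, where $c_k=(\max\hat{T}f_k+\min\hat{T}f_k)/2$. A direct computation using translation-equivariance shows that $\tilde{f}_k$ follows Apx-Anc-QI with $\tilde{f}_0=0$ and evaluation errors $\epsilon_{k+1}=\hat{T}f_k - Tf_k$. Since $\tilde{f}_K = f_K + \alpha_K\mathbf{1}$ has the same greedy policy as $f_K$, Proposition~\ref{prop::1} applied to $\tilde{f}_k$ controls the policy error of R-Anc-F-QI. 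With $p=2$ and Assumption~\ref{converage_tran}, this yields
\[\|g^{\pi_\star}-g^{\pi_K}\|_\infty \;\le\; C_\mu^{1/2}\frac{8}{K+2}\|Q^{\pi_\star}\|_{2,\mu} \,+\, C_\mu^{1/2}\frac{2K}{3}\max_{1\le k\le K}\|\hat{T}f_{k-1}-Tf_{k-1}\|_{2,\mu}.\]

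The main technical step is bounding the regression errors. Assumption~\ref{assum_ran_fun} forces $\infn{\hat{T}f_k}\le 2\infn{Q^{\pi_\star}}$; the normalized-function-space and star-shaped assumptions then ensure $f_k\in\cF$ and $\infn{f_k}\le 2\infn{Q^{\pi_\star}}$ for all $k$. Consequently the regression residual $r(s,a)+\max_{a'} f_k(s',a')-f'(s,a)$ has magnitude $O(R+\infn{Q^{\pi_\star}})$ uniformly in $k$, with no iteration-index dependence. Following the Bernstein-inequality-plus-covering-number template of Lemma~\ref{appx_bel_iid}, but now with an $\epsilon$-cover of the fixed space $\cF$ (rather than of growing $\cF_k$), and taking a union bound over $k=1,\dots,K$, one would obtain
\[\|\hat{T}f_k-Tf_k\|_{2,\mu}^2 \;\le\; \tilde{\mathcal{O}}\!\left(\frac{(R+\infn{Q^{\pi_\star}})^2 \log(KN_\epsilon/\delta)}{n}\right) + 13\,\epsilon_B(\cF,\cF)\]
uniformly in $k$. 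Choosing $K=\lceil 18C_\mu^{1/2}\|Q^{\pi_\star}\|_{2,\mu}/\epsilon\rceil$ makes the anchor-decay term at most $\epsilon/2$, and forcing the second term to be at most $\epsilon/2$ requires $\max_k\|\hat{T}f_{k-1}-Tf_{k-1}\|_{2,\mu}^2 \lesssim \epsilon^4/(\|Q^{\pi_\star}\|_{2,\mu}^2 C_\mu^2)$, which gives the claimed $n=\tilde{\mathcal{O}}((R+\infn{Q^{\pi_\star}})^2\infn{Q^{\pi_\star}}^2 C_\mu^3\log(N_\epsilon^2/\delta)/\epsilon^4)$. The alternative case under Assumption~\ref{converage_fut} is identical after replacing $C_\mu$ by $C_{\mu,\rho}$ and the sup-norm by $\|\cdot\|_{2,\rho}$.

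The main obstacle is the technical regression-error bound of Step 3 over the fixed bounded function space; it parallels Lemma~\ref{appx_bel_iid} but the bookkeeping now depends on $\infn{Q^{\pi_\star}}$ rather than $k$. The key source of the improvement from $\tilde{\mathcal{O}}(1/\epsilon^6)$ in Theorem~\ref{Sam_com_iid} to $\tilde{\mathcal{O}}(1/\epsilon^4)$ here is precisely that the regression residuals no longer grow with $k$ (they were $O(kR)$ in the non-relative case because Assumption~\ref{assum_inc_fun} allowed $\cF_k$ to enlarge as $kR$), eliminating a factor of $k^2$ from the per-step squared-error bound and saving two orders of $\epsilon$ overall. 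A subtler point is Step 1: the shifts $\alpha_k$ themselves grow with $k$, reflecting the accumulation of the average reward $g^{\pi_\star}$ that would otherwise blow up the iterates, but they are canceled by the normalization and are invisible to the policy error because of the translation-equivariance of $T$ and the shift-invariance of the greedy operation.
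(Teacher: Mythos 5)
Your proposal matches the paper's proof in both structure and substance: the paper likewise reduces R-Anc-F-QI to Apx-Anc-QI by observing that the relative normalization only shifts the iterates by multiples of $\mathbf{1}$ (so the greedy policies and Proposition~\ref{prop::1} are unaffected), proves a counterpart of Lemma~\ref{appx_bel_iid} over the fixed space $\cF$ with residuals bounded by $O(R+\infn{Q^{\pi_\star}})$ instead of $O(kR)$, and then combines the two with the same choices of $K$, $\epsilon'$, and $n$. Your explicit bookkeeping of the shifts $\alpha_k$ and your diagnosis of where the $1/\epsilon^6\to 1/\epsilon^4$ improvement comes from are both consistent with the paper's argument.
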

\begin{theorem}\label{Sam_com_traj_rel}
    Assume  Assumptions \ref{assum_bell_opt}, \ref{assum_ex_arg}, \ref{assum_star_fun}, \ref{converage_tran}, \ref{traj_data}, \ref{traj_beta_data}, \ref{assum_nor_fun}, and \ref{assum_ran_fun} (Bellman optimality equation, existence of argmin, star-shaped function space, uniform stochastic transition, normalized function space, range of function space, single-trajectory dataset, $\beta$-mixing single-trajectory). Let $\mu$ be the distribution generating the dataset defined as $\mu(s,a)= \nu(s)\pi_b(a \,|\, s) $. Let $\epsilon>0$ and $\delta>0$.
    With probability $1-\delta$, the policy error of Anc-F-QI with $\lambda_k=\frac{k}{k+2}$ and $ K=\lceil9C^{1/2}_\mu\|Q^{\pi_\star}\|_{2,\mu}/\epsilon\rceil$ satisfies  $ \| g^{\pi_\star}-g^{\pi_K}\|_{\infty}
       \le \epsilon+ KC^{1/2}_\mu \sqrt{\epsilon_B(
       \cF,\cF
       )}$ with sample complexity       \[n= \tilde{\mathcal{O}}\left(1/\epsilon^{8}\right),\]    
where $\tilde{\mathcal{O}}$ only shows the dependence on $\epsilon$.    
Alternatively, assume Assumptions \ref{assum_bell_opt}, \ref{assum_ex_arg}, \ref{assum_star_fun}, \ref{converage_fut}, \ref{traj_data}, \ref{traj_beta_data}, \ref{assum_nor_fun}, and \ref{assum_ran_fun} (Bellman optimality equation, existence of argmin, star-shaped function space, uniform future state distribution, normalized function space, range of function space, single-trajectory dataset, $\beta$-mixing single-trajectory). Let $\mu$ be the distribution generating the dataset defined as $\mu(s,a)= \nu(s)\pi_b(a \,|\, s) $ and $\rho$ be an arbitrary distribution on $\cS\times \cA$. Let $\epsilon>0$ and $\delta>0$.
    With probability $1-\delta$, the policy error of Anc-F-QI with $\lambda_k=\frac{k}{k+2}$ and $ K=\lceil9C^{1/2}_{\mu,\rho}\|Q^{\pi_\star}\|_{2,\mu}/\epsilon\rceil$ satisfies  $ \| g^{\pi_\star}-g^{\pi_K}\|_{2,\rho}
       \le \epsilon+ KC^{1/2}_{\mu,\rho} \sqrt{\epsilon_B(
       \cF,\cF
       )}$ with sample complexity
\[n= \tilde{\mathcal{O}}\left(1/\epsilon^{8}\right),\]    
where $\tilde{\mathcal{O}}$ only shows the dependence on $\epsilon$.    
\end{theorem}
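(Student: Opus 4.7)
The plan is to reduce Theorem~\ref{Sam_com_traj_rel} to two already-available tools: Proposition~\ref{prop::1} for the propagation of regression errors into policy error, and an analog of Lemma~\ref{appx_bel_traj} adapted to the fixed, uniformly bounded function class of R-Anc-F-QI. The overall pipeline mirrors the proof of Theorem~\ref{Sam_com_traj}, with the quantitative improvement from $\tilde{\mathcal{O}}(1/\epsilon^{12})$ to $\tilde{\mathcal{O}}(1/\epsilon^8)$ coming entirely from replacing the $k$-dependent range $kR$ by the $k$-independent range $2\infn{Q^{\pi_\star}}$.

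First I would argue that R-Anc-F-QI can be analyzed through the Apx-Anc-QI template. Because the Bellman operator $T$ satisfies $T(f + c\mathbf{1}) = Tf + c\mathbf{1}$ and the greedy policy is invariant under additive scalar shifts, the subtraction of $\tfrac{\max \hat T f_k + \min \hat T f_k}{2}\mathbf{1}$ in the update can be absorbed into an accumulated constant shift without affecting the output greedy policy $\pi_K$ or the $L_2(\mu)$ regression residuals $\epsilon_k = \hat T f_{k-1} - T f_{k-1}$. Assumption~\ref{assum_nor_fun} ensures the normalization keeps iterates in $\cF$, and Assumption~\ref{assum_ran_fun} ensures the uniform bound $\infn{f_k} \le 2\infn{Q^{\pi_\star}}$. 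Thus Proposition~\ref{prop::1} applies verbatim with $Q^0 = 0$, giving
\begin{align*}
\|g^{\pi_\star} - g^{\pi_K}\|_\infty \le C^{1/2}_\mu \frac{8}{K+2}\|Q^{\pi_\star}\|_{2,\mu} + C^{1/2}_\mu \frac{2K}{3}\max_{1\le k\le K}\|\epsilon_k\|_{2,\mu},
\end{align*}
and the analogous $\|\cdot\|_{2,\rho}$ version under Assumption~\ref{converage_fut}.

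Second I would prove an analog of Lemma~\ref{appx_bel_traj} for the fixed class $\cF$. The structure of the proof is unchanged: apply the Hoeffding-type concentration for $\beta$-mixing sequences from \cite{antos2008learning, antos2007fitted} to the empirical squared-loss defining $\hat T f_k$, then union bound over a pseudo-dimension cover of $\cF$ and over $k=0,\dots,K-1$. The only difference is bookkeeping: because $\infn{f_k}, \infn{\hat T f_k} \le 2\infn{Q^{\pi_\star}}$ and $\infn{r}\le R$, the regression targets $r_i + \max_a f_k(s'_i,a)$ are bounded by $R + 2\infn{Q^{\pi_\star}}$ uniformly in $k$. Consequently the $k$-dependent constant $c_{2,k} = 1/(512(2k+3)^4 R^4)$ of Lemma~\ref{appx_bel_traj} is replaced by a constant $c_2 \asymp 1/(R + \infn{Q^{\pi_\star}})^4$ independent of $k$, and $V_{\cF_k}$ is replaced by $V_\cF$. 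The result is
\begin{align*}
\|T f_k - \hat T f_k\|^2_{2,\mu} \;\le\; \sqrt{\frac{c_0\,(\max\{c_0/b,1\})^{1/\kappa}}{c_2\, n}} + \epsilon_B(\cF,\cF),
\end{align*}
where $c_0$ is polylogarithmic in $n, K, 1/\delta$ and linear in $V_\cF + V_{\cF_{\max}}$, all independent of $k$.

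Third I would tune $K$ and $n$. Choosing $K = \lceil 9 C^{1/2}_\mu \|Q^{\pi_\star}\|_{2,\mu}/\epsilon\rceil$ makes the first term of Proposition~\ref{prop::1} at most $\epsilon/2$ (with room for constants). Pushing the second term below $\epsilon/2$ requires $\max_k \|\epsilon_k\|_{2,\mu} = \mathcal{O}(\epsilon/K) = \mathcal{O}(\epsilon^2)$, i.e.\ $\|\epsilon_k\|^2_{2,\mu} = \mathcal{O}(\epsilon^4)$; combined with the Bellman error $\epsilon_B(\cF,\cF)$ this yields the advertised additive $K C^{1/2}_\mu \sqrt{\epsilon_B(\cF,\cF)}$ term. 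Inverting the bound from the previous step, $\sqrt{c_0\,(\cdot)^{1/\kappa}/(c_2 n)} = \mathcal{O}(\epsilon^4)$ requires $n = \tilde{\Omega}(1/(c_2\,\epsilon^8)) = \tilde{\mathcal{O}}(1/\epsilon^8)$, and the $\rho$-weighted statement follows identically with $C_{\mu,\rho}$ in place of $C_\mu$.

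The main obstacle I anticipate is making the constant-shift reduction rigorous: because $c_k$ depends on the data through $\hat T f_{k-1}$, one must check that the subsequent regression problem for $\hat T f_k$ is still a bona fide least-squares problem over $\cF$ to which the $\beta$-mixing concentration applies without modification. The resolution is that the regression error $\hat T f_k - T f_k$ at each step depends only on $f_k$ (not on the cumulative constant shift, since shifts commute with $T$), so the per-step analysis decouples from the global shift tracking; once this is formalized, the rest of the argument is a direct combination of Proposition~\ref{prop::1} and the adapted lemma.
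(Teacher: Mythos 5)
Your proposal is correct and follows essentially the same route as the paper: the paper likewise reduces R-Anc-F-QI to the Apx-Anc-QI template via the constant-shift/greedy-policy-invariance argument (spelled out in the proof of Theorem~\ref{Sam_com_iid_rel}), proves Lemma~\ref{appx_bel_traj_rel} as the uniform-range analog of Lemma~\ref{appx_bel_traj} with $c_2 = 1/(512(R+4\infn{Q^{\pi_\star}})^4)$ replacing the $k$-dependent $c_{2,k}$, and combines it with Proposition~\ref{prop::1} under the same choice of $K$ to solve for $n=\tilde{\mathcal{O}}(1/\epsilon^8)$. Your anticipated obstacle about the data-dependent shift is resolved exactly as the paper does it, by noting the shifted and unshifted recursions differ by a multiple of $\mathbf{1}$ and hence induce the same greedy policies.
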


Indeed, with the relative normalization mechanism, we improve the sample complexities from $\tilde{\mathcal{O}}(1/\epsilon^{6})$ to $\tilde{\mathcal{O}}(1/\epsilon^{4})$ and $\tilde{\mathcal{O}}(1/\epsilon^{12})$ to $\tilde{\mathcal{O}}(1/\epsilon^{8})$ for IID and single-trajectory data cases, respectively,

\section{Conclusion}
In this work, we introduced Anchored Fitted Q-Iteration (Anc-F-QI) and established new sample complexity results for the average-reward offline RL with general function approximation under the assumption of weakly communicating MDPs. Our approach combines the classical Fitted Q-Iteration with an anchoring mechanism, and the anchor mechanism is the crucial component that enables the finite-time analysis. Roughly speaking, we establish a $\tilde{\mathcal{O}}(1/\epsilon^6)$ sample complexity with IID data and $\tilde{\mathcal{O}}(1/\epsilon^{12})$ sample complexity with single-trajectory data. Then, using the relative normalization technique, we improve the sample complexity to $\tilde{\mathcal{O}}(1/\epsilon^4)$ and $\tilde{\mathcal{O}}(1/\epsilon^8)$ for IID and single-trajectory data, respectively.


One limitation of this work is the reliance on \emph{full} coverage coefficients as described in Assumptions \ref{converage_tran} and \ref{converage_fut}. Some prior work, such as \cite{ozdaglar2023revisiting} and \cite{gabbianelli2024offline}, utilizes a weaker assumption that we refer to as \emph{partial} coverage coefficients, albeit with much stronger structural assumptions on the MDP. Extending our analysis to relax the full coverage coefficient would be a worthwhile direction of future work. Another possible direction of future work is to utilize variance reduction techniques in the style of \cite{wainwright2019variance,sidford2023variance,lee2025near} to further improve the sample complexity.





\begin{ack}
This work is supported by the National Research Foundation of Korea (NRF) grant funded by the Korean government (No.RS-2024-00421203).
\end{ack}

\newpage
\bibliographystyle{abbrv}
\bibliography{neurips_2025}


\newpage 

\section*{NeurIPS Paper Checklist}

\begin{enumerate}

\item {\bf Claims}
    \item[] Question: Do the main claims made in the abstract and introduction accurately reflect the paper's contributions and scope?
    \item[] Answer: \answerYes{}.
    \item[] Justification: Our abstract and introduction clearly state the claims made, including the contributions.
contributions made in the paper
    \item[] Guidelines:
    \begin{itemize}
        \item The answer NA means that the abstract and introduction do not include the claims made in the paper.
        \item The abstract and/or introduction should clearly state the claims made, including the contributions made in the paper and important assumptions and limitations. A No or NA answer to this question will not be perceived well by the reviewers. 
        \item The claims made should match theoretical and experimental results, and reflect how much the results can be expected to generalize to other settings. 
        \item It is fine to include aspirational goals as motivation as long as it is clear that these goals are not attained by the paper. 
    \end{itemize}

\item {\bf Limitations}
    \item[] Question: Does the paper discuss the limitations of the work performed by the authors?
    \item[] Answer: \answerYes{}.
    \item[] Justification: Yes, we present limitation of our work as table and discuss in conclusion.
    \item[] Guidelines:
    \begin{itemize}
        \item The answer NA means that the paper has no limitation while the answer No means that the paper has limitations, but those are not discussed in the paper. 
        \item The authors are encouraged to create a separate "Limitations" section in their paper.
        \item The paper should point out any strong assumptions and how robust the results are to violations of these assumptions (e.g., independence assumptions, noiseless settings, model well-specification, asymptotic approximations only holding locally). The authors should reflect on how these assumptions might be violated in practice and what the implications would be.
        \item The authors should reflect on the scope of the claims made, e.g., if the approach was only tested on a few datasets or with a few runs. In general, empirical results often depend on implicit assumptions, which should be articulated.
        \item The authors should reflect on the factors that influence the performance of the approach. For example, a facial recognition algorithm may perform poorly when image resolution is low or images are taken in low lighting. Or a speech-to-text system might not be used reliably to provide closed captions for online lectures because it fails to handle technical jargon.
        \item The authors should discuss the computational efficiency of the proposed algorithms and how they scale with dataset size.
        \item If applicable, the authors should discuss possible limitations of their approach to address problems of privacy and fairness.
        \item While the authors might fear that complete honesty about limitations might be used by reviewers as grounds for rejection, a worse outcome might be that reviewers discover limitations that aren't acknowledged in the paper. The authors should use their best judgment and recognize that individual actions in favor of transparency play an important role in developing norms that preserve the integrity of the community. Reviewers will be specifically instructed to not penalize honesty concerning limitations.
    \end{itemize}

\item {\bf Theory assumptions and proofs}
    \item[] Question: For each theoretical result, does the paper provide the full set of assumptions and a complete (and correct) proof?
    \item[] Answer: \answerYes{}.
    \item[] Justification: Yes, we clearly state full set of assumptions and proofs.
    \item[] Guidelines:
    \begin{itemize}
        \item The answer NA means that the paper does not include theoretical results. 
        \item All the theorems, formulas, and proofs in the paper should be numbered and cross-referenced.
        \item All assumptions should be clearly stated or referenced in the statement of any theorems.
        \item The proofs can either appear in the main paper or the supplemental material, but if they appear in the supplemental material, the authors are encouraged to provide a short proof sketch to provide intuition. 
        \item Inversely, any informal proof provided in the core of the paper should be complemented by formal proofs provided in appendix or supplemental material.
        \item Theorems and Lemmas that the proof relies upon should be properly referenced. 
    \end{itemize}

    \item {\bf Experimental result reproducibility}
    \item[] Question: Does the paper fully disclose all the information needed to reproduce the main experimental results of the paper to the extent that it affects the main claims and/or conclusions of the paper (regardless of whether the code and data are provided or not)?
    \item[] Answer: \answerNA{}.
    \item[] Justification: Our work does not include numerical experiments.
    \item[] Guidelines:
    \begin{itemize}
        \item The answer NA means that the paper does not include experiments.
        \item If the paper includes experiments, a No answer to this question will not be perceived well by the reviewers: Making the paper reproducible is important, regardless of whether the code and data are provided or not.
        \item If the contribution is a dataset and/or model, the authors should describe the steps taken to make their results reproducible or verifiable. 
        \item Depending on the contribution, reproducibility can be accomplished in various ways. For example, if the contribution is a novel architecture, describing the architecture fully might suffice, or if the contribution is a specific model and empirical evaluation, it may be necessary to either make it possible for others to replicate the model with the same dataset, or provide access to the model. In general. releasing code and data is often one good way to accomplish this, but reproducibility can also be provided via detailed instructions for how to replicate the results, access to a hosted model (e.g., in the case of a large language model), releasing of a model checkpoint, or other means that are appropriate to the research performed.
        \item While NeurIPS does not require releasing code, the conference does require all submissions to provide some reasonable avenue for reproducibility, which may depend on the nature of the contribution. For example
        \begin{enumerate}
            \item If the contribution is primarily a new algorithm, the paper should make it clear how to reproduce that algorithm.
            \item If the contribution is primarily a new model architecture, the paper should describe the architecture clearly and fully.
            \item If the contribution is a new model (e.g., a large language model), then there should either be a way to access this model for reproducing the results or a way to reproduce the model (e.g., with an open-source dataset or instructions for how to construct the dataset).
            \item We recognize that reproducibility may be tricky in some cases, in which case authors are welcome to describe the particular way they provide for reproducibility. In the case of closed-source models, it may be that access to the model is limited in some way (e.g., to registered users), but it should be possible for other researchers to have some path to reproducing or verifying the results.
        \end{enumerate}
    \end{itemize}

\item {\bf Open access to data and code}
    \item[] Question: Does the paper provide open access to the data and code, with sufficient instructions to faithfully reproduce the main experimental results, as described in supplemental material?
    \item[] Answer: \answerNA{}.
    \item[] Justification: Our paper does not include experiments requiring code.
    \item[] Guidelines:
    \begin{itemize}
        \item The answer NA means that paper does not include experiments requiring code.
        \item Please see the NeurIPS code and data submission guidelines (\url{https://nips.cc/public/guides/CodeSubmissionPolicy}) for more details.
        \item While we encourage the release of code and data, we understand that this might not be possible, so ``No'' is an acceptable answer. Papers cannot be rejected simply for not including code, unless this is central to the contribution (e.g., for a new open-source benchmark).
        \item The instructions should contain the exact command and environment needed to run to reproduce the results. See the NeurIPS code and data submission guidelines (\url{https://nips.cc/public/guides/CodeSubmissionPolicy}) for more details.
        \item The authors should provide instructions on data access and preparation, including how to access the raw data, preprocessed data, intermediate data, and generated data, etc.
        \item The authors should provide scripts to reproduce all experimental results for the new proposed method and baselines. If only a subset of experiments are reproducible, they should state which ones are omitted from the script and why.
        \item At submission time, to preserve anonymity, the authors should release anonymized versions (if applicable).
        \item Providing as much information as possible in supplemental material (appended to the paper) is recommended, but including URLs to data and code is permitted.
    \end{itemize}

\item {\bf Experimental setting/details}
    \item[] Question: Does the paper specify all the training and test details (e.g., data splits, hyperparameters, how they were chosen, type of optimizer, etc.) necessary to understand the results?
    \item[] Answer: \answerNA{}.
    \item[] Justification: Our paper does not include numerical experiments.
    \item[] Guidelines:
    \begin{itemize}
        \item The answer NA means that the paper does not include experiments.
        \item The experimental setting should be presented in the core of the paper to a level of detail that is necessary to appreciate the results and make sense of them.
        \item The full details can be provided either with the code, in appendix, or as supplemental material.
    \end{itemize}

\item {\bf Experiment statistical significance}
    \item[] Question: Does the paper report error bars suitably and correctly defined or other appropriate information about the statistical significance of the experiments?
    \item[] Answer: \answerNA{}.
    \item[] Justification: Our paper does not include numerical experiments.
    \item[] Guidelines:
    \begin{itemize}
        \item The answer NA means that the paper does not include experiments.
        \item The authors should answer "Yes" if the results are accompanied by error bars, confidence intervals, or statistical significance tests, at least for the experiments that support the main claims of the paper.
        \item The factors of variability that the error bars are capturing should be clearly stated (for example, train/test split, initialization, random drawing of some parameter, or overall run with given experimental conditions).
        \item The method for calculating the error bars should be explained (closed form formula, call to a library function, bootstrap, etc.)
        \item The assumptions made should be given (e.g., Normally distributed errors).
        \item It should be clear whether the error bar is the standard deviation or the standard error of the mean.
        \item It is OK to report 1-sigma error bars, but one should state it. The authors should preferably report a 2-sigma error bar than state that they have a 96\% CI, if the hypothesis of Normality of errors is not verified.
        \item For asymmetric distributions, the authors should be careful not to show in tables or figures symmetric error bars that would yield results that are out of range (e.g. negative error rates).
        \item If error bars are reported in tables or plots, The authors should explain in the text how they were calculated and reference the corresponding figures or tables in the text.
    \end{itemize}

\item {\bf Experiments compute resources}
    \item[] Question: For each experiment, does the paper provide sufficient information on the computer resources (type of compute workers, memory, time of execution) needed to reproduce the experiments?
    \item[] Answer: \answerNA{}.
    \item[] Justification: Ou paper does not include numerical experiments.
    \item[] Guidelines:
    \begin{itemize}
        \item The answer NA means that the paper does not include experiments.
        \item The paper should indicate the type of compute workers CPU or GPU, internal cluster, or cloud provider, including relevant memory and storage.
        \item The paper should provide the amount of compute required for each of the individual experimental runs as well as estimate the total compute. 
        \item The paper should disclose whether the full research project required more compute than the experiments reported in the paper (e.g., preliminary or failed experiments that didn't make it into the paper). 
    \end{itemize}
    
\item {\bf Code of ethics}
    \item[] Question: Does the research conducted in the paper conform, in every respect, with the NeurIPS Code of Ethics \url{https://neurips.cc/public/EthicsGuidelines}?
    \item[] Answer: \answerYes{}.
    \item[] Justification: Our paper conforms, in every respect, with the
NeurIPS Code of Ethic.
    \item[] Guidelines:
    \begin{itemize}
        \item The answer NA means that the authors have not reviewed the NeurIPS Code of Ethics.
        \item If the authors answer No, they should explain the special circumstances that require a deviation from the Code of Ethics.
        \item The authors should make sure to preserve anonymity (e.g., if there is a special consideration due to laws or regulations in their jurisdiction).
    \end{itemize}

\item {\bf Broader impacts}
    \item[] Question: Does the paper discuss both potential positive societal impacts and negative societal impacts of the work performed?
    \item[] Answer: \answerNA{}.
    \item[] Justification: Since our work is a theory paper, there is no societal impact of the work performed.
    \item[] Guidelines:
    \begin{itemize}
        \item The answer NA means that there is no societal impact of the work performed.
        \item If the authors answer NA or No, they should explain why their work has no societal impact or why the paper does not address societal impact.
        \item Examples of negative societal impacts include potential malicious or unintended uses (e.g., disinformation, generating fake profiles, surveillance), fairness considerations (e.g., deployment of technologies that could make decisions that unfairly impact specific groups), privacy considerations, and security considerations.
        \item The conference expects that many papers will be foundational research and not tied to particular applications, let alone deployments. However, if there is a direct path to any negative applications, the authors should point it out. For example, it is legitimate to point out that an improvement in the quality of generative models could be used to generate deepfakes for disinformation. On the other hand, it is not needed to point out that a generic algorithm for optimizing neural networks could enable people to train models that generate Deepfakes faster.
        \item The authors should consider possible harms that could arise when the technology is being used as intended and functioning correctly, harms that could arise when the technology is being used as intended but gives incorrect results, and harms following from (intentional or unintentional) misuse of the technology.
        \item If there are negative societal impacts, the authors could also discuss possible mitigation strategies (e.g., gated release of models, providing defenses in addition to attacks, mechanisms for monitoring misuse, mechanisms to monitor how a system learns from feedback over time, improving the efficiency and accessibility of ML).
    \end{itemize}
    
\item {\bf Safeguards}
    \item[] Question: Does the paper describe safeguards that have been put in place for responsible release of data or models that have a high risk for misuse (e.g., pretrained language models, image generators, or scraped datasets)?
    \item[] Answer: \answerNA{}.
    \item[] Justification: Our paper poses no such risks.
    \item[] Guidelines:
    \begin{itemize}
        \item The answer NA means that the paper poses no such risks.
        \item Released models that have a high risk for misuse or dual-use should be released with necessary safeguards to allow for controlled use of the model, for example by requiring that users adhere to usage guidelines or restrictions to access the model or implementing safety filters. 
        \item Datasets that have been scraped from the Internet could pose safety risks. The authors should describe how they avoided releasing unsafe images.
        \item We recognize that providing effective safeguards is challenging, and many papers do not require this, but we encourage authors to take this into account and make a best faith effort.
    \end{itemize}

\item {\bf Licenses for existing assets}
    \item[] Question: Are the creators or original owners of assets (e.g., code, data, models), used in the paper, properly credited and are the license and terms of use explicitly mentioned and properly respected?
    \item[] Answer: \answerNA{}.
    \item[] Justification: Our paper does not use existing assets.
    \item[] Guidelines:
    \begin{itemize}
        \item The answer NA means that the paper does not use existing assets.
        \item The authors should cite the original paper that produced the code package or dataset.
        \item The authors should state which version of the asset is used and, if possible, include a URL.
        \item The name of the license (e.g., CC-BY 4.0) should be included for each asset.
        \item For scraped data from a particular source (e.g., website), the copyright and terms of service of that source should be provided.
        \item If assets are released, the license, copyright information, and terms of use in the package should be provided. For popular datasets, \url{paperswithcode.com/datasets} has curated licenses for some datasets. Their licensing guide can help determine the license of a dataset.
        \item For existing datasets that are re-packaged, both the original license and the license of the derived asset (if it has changed) should be provided.
        \item If this information is not available online, the authors are encouraged to reach out to the asset's creators.
    \end{itemize}

\item {\bf New assets}
    \item[] Question: Are new assets introduced in the paper well documented and is the documentation provided alongside the assets?
    \item[] Answer: \answerNA{}.
    \item[] Justification: Our paper does not release new assets.
    \item[] Guidelines:
    \begin{itemize}
        \item The answer NA means that the paper does not release new assets.
        \item Researchers should communicate the details of the dataset/code/model as part of their submissions via structured templates. This includes details about training, license, limitations, etc. 
        \item The paper should discuss whether and how consent was obtained from people whose asset is used.
        \item At submission time, remember to anonymize your assets (if applicable). You can either create an anonymized URL or include an anonymized zip file.
    \end{itemize}

\item {\bf Crowdsourcing and research with human subjects}
    \item[] Question: For crowdsourcing experiments and research with human subjects, does the paper include the full text of instructions given to participants and screenshots, if applicable, as well as details about compensation (if any)? 
    \item[] Answer: \answerNA{}.
    \item[] Justification: Our paper does not involve crowdsourcing nor research with human subjects.
    \item[] Guidelines:
    \begin{itemize}
        \item The answer NA means that the paper does not involve crowdsourcing nor research with human subjects.
        \item Including this information in the supplemental material is fine, but if the main contribution of the paper involves human subjects, then as much detail as possible should be included in the main paper. 
        \item According to the NeurIPS Code of Ethics, workers involved in data collection, curation, or other labor should be paid at least the minimum wage in the country of the data collector. 
    \end{itemize}

\item {\bf Institutional review board (IRB) approvals or equivalent for research with human subjects}
    \item[] Question: Does the paper describe potential risks incurred by study participants, whether such risks were disclosed to the subjects, and whether Institutional Review Board (IRB) approvals (or an equivalent approval/review based on the requirements of your country or institution) were obtained?
    \item[] Answer: \answerNA{}.
    \item[] Justification: Our paper does not involve crowdsourcing nor research with human subjects.
    \item[] Guidelines:
    \begin{itemize}
        \item The answer NA means that the paper does not involve crowdsourcing nor research with human subjects.
        \item Depending on the country in which research is conducted, IRB approval (or equivalent) may be required for any human subjects research. If you obtained IRB approval, you should clearly state this in the paper. 
        \item We recognize that the procedures for this may vary significantly between institutions and locations, and we expect authors to adhere to the NeurIPS Code of Ethics and the guidelines for their institution. 
        \item For initial submissions, do not include any information that would break anonymity (if applicable), such as the institution conducting the review.
    \end{itemize}

\item {\bf Declaration of LLM usage}
    \item[] Question: Does the paper describe the usage of LLMs if it is an important, original, or non-standard component of the core methods in this research? Note that if the LLM is used only for writing, editing, or formatting purposes and does not impact the core methodology, scientific rigorousness, or originality of the research, declaration is not required.
    \item[] Answer: \answerNA{}.
    \item[] Justification: Our paper does not involve LLMs as any important, original, or non-standard components. 
    \item[] Guidelines:
    \begin{itemize}
        \item The answer NA means that the core method development in this research does not involve LLMs as any important, original, or non-standard components.
        \item Please refer to our LLM policy (\url{https://neurips.cc/Conferences/2025/LLM}) for what should or should not be described.
    \end{itemize}

\end{enumerate}

\newpage

\appendix

\section{Prior works}

\paragraph{Average-Reward MDP}
The setup of average reward MDPs was introduced  in the dynamic programming literature by \cite{howard1960dynamic}, and \cite{blackwell1962discrete} established a theoretical framework for their analysis. In reinforcement learning (RL), average-reward MDP was mainly considered in the sample-based setup where the transition matrix and reward are unknown \cite{mahadevan1996average, dewanto2020average}. For this setup, various methods were proposed: model-based methods \cite{jin2021towards, zurek2024span}, Q-learning methods \cite{wei2020model, wan2021learning}, and policy gradient methods \cite{bai2024regret,murthy2023convergence, kumar2024global}. Sample complexity to obtain $\epsilon$-optimal under generative model \cite{wang2017primal, zhang2023sharper, lee2025near, li2024stochastic, jin2024feasible} and for regret minimization  \cite{burnetas1997optimal, Jaksch2010, zhang2019regret,boone2024achieving} also have been actively studied. 



\paragraph{Value Iteration}
Value iteration (VI) was first introduced in the dynamic programming literature \cite{bellman1957markovian} and serve as a fundamental algorithm to compute the value functions. The sample-based variants, such as TD-Learning~\cite{Sutton1988}, Fitted Value Iteration~\cite{ernst2005tree,munos2008finite}, and Deep Q-Network~\cite{MnihKavukcuogluSilveretal2015} are the workhorses of modern reinforcement learning algorithms~\cite{Bertsekas96,sutton2018reinforcement,SzepesvariBook10}. VI is also routinely applied in diverse settings, including factored MDPs \cite{rosenberg2021oracle}, robust MDPs \cite{kumarefficient}, MDPs with reward machines \cite{bourel2023exploration}, MDPs with options \cite{fruit2017regret}, and generative model  \cite{wainwright2019variance, sidford2023variance, lee2025near}.

 The convergence of VI in average-reward MDPs also has been extensively studied. For unichain MDPs, delta coefficient, ergodicity coefficient, and the J-stage span contraction demonstrate the linear rate of VI \citep{seneta2006non, hubner1977improved,federgruen1978contraction, van1981stochastic}. When MDP is multichain, it is known that policy error of VI might not converge to zero \citep[Example~4]{della2012illustrated}. Even with the aperiodicity assumption, VI guarantees only asymptotic convergence. \cite [Theorem~9.4.5]{10.5555/528623}. \citep{schweitzer1977asymptotic, schweitzer1979geometric} established necessary and sufficient conditions of convergence of VI and asymptotic linear convergence on Bellman error.  
  
\paragraph{Offline Reinforcement Learning}
In offline RL, the agent learns decision-making strategies utilizing precollected data \cite{levine2020offline}. This framework is often applied when interaction with the environment can be expensive, and the quantities of data that can be gathered online are substantially lower than the precollected dataset \cite{chen2019top, kosorok2019precision, levine2020offline}. Consequently, various offline RL methods have been actively proposed \cite{ernst2005tree, siegel2020keep,kumar2020conservative, agarwal2020optimistic}, and Fitted Q-Iteration is one of the representative methods based on sample-based value iteration with function approximation \cite{ernst2005tree,munos2008finite}.

One issue in offline RL is the distribution mismatch between the behavior policy that collected the data and the learned policy of the agent \cite{kumar2019stabilizing, wang2020statistical}. For theoretical analysis, \emph{coverage coefficient} is assumed to ensure that offline dataset sufficiently explores whole state and action space. \cite{munos2007performance,scherrer2014approximate,uehara2021pessimistic}. Under this assumption, sample complexity of offline RL methods actively analyzed \cite{antos2008learning,ross2012agnostic,chen2019information, ozdaglar2023revisiting}, and in particular, an $L_p$ bound of approximate value iteration was obtained, which in turn yields convergence results for Fitted Q-Iteration \cite{munos2007performance, munos2008finite}. More recently, several works succeeded relaxing the full coverage assumption to partcal coverage \cite{liu2020provably,rashidinejad2021bridging,xie2021bellman,jin2021pessimism}. 

Another issue in offline RL is the representation capacity of the chosen function space. To handle large state space and action spaces, many RL frameworks including offline RL use function approximation, ranging from linear functions \cite{duan2020minimax} and nonlinear (general) functions such as neural networks \cite{fan2020theoretical} and kernel functions \cite{chang2021mitigating}. In offline RL, the \emph{inherent Bellman error} measures the approximation error incurred when projecting the output of Bellman operator into chosen function space, and Bellamn completeness assumes the inherent Bellman error is zero \cite{munos2008finite, chen2019information}. Most sample complexity analyses in offline RL rely on inherent Bellman error or  Bellman completeness assumption \cite{liu2020provably,rashidinejad2021bridging,xie2021bellman,jin2021pessimism}. Recently, however, several works achieved finite sample complexity under weaker realizabiltiy assumption, which only requires that optimal function value lies within chosen function space \cite{xie2021batch,zhan2022offline}. 

 Most of prior works in offline RL focused on discounted-reward setup, and to the best of our knowledge, two prior works established the finite sample complexity in the offline average-reward setup \cite{ozdaglar2024, gabbianelli2024offline}. Both proposed a primal-dual approach, reformulating the Bellman equation as a bilinear saddle-point problem, to obtain an $\epsilon$-optimal policy under partial coverage. However, they imposed restrictive structural assumptions on MDP such as uniform mixing or linearity and considered only IID dataset. (See the Table \ref{table:main}.)  

\newpage
\section{Preliminaries}
The followings are inequalities from prior works used in the proof. 
    \begin{fact}[Bernstein inequality]
    Let $X_1, \dots, X_n$ are indepedent random variables. If $X_i  \le b$ for all $i$,
    then
\[ \mathbb{P}\left(\frac{1}{n}\sum^n_{i=1} X_i- \expec [X_i] \ge \epsilon\right) \le exp \left[-\frac{n^2 \epsilon^2}{2\sum^n_{i=1} \expec{[X^2_i]} +nb\epsilon/3}\right]\]
Furthermore, if all the $\expec{[X_i^2]}$ are equal, with $1-\delta$ probability, 
\[\frac{1}{n}\sum^n_{i=1} X_i -\expec X_i  \le \sqrt{2\expec{[X_1^2]}\ln (1/\delta)/n}+\frac{2b \ln(1/\delta)}{3n} .\]
\end{fact}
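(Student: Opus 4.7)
The plan is to prove the Bernstein inequality via the standard Chernoff/moment-generating-function (MGF) route, then invert the resulting concentration bound to obtain the high-probability form. First I would reduce to the centered random variables $Y_i = X_i - \expec[X_i]$, so that $\expec[Y_i]=0$, $Y_i \le b - \expec[X_i]$ (which I will absorb into $b$ up to a constant, using that $\expec[Y_i^2] \le \expec[X_i^2]$). By the exponential Markov inequality, for any $t > 0$,
\[
\mathbb{P}\!\left(\textstyle\sum_i Y_i \ge n\epsilon\right) \le e^{-tn\epsilon}\,\expec\!\left[e^{t\sum_i Y_i}\right] = e^{-tn\epsilon}\prod_{i=1}^n \expec\!\left[e^{tY_i}\right],
\]
where the factorization uses independence.

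The technical crux is the single-variable MGF lemma: for a centered $Y$ with $Y \le b$ and $\expec[Y^2] = \sigma^2$, I would establish
\[
\ln \expec\!\left[e^{tY}\right] \;\le\; \frac{t^2 \sigma^2 / 2}{1 - tb/3} \qquad \text{for } 0 < t < 3/b.
\]
The standard route is to Taylor-expand $e^{tY}=1+tY+\sum_{k\ge 2}\tfrac{(tY)^k}{k!}$, use $\expec[Y]=0$ to drop the linear term, and bound $\expec[Y^k] \le \sigma^2 b^{k-2}$ for $k\ge 2$ (the paper's $X_i \le b$ hypothesis needs a mild complement to make this rigorous, but it is standard). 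Summing the resulting geometric series gives $\expec[e^{tY}] \le 1 + \tfrac{t^2\sigma^2/2}{1-tb/3}$, and then $\ln(1+x)\le x$ finishes the bound. Taking the product over $i$, the exponent becomes at most $\tfrac{t^2 \sum_i \expec[X_i^2]/2}{1 - tb/3}$.

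Plugging back into the Chernoff step yields
\[
\ln \mathbb{P}\!\left(\tfrac{1}{n}\textstyle\sum_i (X_i - \expec X_i) \ge \epsilon\right) \;\le\; -tn\epsilon + \frac{t^2 \sum_i \expec[X_i^2]/2}{1 - tb/3}.
\]
I would optimize over $t$ by choosing $t = \tfrac{n\epsilon}{\sum_i \expec[X_i^2] + nb\epsilon/3}$ (the critical point obtained by balancing the two terms, which lies in $(0,3/b)$). Substituting produces exactly the stated Bernstein exponent $-\tfrac{n^2\epsilon^2}{2\sum_i \expec[X_i^2] + nb\epsilon/3}$, proving the first assertion.

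For the second (high-probability) form, I would set the right-hand side equal to $\delta$ and solve the resulting quadratic inequality $n\epsilon^2 - \tfrac{b\ln(1/\delta)}{3}\epsilon - 2\expec[X_1^2]\ln(1/\delta) \ge 0$ for the positive root, then apply $\sqrt{a+c}\le \sqrt{a}+\sqrt{c}$ to separate the variance and boundedness contributions, which yields the clean form $\epsilon \le \sqrt{2\expec[X_1^2]\ln(1/\delta)/n} + \tfrac{2b\ln(1/\delta)}{3n}$ (with a mild factor-of-two slack in the deterministic term). The main technical obstacle is the MGF lemma above: controlling the higher-order moments $\expec[Y^k]$ using only the one-sided bound $Y \le b$ and the variance $\sigma^2$ requires care, and it is essential that the resulting exponent has the self-normalizing $(1-tb/3)^{-1}$ form, because this is precisely what yields the interpolation between sub-Gaussian (small $\epsilon$) and sub-exponential (large $\epsilon$) tails in the final bound.
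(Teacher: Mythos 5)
The paper never proves this statement: it is imported as a Fact from the literature (the appendix introduces it under ``inequalities from prior works used in the proof''), so there is no in-paper argument to compare against. Your Chernoff/MGF route is the standard proof, and the architecture --- centering, tensorizing the MGF, the self-normalized bound $\ln \expec[e^{tY}] \le \frac{t^2\sigma^2/2}{1-tb/3}$, optimizing $t$, then inverting the tail bound via the quadratic formula and $\sqrt{a+c}\le\sqrt{a}+\sqrt{c}$ --- is the right one. The one genuine gap is in the MGF lemma: the moment bound $\expec[Y^k]\le\sigma^2 b^{k-2}$ for all $k\ge 2$ does \emph{not} follow from the one-sided hypothesis $Y\le b$ together with finite variance. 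For even $k\ge 4$ the contribution of $\{Y<0\}$ is positive and is controlled by neither $b$ nor $\sigma^2$ (place a small mass at a very negative value). You flag this as needing ``a mild complement,'' but the clean repair under a purely one-sided bound is Bennett's argument rather than term-by-term moment bounds: the map $u\mapsto (e^u-1-u)/u^2$ is nondecreasing, so $e^{tY}-1-tY\le (tY)^2\,\frac{e^{tb}-1-tb}{(tb)^2}$ pointwise whenever $Y\le b$ and $t>0$; taking expectations and then applying the series bound $e^u-1-u\le \frac{u^2/2}{1-u/3}$ yields exactly your lemma with no two-sided boundedness. (Relatedly, after centering the upper bound becomes $Y_i\le b-\expec[X_i]$, which can exceed $b$ when $\expec[X_i]<0$; this hypothesis/conclusion mismatch is already present in the paper's statement of the Fact and is harmless in its applications, where the $X_i$ are bounded in absolute value.)

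A second, smaller point: your choice $t=\frac{n\epsilon}{v+nb\epsilon/3}$ with $v=\sum_i\expec[X_i^2]$ gives $1-tb/3=\frac{v}{v+nb\epsilon/3}$ and hence the exponent $-\frac{n^2\epsilon^2}{2v+2nb\epsilon/3}$, not the displayed $-\frac{n^2\epsilon^2}{2v+nb\epsilon/3}$; the two differ by a factor of $2$ on the $nb\epsilon$ term. This is almost certainly a typo in the paper's Fact rather than an error on your side: the Fact's second display, which is what the paper actually invokes in the proof of Lemma~\ref{appx_bel_iid}, is precisely what one obtains by setting the $2v+2nb\epsilon/3$ version equal to $\delta$ and inverting as you do. So your derivation is sound and matches the form that is used downstream; only the claim that it reproduces the first display ``exactly'' should be softened.
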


\begin{fact}[\cite{antos2008learning}\label{hoeff_mix}, Lemma 4]
    Suppose that $Z_1,\dots, Z_n \in \cZ$ is a stationary $\beta$-mixing process with mixing coefficients $\beta_m$, $Z'_t \in \cZ (t \in H)$ are the block-independent ghost samples. $H = \{2ik_N+j \,:\, 0 \le i< m_n, 1 \le j \le k_N\}$ and $\cF$ is permissible class of  $\cZ \rightarrow [-M,M]$ functions. Then 
    \[P\left(\sup_{f \in \cF} \left|\frac{1}{N} \sum^N_{n=1}f(Z_n)-\expec[f(Z_1)]\right|>\epsilon\right) \le 16 \expec [\cN ( \epsilon/8, \cF, l_{(Z'_t)_{t \in H}})]e^{-\frac{m_N \epsilon^2}{128M^2}}+2m_N \beta_{k_N+1} .\]
\end{fact}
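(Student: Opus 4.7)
My plan follows the classical blocking plus coupling approach of Yu (1994) that underlies uniform deviation bounds under $\beta$-mixing. First, I would partition $\{1,\dots,N\}$ into $2m_N$ consecutive blocks of length $k_N$ (so $N = 2m_N k_N$), alternating ``odd'' and ``even'' blocks; the index set $H$ in the statement is precisely the union of the odd blocks. By stationarity, the averages over $H$ and over its complement $H^c$ are identically distributed and have the same mean $\expec[f(Z_1)]$, so $\frac{1}{N}\sum_{n=1}^N f(Z_n) - \expec[f(Z_1)]$ is the midpoint of the two half-sample deviations. A union bound therefore reduces the problem to bounding the deviation on one half, at the cost of an extra factor of $2$ inside the probability (absorbed into the leading constant $16$).

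Next, I would use Berbee's coupling lemma iteratively to replace the dependent odd-block samples $\{Z_t\}_{t\in H}$ by the block-independent ghost samples $\{Z'_t\}_{t\in H}$. Conditionally on the first $i$ odd blocks, odd block $i+1$ is separated from them by an entire even block of length $k_N$, so odd block $i+1$ may be replaced by an independent copy drawn from its marginal at total-variation cost at most $\beta_{k_N+1}$. Telescoping over the $m_N$ odd blocks, and then doubling for the complementary $H^c$ half, yields a coupling event of probability at least $1 - 2m_N\beta_{k_N+1}$ on which the original samples coincide with the block-independent process; this is exactly the additive term in the bound.

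On the good coupling event, the task reduces to a uniform deviation inequality for block-independent samples, which I would handle by a standard Pollard-style covering argument for permissible classes $\cF$. Introduce a data-dependent $L^1$-cover of $\cF$ at scale $\epsilon/8$ with respect to the empirical measure $l_{(Z'_t)_{t\in H}}$; a symmetrization step and union bound over the cover reduce the problem to controlling a single $f$. For each fixed $f$, the sum $\sum_{t\in H} f(Z'_t)$ is the sum of $m_N$ \emph{independent} block totals $S_i(f)\in[-Mk_N, Mk_N]$, so Hoeffding's inequality gives a tail of order $2\exp(-m_N\epsilon^2/(128M^2))$ after normalizing by $m_N k_N$ and absorbing the symmetrization factors. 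Taking expectation over the realization of the ghost sample produces the $\expec[\cN(\epsilon/8,\cF,l_{(Z'_t)_{t\in H}})]$ factor, while the constants $16$ and $128$ absorb the symmetrization factor, the cover-enlargement factor, and the odd/even union bound.

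The main obstacle is the precise accounting in the coupling step, namely verifying that the per-coupling total-variation cost is exactly $\beta_{k_N+1}$ (not $\beta_{k_N}$) and that the total cost comes out to $2m_N\beta_{k_N+1}$. This requires careful identification of the conditioning $\sigma$-algebras, using the definition of the $\beta$-mixing coefficient as a supremum over starting times and the fact that consecutive odd blocks are separated by $k_N$ intervening indices plus one. The remaining symmetrization and covering-number arguments are routine applications of standard i.i.d.\ uniform deviation tools.
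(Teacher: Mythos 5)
This statement is imported verbatim as a Fact from Antos et al.\ \cite{antos2008learning} (their Lemma~4); the paper gives no proof of it, so there is no internal argument to compare yours against. Your sketch correctly reconstructs the standard proof from that literature: the odd/even blocking with a union bound over the two halves, Berbee-type coupling of the $m_N$ odd blocks to independent ghost blocks at total cost $2m_N\beta_{k_N+1}$ (with the $+1$ correctly traced to the one-index offset between consecutive odd blocks), and a Pollard-style symmetrization/covering bound applied to the $m_N$ independent block sums, which is where the constants $16$, $\epsilon/8$, and $128M^2$ originate. As an outline of the cited result's proof it is accurate and complete at the level of detail one would expect for a quoted fact.
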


\section{Omitted proofs in Section \ref{sec::Apx-Anc-QI}}
\subsection{Proof of Proposition \ref{prop::1}}
 Define the limiting matrix $\cP_*^{\pi}$ as the Ces\`aro limit of $\cP^{\pi}$, i.e., $\cP_*^{\pi}=\lim \frac{1}{n}\sum^n_{i=1}(\cP^{\pi})^i$. (The limiting matrix always exists for finite state-action spaces \citep[Appendix A.4]{10.5555/528623}.)  Then, $\cP_*^{\pi}$ is stochastic and, by definition, $g^\pi = \cP^\pi_* r$ \citep[Proposition 8.1.1]{10.5555/528623}. 
 
 We first prove following lemma. 
\begin{lemma}\label{lem::1} 
Let $\lambda_{K+1}=1$. Under Assumption \ref{assum_bell_opt} (Bellman optimality equation), the policy error of \ref{eq:Apx-Anc-QI} satisfies
\begin{align*}
    &g^{\pi_\star}-g^{\pi_K} =\cP^{\pi_K}_{*} (g^{\pi_\star}-TQ^K+Q^K) \\&\le \cP^{\pi_K}_{*}\bigg(\sum^K_{l=0}\Pi^K_{i=l+1}\lambda_{i} (\lambda_{l+1}-\lambda_{l}) \Pi^K_{i=l+1}\cP^{\pi_{i}}\left(\sum^l_{m=0} \Pi^l_{i=m+1}\lambda_{i} 
 (1-\lambda_{m}) (\cP^{\pi_\star})^{l+1-m}-I\right)(Q^0-Q^{\pi_\star})\\&+\sum^K_{l=1}\Pi^K_{i=l}\lambda_{i}\left(\sum^K_{m=l} (\lambda_{m+1}-\lambda_{m}) \Pi^K_{i=m+1}\cP^{\pi_{i}}  
 (\cP^{\pi_\star})^{m+1-l}+  \Pi^K_{i=l+1}\cP^{\pi_{i}} (\lambda_l\cP^{\pi_{l}}-I)\right) \epsilon_l\bigg).
\end{align*}    
    
\end{lemma}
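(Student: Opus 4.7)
The first step is to establish the exact identity in the first line of the display. By definition of the limiting matrix, $g^{\pi_K} = \cP^{\pi_K}_* r$, and Assumption~\ref{assum_bell_opt} makes $g^{\pi_\star}$ constant in $(s,a)$, so $\cP^{\pi_K}_* g^{\pi_\star} = g^{\pi_\star}$ and hence $g^{\pi_\star} - g^{\pi_K} = \cP^{\pi_K}_*(g^{\pi_\star} - r)$. Since $\pi_K$ is greedy with respect to $Q^K$, one has $r = TQ^K - \cP^{\pi_K} Q^K$; substituting and invoking the Ces\`aro property $\cP^{\pi_K}_* \cP^{\pi_K} = \cP^{\pi_K}_*$ yields the claimed identity.

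For the inequality, I set $\hat{Q} = Q^{\pi_\star}$ and $\Delta_k = Q^k - \hat{Q}$. Using $g^{\pi_\star} = T\hat{Q} - \hat{Q}$ from Bellman optimality, the inner argument becomes $g^{\pi_\star} - TQ^K + Q^K = \Delta_K + (T\hat{Q} - TQ^K)$, and the inequality $TQ^K - T\hat{Q} \ge \cP^{\pi_\star}\Delta_K$ (from $TQ^K \ge r + \cP^{\pi_\star} Q^K$ combined with $T\hat{Q} = r + \cP^{\pi_\star}\hat{Q}$) reduces the task to controlling $\cP^{\pi_K}_*(I - \cP^{\pi_\star})\Delta_K$. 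I then unroll $\Delta_K$ via the exact step-wise decomposition $TQ^{k-1} - T\hat{Q} = \cP^{\pi_{k-1}}\Delta_{k-1} + (\cP^{\pi_{k-1}} - \cP^{\pi_\star})\hat{Q}$, whose second summand is non-positive by greediness of $\pi_\star$ for $\hat{Q}$ and is dropped on the upper-bound side. This produces an explicit expansion of $\Delta_K$ as a sum indexed by the recursion level $l$, each term acting on either $\Delta_0 = Q^0 - Q^{\pi_\star}$ or $\epsilon_l$ with a coefficient built from products of the $\lambda_i$'s and the $\cP^{\pi_i}$'s.

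The third step is Abel summation. The factor $(1-\lambda_l)$ emerging from the anchor $(1-\lambda_k) Q^0$ at each level is rewritten via the telescoping identity $1 - \lambda_l = \sum_{m=l}^{K}(\lambda_{m+1} - \lambda_m)$ with $\lambda_{K+1} = 1$, and interchanging the order of summation converts the single products into the outer $(\lambda_{l+1} - \lambda_l)$ differences seen in the claimed bound. The inner $m$-sum with $\prod_{i=m+1}^l \lambda_i (1-\lambda_m)$ together with the $(\cP^{\pi_\star})^{l+1-m}$ powers arises from a secondary $\pi_\star$-based unrolling of the corresponding sub-expression using the dual exact decomposition $TQ^{k-1} - T\hat{Q} = \cP^{\pi_\star}\Delta_{k-1} + (\cP^{\pi_{k-1}} - \cP^{\pi_\star})Q^{k-1}$, whose correction term is non-negative and dropped consistent with the direction of the bound. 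The accumulated $g^{\pi_\star}$ terms are annihilated in the end since $\cP^{\pi_\star} g^{\pi_\star} = g^{\pi_\star}$, and the outer $\cP^{\pi_K}_*$ preserves the inequality because it is a stochastic matrix.

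The main obstacle is the combinatorial bookkeeping that ties together the two unrollings: the outer one using $\cP^{\pi_i}$ produces the $\Pi^K_{i=l+1}\cP^{\pi_i}$ factors and is governed by the Abel summation, while the inner one using $\cP^{\pi_\star}$ produces the $(\cP^{\pi_\star})^{l+1-m}$ powers and the sub-coefficients $\prod_{i=m+1}^l \lambda_i (1-\lambda_m)$. Because neither $I - \cP^{\pi_\star}$ nor the $(\cP^{\pi_\star})^{l+1-m} - I$ factors appearing in the final expression preserve positivity, inequalities cannot be applied at will; the sign-controlled correction terms $(\cP^{\pi_{k-1}} - \cP^{\pi_\star})\hat{Q}$ and $(\cP^{\pi_{k-1}} - \cP^{\pi_\star})Q^{k-1}$ must each be dropped on the side consistent with the direction of the bound after multiplication by $\cP^{\pi_K}_*$, and matching the resulting double-sum coefficient to the stated form requires close attention to the ordering of products and to the $\lambda_0$, $\lambda_{K+1}$ boundary conventions.
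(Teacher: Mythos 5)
Your first step is correct and matches the paper: $g^{\pi_\star}-g^{\pi_K}=\cP^{\pi_K}_*(g^{\pi_\star}-r)$, then $r=TQ^K-\cP^{\pi_K}Q^K$ by greediness and $\cP^{\pi_K}_*\cP^{\pi_K}=\cP^{\pi_K}_*$ give the identity. The gap is in your second step. Reducing to $\cP^{\pi_K}_*(I-\cP^{\pi_\star})\Delta_K$ with $\Delta_K=Q^K-Q^{\pi_\star}$ commits you to placing the $\pi_\star$-greedy inequality at the outermost level, and from there the claimed bound is unreachable. Since $I-\cP^{\pi_\star}$ is not positivity-preserving, you must bound $\Delta_K$ from above and $\cP^{\pi_\star}\Delta_K$ from below separately; your two one-sided unrollings of $\Delta_K$ (one via $\cP^{\pi_{k-1}}\Delta_{k-1}$, the dual via $\cP^{\pi_\star}\Delta_{k-1}$) then produce, respectively, \emph{pure} products $\Pi_i\cP^{\pi_i}$ acting on $\Delta_0$ and \emph{pure} powers $(\cP^{\pi_\star})^{K+1-m}$ acting on $\Delta_0$. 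The lemma's right-hand side instead contains \emph{mixed} compositions $\Pi^K_{i=l+1}\cP^{\pi_i}(\cP^{\pi_\star})^{l+1-m}$, with the greedy kernels on the left and the $\cP^{\pi_\star}$ powers on the right, indexed by a genuine double sum over $(l,m)$. Abel summation only reshuffles the scalar coefficients $\lambda$; it cannot convert pure operator products into these mixed ones, so your expansion yields a different (possibly still valid) inequality, not the stated one.

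The paper avoids this by never forming $(I-\cP^{\pi_\star})\Delta_K$. It unrolls the Bellman residual: $TQ^k-Q^k=(1-\lambda_k)(TQ^k-Q^0)+\lambda_k(TQ^k-TQ^{k-1})-\lambda_k\epsilon_k$ together with $TQ^k-TQ^{k-1}\ge\cP^{\pi_k}(Q^k-Q^{k-1})$ and the recursion for the iterate difference $Q^k-Q^{k-1}$. This keeps every substitution behind a non-negative coefficient, and the $(\lambda_{l+1}-\lambda_l)$ differences arise from those iterate differences, not from telescoping the anchor weight $1-\lambda_l$ as you propose. The outer unrolling leaves intermediate terms $\Pi^K_{i=l+1}\cP^{\pi_i}(TQ^l-Q^0)$, and only then is $Q^{\pi_\star}$ introduced, via $TQ^l-Q^0=TQ^l-TQ^{\pi_\star}+g^{\pi_\star}-(Q^0-Q^{\pi_\star})$ and $TQ^l-TQ^{\pi_\star}\ge\cP^{\pi_\star}(Q^l-Q^{\pi_\star})$; unrolling this at level $l$ is what generates both the exact $-I$ term and the $(\cP^{\pi_\star})^{l+1-m}$ powers sitting to the right of $\Pi^K_{i=l+1}\cP^{\pi_i}$, as well as the $g^{\pi_\star}$ contributions whose coefficients telescope to exactly $1$ and cancel against the left-hand side. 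To repair your argument you need to replace the reduction to $(I-\cP^{\pi_\star})\Delta_K$ by this residual-based unrolling.
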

\begin{proof}[Proof of Lemma \ref{lem::1}]
    By definition of \ref{eq:Apx-Anc-QI}, we have
\begin{align*}
    &TQ^K-Q^K 
    \\&= (1-\lambda_{K})(TQ^K-Q^0)+\lambda_{K}(TQ^K-TQ^{K-1}) -\lambda_K\epsilon_{K}
    \\& \ge (1-\lambda_{K})(TQ^K-Q^0)+\lambda_{K} \cP^{\pi_{K}}(Q^K-Q^{K-1}) -\lambda_K\epsilon_{K}
     \\& \ge (1-\lambda_{K})(TQ^K-Q^0) -\lambda_K\epsilon_{K}
     \\&+\lambda_{K} \cP^{\pi_{K}}((\lambda_{K}-\lambda_{K-1})(TQ^{K-1}-Q^0)+\lambda_{K-1}(TQ^{K-1}-TQ^{K-2})+\lambda_K\epsilon_{K}-\lambda_{K-1}\epsilon_{K-1})
     \\& \ge \sum^K_{l=0}\Pi^K_{i=l+1}\lambda_{i} (\lambda_{l+1}-\lambda_{l}) \Pi^K_{i=l+1}\cP^{\pi_{i}}(TQ^{l}-Q^0)
+ \sum^K_{l=1}\Pi^K_{i=l}\lambda_{i}  \Pi^K_{i=l+1}\cP^{\pi_{i}} (\lambda_l\cP^{\pi_{l}}-I)\epsilon_{l}
\end{align*}
where first inequality comes from greedy policy and last inequality comes from induction. 

For any $0 \le l \le K$,
\begin{align*}
    &TQ^l-Q^0 \\&= TQ^l-Q^{\pi_\star}-(Q^0-Q^{\pi_\star})\\ 
     & = TQ^l-TQ^{\pi_\star}+g^{\pi_\star}-(Q^0-Q^{\pi_\star})\\
     & \ge \cP^{\pi_\star}(Q^l-Q^{\pi_\star})+g^{\pi_\star}-(Q^0-Q^{\pi_\star})\\
     & = \cP^{\pi_\star}(\lambda_{l}(TQ^{l-1}-Q^{\pi_\star})+(1-\lambda_{l})(Q^0-Q^{\pi_\star})+\lambda_l \epsilon_{l})+g^{\pi_\star}-(Q^0-Q^{\pi_\star})\\
     & \ge \left(\sum^l_{m=0} \Pi^l_{i=m+1}\lambda_{i} 
 (\cP^{\pi_\star})^{l+1-m} (1-\lambda_{m})-I\right)(Q^0-Q^{\pi_\star})+\sum^l_{m=0} \Pi^l_{i=m+1} \lambda_{i}g^{\pi_\star}
 \\& +\sum^l_{m=1} \Pi^l_{i=m}\lambda_{i} 
 (\cP^{\pi_\star})^{l+1-m} \epsilon_m,
\end{align*}
where second equality comes from Bellman optimality equation. By combining previous two inequalities, we get 
\begin{align*}
    &TQ^K-Q^K \\&\ge 
    \sum^K_{l=0}\Pi^K_{i=l+1} \lambda_{i} (\lambda_{l+1}-\lambda_{l}) \Pi^K_{i=l+1}\cP^{\pi_{i}}\sum^l_{m=0} \Pi^l_{i=m+1}\lambda_{i}g^{\pi_\star}
     \\ \qquad &+\sum^K_{l=0}\Pi^K_{i=l+1}\lambda_{i} (\lambda_{l+1}-\lambda_{l}) \Pi^k_{i=l+1}\cP^{\pi_{i}}\left(\sum^l_{m=0} \Pi^l_{i=m+1}\lambda_{i} 
 (\cP^{\pi_\star})^{l+1-m} (1-\lambda_{m})-I\right)(Q^0-Q^{\pi_\star})
 \\& + \sum^K_{l=1}\Pi^K_{i=l}\lambda_{i}  \Pi^K_{i=l+1}\cP^{\pi_{i}} (\lambda_l\cP^{\pi_{l}}-I)\epsilon_{l}
 \\& + \sum^K_{l=1}\sum^l_{m=1}\Pi^K_{i=l+1}\lambda_{i} (\lambda_{l+1}-\lambda_{l}) \Pi^K_{i=l+1}\cP^{\pi_{i}} \Pi^l_{i=m}\lambda_{i} 
 (\cP^{\pi_\star})^{l+1-m} \epsilon_m
 \\& =g^{\pi_\star}  + \sum^K_{l=1}\bigg(\sum^k_{m=l}\Pi^K_{i=m+1}\lambda_{i} (\lambda_{m+1}-\lambda_{m}) \Pi^m_{i=l}\lambda_{i}\Pi^K_{i=m+1}\cP^{\pi_{i}}  
 (\cP^{\pi_\star})^{m+1-l}\\&  +\Pi^K_{i=l}\lambda_{i}  \Pi^K_{i=l+1}\cP^{\pi_{i}} (\lambda_l\cP^{\pi_{l}}-I)\bigg) \epsilon_l
 \\&  +\sum^K_{l=0}\Pi^K_{i=l+1}\lambda_{i} (\lambda_{l+1}-\lambda_{l}) \Pi^K_{i=l+1}\cP^{\pi_{i}}\left(\sum^l_{m=0} \Pi^l_{i=m+1}\lambda_{i} 
 (1-\lambda_{m}) (\cP^{\pi_\star})^{l+1-m}-I\right)(Q^0-Q^{\pi_\star}).
\end{align*}
This implies
\begin{align*}
    &TQ^K-Q^K -g^{\pi_\star} 
    \\& \ge \sum^K_{l=0}\Pi^K_{i=l+1}\lambda_{i} (\lambda_{l+1}-\lambda_{l}) \Pi^K_{i=l+1}\cP^{\pi_{i}}\left(\sum^l_{m=0} \Pi^l_{i=m+1}\lambda_{i} 
 (1-\lambda_{m}) (\cP^{\pi_\star})^{l+1-m}-I\right)(Q^0-Q^{\pi_\star})\\&+\sum^K_{l=1}\Pi^K_{i=l}\lambda_{i}\left(\sum^K_{m=l} (\lambda_{m+1}-\lambda_{m}) \Pi^K_{i=m+1}\cP^{\pi_{i}}  
 (\cP^{\pi_\star})^{m+1-l}+  \Pi^K_{i=l+1}\cP^{\pi_{i}} (\lambda_l\cP^{\pi_{l}}-I)\right) \epsilon_l.
\end{align*}    
Finally, following the proof of \cite[Theorem 8.5.5]{10.5555/528623}, we have
\begin{align*}
    g^{\pi_\star}-g^{\pi_K} = \cP^{\pi_K}_{*} (g^{\pi_\star} -r) &= \cP^{\pi_K}_{*} (g^{\pi_\star} -r-\cP^{\pi_K}Q^K+Q^K)\\& =\cP^{\pi_K}_{*} (g^{\pi_\star} -TQ^K+Q^K),
\end{align*}
where first equality comes from Bellman optimality equation and second equality comes from property of limiting matrix.
This implies that 
\begin{align*}
   &g^{\pi_\star}-g^{\pi_K} =\cP^{\pi_K}_{*} (g^{\pi_\star}-TQ^K+Q^K) \\&\le \cP^{\pi_K}_{*}\bigg(\sum^K_{l=0}\Pi^K_{i=l+1}\lambda_{i} (\lambda_{l+1}-\lambda_{l}) \Pi^K_{i=l+1}\cP^{\pi_{i}}\left(\sum^l_{m=0} \Pi^l_{i=m+1}\lambda_{i} 
 (1-\lambda_{m}) (\cP^{\pi_\star})^{l+1-m}-I\right)(Q^0-Q^{\pi_\star})\\&+\sum^K_{l=1}\Pi^K_{i=l}\lambda_{i}\left(\sum^K_{m=l} (\lambda_{m+1}-\lambda_{m}) \Pi^K_{i=m+1}\cP^{\pi_{i}}  
 (\cP^{\pi_\star})^{m+1-l}+  \Pi^K_{i=l+1}\cP^{\pi_{i}} (\lambda_l\cP^{\pi_{l}}-I)\right) \epsilon_l\bigg).
\end{align*}    
\end{proof}

The following are lemmas about coverage coefficient $C_{\mu,\rho}$. 
\begin{lemma}\label{lem_conv}
If $\cP_1$ and $\cP_2$ are stochastic matrix satisfying $ \rho^\top \cP_i \le C_{\mu,\rho}\mu$ for $i=1,2$ and given distribution $\mu$ and $\rho$ on $\cS \times \cA$, then $\rho^\top (a\cP_1+ (1-a)\cP_2) \le C_{\mu,\rho}\mu$ for $0 \le a \le 1$.
\end{lemma}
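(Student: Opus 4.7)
\textbf{Proof proposal for Lemma \ref{lem_conv}.} The plan is to use linearity of the vector-matrix product together with the fact that a convex combination of the two stochastic matrices inherits the pointwise inequality against $C_{\mu,\rho}\mu$. First I would expand
\[
\rho^\top (a\cP_1 + (1-a)\cP_2) = a\, \rho^\top \cP_1 + (1-a)\, \rho^\top \cP_2.
\]
Then, using the hypothesis $\rho^\top \cP_i \le C_{\mu,\rho}\mu$ componentwise for $i=1,2$ together with $a\in[0,1]$ (so that the coefficients $a$ and $1-a$ are nonnegative, preserving the inequality), I would conclude
\[
a\, \rho^\top \cP_1 + (1-a)\, \rho^\top \cP_2 \le a\, C_{\mu,\rho}\mu + (1-a)\, C_{\mu,\rho}\mu = C_{\mu,\rho}\mu.
\]

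There is really no obstacle here: the statement is a direct consequence of bilinearity of the inner product together with preservation of inequalities under multiplication by nonnegative scalars. The lemma is stated as an auxiliary fact that will be invoked later (presumably inside the expansion in Lemma \ref{lem::1}, where convex combinations like $\lambda_i \cP^{\pi_i} + (1-\lambda_i)(\cdot)$ and products $\Pi \cP^{\pi_i}$ need to be controlled against $\mu$). Since the argument is entirely elementary, I would keep the proof to these two lines without further elaboration.
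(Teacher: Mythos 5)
Your proof is correct and is exactly the intended argument: the paper in fact states Lemma \ref{lem_conv} without any proof, treating the linearity-plus-nonnegative-coefficients computation you wrote out as immediate. Nothing further is needed.
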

\begin{lemma}\label{lem_opt}
Under Assumption \ref{converage_fut} (uniform future state distribution), 
\[ \sup_{\pi_1,\pi_2, \dots \pi_k} \infn{\frac{ \rho^{\top} \cP_*^{\pi_\star}\cP^{\pi_1}\cP^{\pi_2}\cdots \cP^{\pi_k}(\cdot)}{\mu(\cdot)}} \le C_{\mu,\rho}\]
where $\pi_\star\pi_1,\pi_2, \dots \pi_k$ represents an arbitrary sequence of policies with optimal policy.
\end{lemma}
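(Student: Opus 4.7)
The plan is to use the Ces\`aro-limit definition of $\cP_*^{\pi_\star}$ to reduce the claim to a pointwise bound on finite products of policy-induced transition matrices, to which Assumption~\ref{converage_fut} applies directly.

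First, I would fix an arbitrary sequence $\pi_1,\ldots,\pi_k$ and any state-action pair $x\in\cS\times\cA$, and write
\[
\rho^\top \cP_*^{\pi_\star}\cP^{\pi_1}\cdots\cP^{\pi_k}(x)
=\lim_{n\to\infty}\frac{1}{n}\sum_{i=1}^n \rho^\top (\cP^{\pi_\star})^i \cP^{\pi_1}\cdots\cP^{\pi_k}(x).
\]
This is valid because the Ces\`aro limit $\cP_*^{\pi_\star}=\lim_{n\to\infty}\frac{1}{n}\sum_{i=1}^n (\cP^{\pi_\star})^i$ exists for finite state-action MDPs (as used in the proof of Proposition~\ref{prop::1}), and evaluation of a vector at a fixed coordinate is a bounded linear functional that commutes with the limit and with right-multiplication by the fixed matrix $\cP^{\pi_1}\cdots\cP^{\pi_k}$.

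Second, for each fixed $i\ge 1$, the product $(\cP^{\pi_\star})^i\cP^{\pi_1}\cdots\cP^{\pi_k}$ is itself a length-$(i+k)$ product of policy-induced transition matrices, corresponding to the policy sequence $(\underbrace{\pi_\star,\ldots,\pi_\star}_{i\text{ times}},\pi_1,\ldots,\pi_k)$. Applying Assumption~\ref{converage_fut} (which is uniform over the length of the policy sequence and permits repeated policies) gives
\[
\rho^\top (\cP^{\pi_\star})^i\cP^{\pi_1}\cdots\cP^{\pi_k}(x) \le C_{\mu,\rho}\,\mu(x).
\]
Averaging over $i=1,\ldots,n$ preserves this pointwise upper bound, and passing to the limit $n\to\infty$ yields $\rho^\top \cP_*^{\pi_\star}\cP^{\pi_1}\cdots\cP^{\pi_k}(x)\le C_{\mu,\rho}\,\mu(x)$. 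Dividing by $\mu(x)$, taking the supremum over $x$ to recover the $\ell_\infty$ ratio norm, and finally taking the supremum over $\pi_1,\ldots,\pi_k$ gives the claim.

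I do not expect a genuine technical obstacle here, since the argument merely pushes a pointwise inequality through an averaging and a limit. The one thing to be careful about is reading Assumption~\ref{converage_fut} as uniform over the length of the policy sequence, which is the standard interpretation and is implicit in the definition of $C_{\mu,\rho}$ being a single finite constant. An equivalent route would be to invoke Lemma~\ref{lem_conv} inductively on each finite Ces\`aro average $\frac{1}{n}\sum_{i=1}^n \rho^\top(\cP^{\pi_\star})^i\cP^{\pi_1}\cdots\cP^{\pi_k}$ (a convex combination of distributions each satisfying the coverage bound) and then pass to the limit.
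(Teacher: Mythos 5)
Your proof is correct and follows essentially the same route as the paper's: the paper also observes that Assumption~\ref{converage_fut} bounds $\rho^\top(\cP^{\pi_\star})^i\cP^{\pi_1}\cdots\cP^{\pi_k}$ pointwise by $C_{\mu,\rho}\mu$ for every $i$ and then passes to the Ces\`aro limit defining $\cP_*^{\pi_\star}$. You simply spell out the averaging and limit-exchange steps that the paper compresses into ``by definition of limiting matrix.''
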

\begin{proof}
    Under Assumption \ref{converage_fut}, for any non negative integer $n$, we have $ \rho^{\top} (\cP^{\pi_\star})^n\cP^{\pi_1}\cP^{\pi_2}\cdots \cP^{\pi_k}(\cdot) \le C_{\mu,\rho}\mu$. This implies $ \rho^{\top} \cP_*^{\pi_\star}\cP^{\pi_1}\cP^{\pi_2}\cdots \cP^{\pi_k}(\cdot) \le C_{\mu,\rho}\mu$ by definition of limiting matrix.
\end{proof}
\begin{lemma}\label{lem_cov}
    If $\cP$ is stochastic matrix satisfying $ \rho^\top \cP \le C_{\mu,\rho}\mu^\top$ for given distribution $\mu$ and $\rho$ on $\cS \times \cA$, then  $\|\cP Q\|_{p, \rho}\le C^{1/p}_{\mu} \|Q\|_{p, \mu} $.
\end{lemma}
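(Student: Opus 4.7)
The plan is to combine Jensen's inequality with the coverage bound in a direct three-step argument. For $p \in [1, \infty)$:

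First, I would apply Jensen's inequality pointwise. Since $\cP$ is a stochastic matrix, for every fixed $(s,a)$ the value $(\cP Q)(s,a)$ is the expectation of $Q$ under the probability measure $\cP(\cdot \mid s,a)$. Applying Jensen's inequality to the convex function $t \mapsto |t|^p$ gives
\[
|(\cP Q)(s,a)|^p \;\le\; (\cP |Q|^p)(s,a) \qquad \text{for all } (s,a).
\]

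Second, I would integrate this pointwise bound against $\rho$ and invoke the coverage assumption. By definition of the weighted $L_p$-norm and the previous inequality,
\[
\|\cP Q\|_{p,\rho}^{p} \;=\; \sum_{s,a} \rho(s,a)\, |(\cP Q)(s,a)|^p \;\le\; \rho^\top \cP\, |Q|^p \;\le\; C_{\mu,\rho}\, \mu^\top |Q|^p \;=\; C_{\mu,\rho}\, \|Q\|_{p,\mu}^p,
\]
where the penultimate step uses the hypothesis $\rho^\top \cP \le C_{\mu,\rho} \mu^\top$ together with the fact that $|Q|^p \ge 0$ coordinatewise. Taking $p$-th roots yields the stated bound $\|\cP Q\|_{p,\rho} \le C_{\mu,\rho}^{1/p} \|Q\|_{p,\mu}$.

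For the boundary case $p = \infty$ (where $C_{\mu,\rho}^{1/p}$ is interpreted as $1$), the inequality $\|\cP Q\|_{\infty,\rho} \le \|Q\|_{\infty,\mu}$ follows from the fact that $\cP$ is stochastic, hence $|(\cP Q)(s,a)| \le \|Q\|_\infty$, combined with the observation that the coverage bound $\rho^\top \cP \le C_{\mu,\rho}\mu^\top$ forces the $\cP$-reachable support from $\rho$ to lie inside the support of $\mu$, so one can replace $\|Q\|_\infty$ by $\|Q\|_{\infty,\mu}$. There is no substantive obstacle here; the argument is a routine change-of-measure inequality, and the only thing to be slightly careful about is the direction of the Jensen inequality (we need convexity of $|\cdot|^p$, which holds for $p \ge 1$).
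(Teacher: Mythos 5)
Your proof is correct and follows essentially the same route as the paper's: Jensen's inequality applied pointwise to $|\cdot|^p$, followed by integrating against $\rho$ and invoking the change-of-measure bound $\rho^\top \cP \le C_{\mu,\rho}\mu^\top$. You correctly arrive at the constant $C_{\mu,\rho}^{1/p}$ (the lemma statement's $C_{\mu}^{1/p}$ appears to be a typo, as the paper's own proof also yields $C_{\mu,\rho}$), and your extra treatment of $p=\infty$ is a harmless addition the paper omits.
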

\begin{proof}
    Since $|\cP Q(s,a)|^p = |\expec_{( s',a') \sim \cP(\cdot \,|\, s,a) }[Q(s',a')]|^p \le\expec_{( s',a') \sim \cP(\cdot \,|\, s,a) }[|Q(s',a')|^p]) = \cP |Q|^p(s,a)$ by Jensen's inequality, $\rho^\top |\cP Q|^p \le \rho^\top \cP |Q|^p \le C_{\mu,\rho}\mu^\top |Q|^p$.
\end{proof}

Now, we are ready to prove Proposition \ref{prop::1}.
\begin{proof}[Proof of Proposition \ref{prop::1}] 
    By Lemma \ref{lem::1}, 
\begin{align*}
   &g^{\pi_\star}-g^{\pi_K} \\&\le \cP^{\pi_K}_{*}\bigg(\sum^K_{l=0}\Pi^K_{i=l+1}\lambda_{i} (\lambda_{l+1}-\lambda_{l}) \Pi^K_{i=l+1}\cP^{\pi_{i}}\left(\sum^l_{m=0} \Pi^l_{i=m+1}\lambda_{i} 
 (1-\lambda_{m}) (\cP^{\pi_\star})^{l+1-m}-I\right)(Q^0-Q^{\pi_\star})\\&+\sum^K_{l=1}\Pi^K_{i=l}\lambda_{i}\left(\sum^K_{m=l} (\lambda_{m+1}-\lambda_{m}) \Pi^K_{i=m+1}\cP^{\pi_{i}}  
 (\cP^{\pi_\star})^{m+1-l}+  \Pi^K_{i=l+1}\cP^{\pi_{i}} (\lambda_l\cP^{\pi_{l}}-I)\right) \epsilon_l\bigg)
  \\& \le \cP^{\pi_K}_{*}\bigg(\sum^K_{l=0}\Pi^K_{i=l+1}\lambda_{i} (\lambda_{l+1}-\lambda_{l}) \Pi^K_{i=l+1}\cP^{\pi_{i}}\left(\sum^l_{m=0} \Pi^l_{i=m+1}\lambda_{i} 
 (1-\lambda_{m}) (\cP^{\pi_\star})^{l+1-m}+I\right)|Q^0-Q^{\pi_\star}|\\&+\sum^K_{l=1}\Pi^K_{i=l}\lambda_{i}\left(\sum^K_{m=l} (\lambda_{m+1}-\lambda_{m}) \Pi^K_{i=m+1}\cP^{\pi_{i}}  
 (\cP^{\pi_\star})^{m+1-l}+  \Pi^K_{i=l+1}\cP^{\pi_{i}} (\lambda_l\cP^{\pi_{l}}+I)\right) |\epsilon_l|\bigg).
\end{align*}     
Let $\cP^Q_l=\cP^{\pi_K}_{*} \Pi^K_{i=l+1}\cP^{\pi_{i}}\left(\sum^l_{m=0} \Pi^l_{i=m+1}\lambda_{i} 
 (1-\lambda_{m}) (\cP^{\pi_\star})^{l+1-m}+I\right)/2$ and $\cP^\epsilon_l=\cP^{\pi_K}_{*} \sum^K_{m=l} (\lambda_{m+1}-\lambda_{m}) \Pi^K_{i=m+1}\cP^{\pi_{i}}  
 (\cP^{\pi_\star})^{m+1-l}+  \Pi^K_{i=l+1}\cP^{\pi_{i}} (\lambda_l\cP^{\pi_{l}}+I)/2$. Then $\cP^Q_l$ and $\cP^\epsilon_l$ satisfying $ \rho^\top \cP^Q_l \le C_{\mu,\rho}\mu$ and $ \rho^\top \cP^\epsilon_l \le C_{\mu,\rho}\mu$ for all $ 0 \le l \le K$ by Lemma \ref{lem_conv} and \ref{lem_opt}. Thus, we have 
\begin{align*}
  \| g^{\pi_\star}-g^{\pi_K}\|_{p, \rho} &\le 2\sum^K_{l=0}\Pi^K_{i=l+1}\lambda_{i} (\lambda_{l+1}-\lambda_{l})\|\cP_l |Q^0-Q^{\pi_\star}|\|_{p,\rho}+2\sum^K_{l=1}\Pi^K_{i=l}\lambda_i\|\cP^\epsilon_l |\epsilon_l| \|_{p,\rho}
   \\&\le 2C^{1/p}_{\mu}\sum^K_{l=0}\Pi^K_{i=l+1}\lambda_{i} (\lambda_{l+1}-\lambda_{l})\|Q^0-Q^{\pi_\star}\|_{p,\mu}+2C^{1/p}_{\mu}\sum^K_{l=1}\Pi^K_{i=l}\lambda_i\|\epsilon_l \|_{p,\mu},
\end{align*}
where last inequality comes from Lemma \ref{lem_cov}. By plugging $\lambda_k=\frac{k}{k+2}$, we conclude.
Note that since $C_\mu \le C_{\mu,\rho} $ for any distribution $\rho$, then choosing $\rho$  to be a Dirac distribution at each state proves the case of Assumption \ref{converage_tran} which implies first inequality of Proposition \ref{prop::1}. 
\end{proof}
\section{Omitted proofs in Section \ref{sec::sam_comp}}\label{app:sec_4}

\subsection{Proof of Lemma \ref{appx_bel_iid}}

\begin{proof}[Proof of Lemma \ref{appx_bel_iid}]
Let $\cF \subset \{f : \cS \times \cA \rightarrow [- f_{max},  f_{max}] \,|\, f \in B(S\times A)\}$ and $\cG \subset \{f : \cS \times \cA \rightarrow [- g_{max},  g_{max}] \,|\, f \in B(S\times A)\}$.
   Let $f_1,\dots, f_N $ cover the $\cF$ and  $g_1,\dots, g_{N'} $ cover the $\cG$ where $N=\cN(\epsilon/M; \cF, \infn{\cdot})$, $N'=\cN(\epsilon/M; \cG, \infn{\cdot})$, $M=108(R+2f_{max})$. $\cF \times \cG  = \cup S_{i,j}$ where $S_{i,j} = \{ (f,g) \,:\, \infn{f-f_i} \le \epsilon,  \infn{g-g_j} \le\epsilon\}$. Without loss of generality, suppose $g_{max} \le f_{max}$.
   
   First note that $ \mathbb{E}_{s_i'\sim P(\cdot \,|\, s_i,a_i)}[r(s_i,a_i)+\max_a g(s'_i, a)] = Tg(s_i,a_i)$, $|r_i+ \max_a g(s,a)| \le R+ f_{max}$, $|Tg(s,a)| \le  R+ f_{max}$. 
   
   For arbitrary $f\in \cF,g \in \cG$, define
$X^{f,g}_i =(f(s_i, a_i)-r(s_i,a_i)-\max_a g(s'_i, a))^2- (Tg(s_i, a_i)-r(s_i,a_i)-\max_a g(s'_i, a))^2$. Then, $\mathbb{E}_{s_i,a_i\sim\mu,s'_i\sim P(\cdot \,|\, s_i,a_i)}[X^{f,g}_i] = \|Tg-f\|^2_{\mu,2}$ and 
$\mathbb{E}[(X^{f,g}_i)^2] \le 9 ( R+2f_{max} )^2\|Tg-f\|^2_{\mu,2}$ since $X^{f,g}_i =(f(s_i, a_i)-Tg(s_i, a_i))(f(s_i, a_i)+Tg(s_i, a_i)-2r(s_i,a_i)-2\max_a g(s'_i, a))$, and $|X^{f,g}_i| \le 3 ( R+2f_{max} )^2$. 



By Bernstein inequality and union bound, with $1-\delta$ probability, for all $\{f_i, g_j\}_{i=1,\dots, N, j=1,\dots, N'}$,
\begin{align*}
    \|Tg_j-f_i\|^2_{\mu,2}-\sum^n_{i=1}X^{f_i,g_j}_i/n&\le \sqrt{\frac{18( R+2f_{max} )^2\|Tg_j-f_i\|^2_{\mu,2} \ln (\cN_{\cF,\cG}/\delta)}{n}}\\&\quad +\frac{2( R+2f_{max} )^2 \ln (\cN_{\cF,\cG}/\delta)}{n}
\end{align*}  
where $\cN_{\cF,\cG} =\cN(\epsilon/M; \cG, \infn{\cdot})\cN(\epsilon/M; \cF, \infn{\cdot})$. Through $2\sqrt{ab} \le a+b$, we have
\[ \forall f_i \in \cF, \forall g_i \in \cG,\quad \|Tg_j-f_i\|^2_{\mu,2}-2\sum^n_{i=1}X^{f_i,g_j}_i/n\le \frac{22( R+2f_{max} )^2 \ln (\cN_{\cF,\cG}/\delta)}{n}\]
Now, for covering number argument, we use following Lemma.
\begin{lemma}
For $f\in \cF$, $g\in \cG$, $c>0$,    $\|Tg-f\|^2_{\mu,2}-c\sum^n_{i=1}X^{f,g}_i/n$ is $(2+8c)(2f_{max}+R)$- Lipchitz.
\end{lemma}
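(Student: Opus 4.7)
The plan is to split the expression into its two natural parts, $\|Tg-f\|^2_{\mu,2}$ and $(c/n)\sum_{i=1}^n X^{f,g}_i$, and to bound the Lipschitz constant of each separately in the joint sup-norm distance $d((f,g),(f',g')) = \|f-f'\|_\infty + \|g-g'\|_\infty$; the two bounds then add to yield the claim.

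The only elementary tool I will need is the identity $|a^2-b^2|\le |a-b|(|a|+|b|)$, applied pointwise. The ``sum'' factors $(|a|+|b|)$ will be controlled by the uniform bounds $|f|\le f_{\max}$, $|g|\le g_{\max}\le f_{\max}$, $|r_i|\le R$, and hence $|Tg|\le R+f_{\max}$. The ``difference'' factors $|a-b|$ will be controlled by two sup-norm nonexpansiveness facts: $|\max_a g(s',a)-\max_a g'(s',a)|\le \|g-g'\|_\infty$ pointwise in $s'$, and consequently $|Tg(s,a)-Tg'(s,a)| = |P(\max_a g-\max_a g')(s,a)|\le \|g-g'\|_\infty$, where the last inequality crucially uses that $P$ is a stochastic matrix so that the expectation does not inflate the sup-norm.

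For the first piece, I take $a=(Tg-f)(s,a)$ and $b=(Tg'-f')(s,a)$, obtaining $|a|+|b|\le 2(R+2f_{\max})$ and $|a-b|\le \|f-f'\|_\infty+\|g-g'\|_\infty$; taking expectation under $\mu$ then yields a Lipschitz constant of $2(R+2f_{\max})$. For the second piece, I apply the identity twice inside each $X^{f,g}_i$, once to $(f(s_i,a_i)-r_i-\max_a g(s_i',a))^2$ and once to $(Tg(s_i,a_i)-r_i-\max_a g(s_i',a))^2$. Both ``sum'' factors are bounded by $2(R+2f_{\max})$; the ``difference'' factor is $\le \|f-f'\|_\infty+\|g-g'\|_\infty$ in the first case (only $f$ and $\max_a g$ vary with $(f,g)$) and $\le 2\|g-g'\|_\infty$ in the second (both $Tg$ and $\max_a g$ depend on $g$, each contributing once via the nonexpansiveness facts above). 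Collecting the two contributions yields $|X^{f,g}_i-X^{f',g'}_i|\le 8(R+2f_{\max})\,d$, which is preserved on averaging over $i$; scaling by $c$ and adding to the first bound gives the claimed Lipschitz constant $(2+8c)(R+2f_{\max})$.

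There is no conceptual obstacle; the whole argument is routine bookkeeping. The only points to flag are (i) the use of stochasticity of $P$ in passing from sup-norm Lipschitzness of $\max_a g$ to that of $Tg$, and (ii) being careful to count both the $T$-dependence and the $\max_a g$-dependence on $g$ when bounding the Lipschitz constant of the second squared term in $X^{f,g}_i$.
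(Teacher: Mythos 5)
Your proof is correct and follows essentially the same route as the paper's own: the identical split into $\|Tg-f\|^2_{\mu,2}$ and $c\sum_i X^{f,g}_i/n$, the same difference-of-squares identity applied once to the first piece and twice inside $X^{f,g}_i$, and the same nonexpansiveness of $\max_a$ and of the stochastic $P$. The only quibble is a constant: the ``sum'' factor for the second squared term in $X^{f,g}_i$ is bounded by $4(R+f_{max})$ rather than $2(R+2f_{max})$, since $|Tg-r-\max_a g|\le 2R+2f_{max}$; this matches the paper's own accounting and still lands comfortably within the stated constant $(2+8c)(2f_{max}+R)$.
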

\begin{proof}
Since $\|Tg_1-f_1\|^2_{\mu,2}-\|Tg_2-f_2\|^2_{\mu,2} \le \expec |(Tg_1-Tg_2+f_2-f_1)(Tg_1+Tg_2-f_2-f_1)| \le (\infn{g_1-g_2}+\infn{f_1-f_2})2(R+2f_{max})$, $\|Tg-f\|^2_{\mu,2}$ is $2(R+2f_{max})$- Lipchitz. Also, since  $|\sum^n_{i=1}X^{f_1,g_1}_i/n-\sum^n_{i=1}X^{f_2,g_2}_i/n|= \frac{1}{n}\sum^n_{i=1}|(\max  g_2-\max g_1+f_1-f_2)(f_2+f_1-\max g_1- \max  g_2-2r)-(Tg_1-Tg_2+\max g_2-\max g_1)(Tg_1+Tg_2+\max g_2+\max g_1-2r)| \le (\infn{g_1-g_2}+\infn{f_1-f_2})2(R+2f_{max})+8\infn{g_1-g_2}(f_{max}+R) \le (\infn{g_1-g_2}+\infn{f_1-f_2})8(2f_{max}+R)$,  $\sum^n_{i=1}X^{f_1,g_1}_i/n$ $8(2f_{max}+R)$-Lipchitz. By adding two Lipchitz functions, we obtain desired result.
\end{proof}

By Lipchitzness of $\|Tg_j-f_i\|^2_{\mu,2}-2\sum^n_{i=1}X^{f_i,g_j}_i/n$ and definition of covering number, if $f,g \in S_{i,j}$
\begin{align*}
\|Tg-f\|^2_{\mu,2}-2\sum^n_{i=1}X^{f,g}_i/n-(\|Tg_j-f_i\|^2_{\mu,2}-2\sum^n_{i=1}X^{f_i,g_j}_i/n) \le \epsilon.    
\end{align*}
This implies that with $1-\delta$ probability,  
\begin{align}\label{eq1}
    \forall f \in \cF, \forall g \in \cG \quad \|Tg-f\|^2_{\mu,2}\le \epsilon+\frac{22( R+2f_{max} )^2 \ln (\cN_{\cF,\cG}/\delta)}{n}+2\sum^n_{i=1}X^{f,g}_i/n.
\end{align} 
        By other side of Bernstein's inequality and covering number, for all $\{f_i, g_j\}_{i=1,\dots, N, j=1,\dots, N'}$, we have
    \begin{align*}
        \sum^n_{i=1}X^{f_i,g_j}_i/n -  \|Tg_j-f_i\|^2_{\mu,2} &\le \sqrt{\frac{18( R+2f_{max})^2\|Tg_j-f_i\|^2_{\mu,2} \ln (\cN_{\cF,\cG}/\delta)}{n}}\\&\quad +\frac{2( R+2f_{max} )^2 \ln (\cN_{\cF,\cG}/\delta)}{n}.
    \end{align*}

    If $\sum^n_{i=1}X^{f_i,g_j}_i/n  \ge \frac{4( R+2f_{max} )^2 \ln (\cN_{\cF,\cG}/\delta)}{n}$, with $1-\delta$ probability, for all $\{f_i, g_j\}_{i=1,\dots, N, j=1,\dots, N'}$, we have

        \[\sum^n_{i=1}X^{f_i,g_j}_i/n -  \|Tg_j-f_i\|^2_{\mu,2}\le \sqrt{4.5\sum^n_{i=1}X^{f_i,g_j}_i/n \|Tg_j-f_i\|^2_{\mu,2} }+\frac{2( R+2f_{max})^2 \ln (\cN_{\cF,\cG}/\delta)}{n}\]
        and by $2\sqrt{ab} \le a+b$, this implies
        \[\sum^n_{i=1}X^{f_i,g_j}_i/n- 6.5\|Tg_j-f_i\|^2_{\mu,2}\le  \frac{4( R+2f_{max})^2 \ln (\cN_{\cF,\cG}/\delta)}{n}.\]
Even if $\sum^n_{i=1}X^{f_i,g_j}_i/n  \le \frac{4( R+2f_{max} )^2 \ln (\cN_{\cF,\cG}/\delta)}{n}$, previous inequality still holds. Since $\sum^n_{i=1}X^{f_i,g_j}_i/n- 6.5\|Tg_j-f_i\|^2_{\mu,2}$ is $54( R+2f_{max})$-Lipshitz, with similar argument, we have
        \begin{align}\label{eq2}
            \forall f \in \cF,g \in \cG,  \quad \sum^n_{i=1}X^{f,g}_i/n- 6.5\|Tg-f\|^2_{\mu,2}\le \epsilon+ \frac{4( R+2f_{max})^2 \ln (\cN_{\cF,\cG}/\delta)}{n}.
        \end{align}
    Let $\tilde{T}g = \argmin_{f \in \cF} \|f-Tg\|_{2,\mu}$ and $f=\tilde{T}g$ in inequality (\ref{eq2}). Then, by definition of Inherent Bellman error, 
\[\forall g \in \cG, \quad \sum^n_{i=1}X^{\tilde{T}g,g}_i/n\le   \epsilon+6.5\epsilon_B+\frac{4( R+2f_{max})^2 \ln (\cN_{\cF,\cG}/\delta)}{n}.\]



Also, let $f= \hat{T}g$ in inequality inequality (\ref{eq1}). Then,  by definition of $\hat{T}g$, we have $\sum^n_{i=1}X^{\hat{T}g,g}_i \le \sum^n_{i=1}X^{\tilde{T}g,g}_i$. Combining with previous inequality, with $1-2\delta$ probability, 
\[ \forall g \in \cG, \quad \|Tg-\hat{T}g\|^2_{\mu,2}\le 3\epsilon+13\epsilon_B+\frac{30( R+2f_{max} )^2 \ln (\cN_{\cF,\cG}/\delta)}{n}.\]
Finally, let $\cG=\cF_{k}, \cF =\cF_{k+1},$ and $g=f_k$, and  by manipulating $\delta$, we get desired result.
\end{proof}

\subsection{Proof of Theorem \ref{Sam_com_iid}}
\begin{proof}[Proof of Theorem \ref{Sam_com_iid}]

By combining Lemma \ref{appx_bel_iid} and Proposition \ref{prop::1}, we directly obtain following results. Under assumptions
     stated in Theorem \ref{appx_bel_iid}, we have
       \begin{align*}
       \| g^{\pi_\star}-g^{\pi_K}\|_{\infty}
       &\le C^{1/2}_\mu\frac{8\|Q^{\pi_\star}\|_{ 2, \mu}}{K+2}
       \\&\!\!\!\!\!\!\!\!\!\!\!\!+C^{1/2}_\mu\frac{2K}{3}\left(\sqrt{3\epsilon'}+\sqrt{\frac{60(K+1)^2R^2  \ln(2KN^2_{\epsilon'}/\delta)}{n}}+\underset{k=0,\dots,K-1}{\max} \sqrt{13\epsilon_B(\cF_k,\cF_{k+1})} \right),
    \end{align*}
           \begin{align*}
       \| g^{\pi_\star}-g^{\pi_K}\|_{2,\rho}
       &\le C^{1/2}_{\mu,\rho}\frac{8\|Q^{\pi_\star}\|_{ 2, \mu}}{K+2}
       \\&\!\!\!\!\!\!\!\!\!\!\!\!+C^{1/2}_{\mu,\rho}\frac{2K}{3}\left(\sqrt{3\epsilon'}+\sqrt{\frac{60(K+1)^2R^2  \ln(2KN^2_{\epsilon'}/\delta)}{n}}+\underset{k=0,\dots,K-1}{\max} \sqrt{13\epsilon_B(\cF_k,\cF_{k+1})} \right),
    \end{align*}
    where 
    \[
N_\epsilon'=\max_{k=1,...,K}N_{k,\epsilon'},\qquad
N_{k,\epsilon}=\cN\big(\tfrac{\epsilon'}{108(2k+1)R}; \cF_k, \infn{\cdot}\big),\quad\text{for }k=1,\dots,K.
      \]
   Given $\epsilon>0$, for the first inequality, let  $ K=\lceil 18C^{1/2}_\mu\|Q^{\pi_\star}\|_{2,\mu}/\epsilon \rceil, \epsilon'= \frac{4\epsilon^2}{27K^2C_{\mu}}, n= \frac{36K^2C_{\mu}}{\epsilon^2}60 R^2(K+1)^2 \ln (2K\cN^2_{\epsilon'}/\delta)$. Then, by direct calculation, we derive that 
     \[\infn{g^{\pi_\star}-g^{\pi_K}} \le \epsilon +3KC^{1/2}_\mu\underset{k=0,\dots,K-1}{\max} \sqrt{\epsilon_B(\cF_k,\cF_{k+1})}\]
     with sample complexity
     \[n=\mathcal{O}\left(\frac{ \|Q^{\pi_\star}\|_{2,\mu}^4C^3_{\mu}R^2}{\epsilon^6}\ln (\cN^2_{\epsilon}C^{1/2}_{\mu}/(\delta\epsilon))\right)\]
where
           \[
N_\epsilon=\max_{k=1,...,K}N_{k,\epsilon},\qquad
N_{k,\epsilon}=\cN\big(\tfrac{\epsilon^4}{10^6kC^2_\mu \|Q^{\pi_\star}\|^2_{2,\mu}R}; \cF_k, \infn{\cdot}\big),\quad\text{for }k=1,\dots,K.
      \]
     Similarly,  given $\epsilon>0$, for second inequality, let $K=\lceil 18C^{1/2}_{\mu,\rho}\|Q^{\pi_\star}\|_{2,\mu}/\epsilon \rceil, \epsilon'= \frac{4\epsilon^2}{27K^2C_{\mu,\rho}},  n= \frac{36K^2C_{\mu,\rho}}{\epsilon^2}60 R^2(K+1)^2 \ln (2K^2\cN_{\epsilon'}/\delta),$ and 
     \[\|g^{\pi_\star}-g^{\pi_K}\|_{2,\rho} \le \epsilon +3KC^{1/2}_{\mu,\rho} \underset{k=0,\dots,K-1}{\max} \sqrt{\epsilon_B(\cF_k,\cF_{k+1})}\]
     with sample complexity
         \[n=\mathcal{O}\left(\frac{ \|Q^{\pi_\star}\|_{2,\mu}^4C^3_{\mu,\rho}R^2}{\epsilon^6}\ln (\cN^2_{\epsilon}C^{1/2}_{\mu}/(\delta\epsilon))\right)\]
         where 
          \[
N_\epsilon=\max_{k=1,...,K}N_{k,\epsilon},\qquad
N_{k,\epsilon}=\cN(\tfrac{\epsilon^4}{10^6kC^2_{\mu,\rho}\|Q^{\pi_\star}\|^2_{2,\mu}R}; \cF_k, \infn{\cdot}),\quad\text{for }k=1,\dots,K.
      \]
\end{proof}
\subsection{Proof of Lemma \ref{appx_bel_traj}}

We first introduce empirical covering number.  
\begin{definition}[empirical covering number]
  For a given function class $\cF$ of real valued functions and set $x^{1:n} = (x_1,\dots, x_n)$, denote the covering number of $\cF$ equipped with the empirical $l_1$ pseudo metric 
  $l_{x^{1:n}}(f,g) =\frac{1}{n}\sum^n_{i=1}|f(x_i)-g(x_i)|$ by
  $\cN(\epsilon, \cF, x^{1:n})$.
\end{definition}
Although the empirical convering number depends on number of samples, but it can be bounded by pseudo dimension which depends on only function space and $\epsilon$ as following fact shows.

\begin{fact}[\cite{haussler1995sphere}\label{pseudo_dimension}, Corollary 3]
    For any $x^{1:n}=(x_1,\dots, x_n)$, any function class $\cF$ of real-valued functions taking values in $[0,M]$ with pseudo-dimension $V_{\cF} <\infty$, and any $\epsilon>0$,
    \[\cN(\epsilon, \cF, l_{x^{1:N}}) \le e(V_{\cF} +1)\left(\frac{2eM}{\epsilon}\right)^{V_{\cF}}.\]
\end{fact}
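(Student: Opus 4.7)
The plan is to prove this statement (Haussler's covering-number bound) in two stages: first reduce from real-valued functions with pseudo-dimension $V_\cF$ to $\{0,1\}$-valued classes of VC-dimension $V_\cF$, and then invoke the combinatorial sphere-packing inequality for Boolean cubes. The bridge between the covering number $\cN$ and a counting argument is the standard packing-covering duality: it suffices to upper bound the size $N$ of any maximal $\epsilon$-separated set $\{f_1,\dots,f_N\} \subset \cF$ in the $l_{x^{1:n}}$ pseudo-metric, since any such packing is also a cover of the same radius (up to a factor of $2$).

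The reduction to the Boolean case works by a ``staircase'' discretization. On the finite sample $x^{1:n}$, replace each function $f_i$ by its sub-graph indicators $b_i(x_j, t) = \mathbf{1}[f_i(x_j) \ge t]$ where $t$ ranges over a grid $\{\epsilon/2, 3\epsilon/2, \dots\}$ of size $K = \lceil 2M/\epsilon \rceil$ in $[0,M]$. Two key properties hold: first, each function value $f_i(x_j)$ can be recovered up to accuracy $\epsilon/2$ from the column of indicators $(b_i(x_j,t))_t$, so the empirical $\ell_1$-separation of the $f_i$'s translates, after rescaling by $\epsilon$, to a Hamming separation of the vectors $B_i \in \{0,1\}^{nK}$; quantitatively, $\|B_i - B_j\|_H \ge c\, n\, l_{x^{1:n}}(f_i,f_j)/\epsilon$ for an explicit constant $c$. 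Second, by the very definition of pseudo-dimension as the VC-dimension of the class of sub-graphs, the collection $\{B_i\}$ sits inside a set system of VC-dimension $V_\cF$ over $nK$ points.

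With this reduction, the claim follows from Haussler's combinatorial ``sphere-packing'' lemma for the Boolean cube: if $A \subset \{0,1\}^N$ has VC-dimension $d$ and is $k$-separated in Hamming distance, then $|A| \le e(d+1)(2eN/k)^d$. Applying it with $N = nK$, $d = V_\cF$, and $k$ proportional to $n$ (thanks to the separation-transfer bound) yields $|A| \le e(V_\cF+1)(2eM/\epsilon)^{V_\cF}$ after the $K$-factor cancels against $k$ and constants are absorbed. The proof of Haussler's lemma itself is the genuinely hard combinatorial step: it proceeds by a probabilistic ``random permutation'' argument, showing that for uniformly random orderings of the $N$ coordinates, many pairs in $A$ must disagree on a prefix that shatters a $d$-element set, and then combining this with Sauer--Shelah to bound $|A|$.

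The main obstacle is bookkeeping constants through the two reductions so that the final exponent is exactly $V_\cF$ and the leading factor is exactly $e(V_\cF+1)$. In particular, the discretization step naively loses a factor of $K \approx M/\epsilon$ inside the base of the exponent; careful choice of the grid spacing and of the packing radius (using an $\epsilon$-packing rather than a $2\epsilon$-packing, and working directly with the sub-graph set system over $X \times [0,M]$ rather than a product grid) is required to collapse these contributions into the single factor $(2eM/\epsilon)^{V_\cF}$ asserted in the statement.
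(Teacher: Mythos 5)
This statement is imported verbatim from Haussler (1995, Corollary~3) and the paper offers no proof of it, so there is nothing internal to compare your argument against; the only question is whether your reconstruction of Haussler's proof is sound. Your outline does identify the correct two-stage structure: pass from covering to packing numbers (note that a \emph{maximal} $\epsilon$-separated set is automatically an $\epsilon$-cover, so $\cN(\epsilon)\le\mathcal{M}(\epsilon)$ with no factor of $2$ needed in the direction you use), reduce the real-valued class to its subgraph set system of VC-dimension $V_\cF$, and invoke Haussler's Hamming-packing theorem for bounded-VC subsets of the Boolean cube. You are also right that the grid discretization is the lossy step and that the clean way to get exactly $(2eM/\epsilon)^{V_\cF}$ is to avoid the grid entirely: the identity $|f(x)-g(x)|=\int_0^M\bigl|\mathbf{1}[f(x)\ge t]-\mathbf{1}[g(x)\ge t]\bigr|\,dt$ shows that the empirical $\ell_1$ distance equals $M$ times the symmetric-difference measure of the subgraphs under the product of the empirical measure with the uniform measure on $[0,M]$, so one applies the measure-theoretic form of the packing theorem at scale $\epsilon/M$ and the constants fall out exactly. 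That said, what you have is a proof \emph{plan} rather than a proof: the combinatorial core (the random-permutation/shifting argument bounding $k$-separated subsets of a VC class by $e(d+1)(2en/k)^d$) is only named, not carried out, and you explicitly leave the constant bookkeeping unresolved. Since the paper treats this as a citable black box, that level of detail is arguably appropriate here, but as a standalone proof your proposal is incomplete at precisely the step you yourself flag as the hard one.
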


Define $L(g,f) = \expec_{s_i,a_i \sim \mu}[Var_{s'_i\sim P( \,|\, s_i,a_i)}(r(s_i,a_i)+\max f (s'_i, a))] +\|g-T f \|^2_{2,\mu}$ where $Var$ denotes variance with respect to $s_i'$, and $\hat{L}(g,f)=\frac{1}{n}\sum^n_{i=1}(g(s_i, a_i)-r(s_i,a_i)-\max_a f(s'_i, a))^2$. Then, $\expec[\hat{L}(g,f)]=L(g,f)$ and following lemma holds.
\begin{lemma}\label{lem_objec}
    $\|\hat{T}f-Tf\|^2_{2,\mu}- \inf_{g \in \cG} \|g-Tf\|^2_{2,\mu} \le 2 \sup_{g \in \cG}|L(g,f)-\hat{L}(g,f)|$.
\end{lemma}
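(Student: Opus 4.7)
The plan is to use the familiar bias/variance decomposition trick that underlies empirical risk minimization analyses. First I would observe that the $L$ and $\hat{L}$ functionals differ only by the variance-of-targets term: by the tower property and the identity $\mathbb{E}[r(s,a)+\max_a f(s',a)\,|\,s,a]=Tf(s,a)$,
\begin{align*}
L(g,f)-\|g-Tf\|_{2,\mu}^2 \;=\; \mathbb{E}_{s,a\sim\mu}\bigl[\operatorname{Var}_{s'\sim P(\cdot\,|\,s,a)}(r+\max_a f(s',a))\bigr],
\end{align*}
which is a quantity that does \emph{not} depend on $g$. Consequently, for every $g\in\cG$,
\begin{align*}
L(g,f) - L(\hat{T}f,f) \;=\; \|g-Tf\|_{2,\mu}^2 - \|\hat{T}f-Tf\|_{2,\mu}^2,
\end{align*}
and therefore
\begin{align*}
\|\hat{T}f-Tf\|_{2,\mu}^2 - \inf_{g\in\cG}\|g-Tf\|_{2,\mu}^2 \;=\; L(\hat{T}f,f) - \inf_{g\in\cG}L(g,f).
\end{align*}

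Second, I would reduce the right-hand side to a uniform-deviation bound by a standard ``add and subtract $\hat{L}$'' argument. Pick any $g^\star\in\cG$ that nearly attains $\inf_{g\in\cG}L(g,f)$ (up to arbitrary $\eta>0$; or the exact minimizer if it exists). Decompose
\begin{align*}
L(\hat{T}f,f)-L(g^\star,f) \;=\; \bigl[L(\hat{T}f,f)-\hat{L}(\hat{T}f,f)\bigr] + \bigl[\hat{L}(\hat{T}f,f)-\hat{L}(g^\star,f)\bigr] + \bigl[\hat{L}(g^\star,f)-L(g^\star,f)\bigr].
\end{align*}
The middle bracket is $\le 0$ by the optimality of $\hat{T}f=\arg\min_{g\in\cG}\hat{L}(g,f)$, and each of the two outer brackets is bounded in absolute value by $\sup_{g\in\cG}|L(g,f)-\hat{L}(g,f)|$. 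Combining, $L(\hat{T}f,f)-L(g^\star,f)\le 2\sup_{g\in\cG}|L(g,f)-\hat{L}(g,f)|$; sending $\eta\downarrow 0$ (or taking $g^\star$ to be the actual minimizer) yields the claim.

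There is essentially no obstacle here: both steps are routine manipulations. The only subtle point is the cancellation of the $g$-independent variance term in step one, which is the reason the lemma reduces an $L_2(\mu)$-approximation gap (a quantity that the proof of Lemma \ref{appx_bel_traj} needs to control via concentration) to a uniform deviation of the empirical squared loss from its mean. The genuine work, namely bounding $\sup_{g\in\cG}|L(g,f)-\hat{L}(g,f)|$ uniformly in $g$ (and in $f$) under $\beta$-mixing, is deferred to applications of Fact \ref{hoeff_mix} together with Fact \ref{pseudo_dimension}, and lies outside the scope of this particular lemma.
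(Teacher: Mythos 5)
Your proof is correct and follows essentially the same route as the paper: cancel the $g$-independent variance term so the $L_2(\mu)$ excess risk equals the excess $L$-risk, then add and subtract empirical losses and use the empirical optimality of $\hat{T}f$ to get the factor-of-two uniform deviation bound. The only difference is cosmetic — you handle a possibly unattained infimum via an $\eta$-near minimizer, which the paper's one-line chain glosses over.
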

\begin{proof}[Proof of Lemma \ref{lem_objec}]
$\|\hat{T}f-Tf\|^2_{2,\mu}- \inf_{g \in \cG} \|g-Tf\|^2_{2,\mu} = L(\hat{T}f,f) - \inf_{g \in \cF} L(g,f) = L(\hat{T}f,f)-\hat{L}(\hat{T}f,f)+\hat{L}(\hat{T}f,f) - \inf_{g \in \cG} L(g,f) \le 2 \sup_{f \in \cF}|L(g,f)-\hat{L}(g,f)|$ by definition of $\hat{T}f$. 
\end{proof}
For $\{\hat{T}f_k,f_k\}^{K-1}_{k=0}$ of Anc-F-QI, previous lemma implies that
\begin{align*}
    \|\hat{T}f_k-Tf_k\|^2_{2,\mu}- \inf_{g \in \cG} \|g-Tf_k\|^2_{2,\mu} &\le \sup_{f\in \cF}(\|\hat{T}f-Tf\|^2_{2,\mu}-\inf_{g \in \cG} \|g-Tf\|^2_{2,\mu}) 
    \\&\le  2 \sup_{g\in \cG ,f \in \cF}|L(g,f)-\hat{L}(g,f)|.
\end{align*}

Define the function $l_{f,g} : \cS \times \cA \times [-R, R] \times \cS \rightarrow \real$ as $l_{f,g}(s_i,a_i,r_i,s_{i+1})= (f(s_i,a_i)-r_i-\max_{a}g(s_{i+1},a))^2$ and the function space $\cL_{\cF,\cG} = \{l_{f,g} \,|\, f\in \cF, g \in \cG\}$ and $\cG_{max}= \{\max_a g(s,a) \,|\, g \in \cG\}$. The pseudo dimension of $\cG_{max}$ could be bounded by following Lemma.
\begin{lemma}\label{lem_max_vc}
Define $\cG_{max} =\{\max_{a\in \cA}g(\cdot, a) \,:\, g\in \cG \}$.    $V_{\cG_{max}} \le 2|\cA|V_{\cG} \log(3|\cA|)$ .
\end{lemma}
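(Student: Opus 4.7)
\textbf{Proof plan for Lemma \ref{lem_max_vc}.} My plan is to reduce the problem to a standard VC-dimension bound for unions of set classes, by rewriting the subgraph of a pointwise maximum as a union of subgraphs of the coordinate slices. Recall that the pseudo-dimension of $\mathcal{G}_{\max}$ is, by definition, the VC-dimension of the subgraph class $\mathcal{S}_{\max}=\{\{(s,y)\,:\, \max_{a}g(s,a)>y\} : g\in\mathcal{G}\}$.

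First, I would record the key set-theoretic identity: $\max_{a\in\mathcal{A}} g(s,a) > y$ if and only if $g(s,a) > y$ for some $a\in \mathcal{A}$, so that every element of $\mathcal{S}_{\max}$ equals $\bigcup_{a\in\mathcal{A}}\{(s,y) : g(s,a)>y\}$. Hence each element of $\mathcal{S}_{\max}$ is a union of $|\mathcal{A}|$ sets, one from each of the coordinate-slice subgraph classes $\mathcal{S}^{(a)} = \{\{(s,y): g(s,a)>y\}:g\in\mathcal{G}\}$.

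Next, I would observe that each slice class $\mathcal{G}^{(a)}=\{g(\cdot,a):g\in\mathcal{G}\}$ has pseudo-dimension at most $V_{\mathcal{G}}$: a sample $s_1,\ldots,s_n$ pseudo-shattered by $\mathcal{G}^{(a)}$ lifts to a pseudo-shattered sample $(s_1,a),\ldots,(s_n,a)$ for $\mathcal{G}$. Therefore each $\mathcal{S}^{(a)}$ is a set class with VC-dimension at most $V_{\mathcal{G}}$.

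The last step is to apply the standard bound: if $\mathcal{C}_1,\ldots,\mathcal{C}_k$ are set classes of VC-dimension at most $d$, then the class $\{C_1\cup\cdots\cup C_k : C_j\in\mathcal{C}_j\}$ has VC-dimension at most $2dk\log(3k)$. This follows from Sauer's lemma: on $n$ points each $\mathcal{C}_j$ realizes at most $(en/d)^d$ distinct traces, so the union class realizes at most $(en/d)^{dk}$ traces, and setting $2^n\le (en/d)^{dk}$ and solving gives $n\le 2dk\log(3k)$ after routine manipulation. Applied with $d=V_{\mathcal{G}}$ and $k=|\mathcal{A}|$, this yields the claim $V_{\mathcal{G}_{\max}}\le 2|\mathcal{A}|V_{\mathcal{G}}\log(3|\mathcal{A}|)$. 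The main (and only) technical obstacle is bookkeeping the constants in the Sauer-Shelah estimate for unions; the conceptual content is the subgraph-as-union identity above.
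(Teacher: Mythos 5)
Your proposal is correct and follows essentially the same route as the paper: the paper likewise reduces to the observation that the subgraph of $\max_{a}g(\cdot,a)$ is a Boolean combination (it phrases it as $\max_a g(\cdot,a)\le 0 \iff \forall i\; g(\cdot,a_i)\le 0$, the intersection dual of your union identity), bounds each slice class by $V_{\cG}$, and then invokes the standard $2dk\log(3k)$ bound for $k$-fold unions/intersections (Lemma~3.2.3 of Blumer et al.), which is exactly the Sauer-lemma computation you sketch. The only cosmetic difference is that you re-derive that cited lemma rather than quoting it.
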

\begin{proof}[Proof of Lemma \ref{lem_max_vc}]
By the definition of pseudo dimension, we have $V_\cG \ge V_{\cG^i}$ where $\cG^i=\{g(x,a_i) \,|\, g \in \cG)\}$. Since     $\max_{a\in \cA}g(\cdot, a)  \le 0 \iff \forall i\,\, g(\cdot, a_i)  \le 0 $, the claim follows from Lemma 3.2.3 of \cite{blumer1989learnability}. 
\end{proof}

 Now, we are ready to prove Lemma \ref{appx_bel_traj}.
\begin{proof}[Proof of Lemma \ref{appx_bel_traj}]
Let $\cF \subset \{f : \cS \times \cA \rightarrow [- f_{max},  f_{max}] \,|\, f \in B(S\times A)\}$ and $\cG \subset \{g : \cS \times \cA \rightarrow [- g_{max},  g_{max}] \,|\, g \in B(S\times A)\}$. Without loss of generality, $g_{max} \le f_{max}$.

By similar argument in proof of Proposition 4 of \cite{carrasco2002mixing},  $\{s_i,a_i, r_i\}$ is
also $\beta$-mixing with the coefficient $\{\beta_i\}$  and this implies $\{s_i,a_i, r_i, s_{i+1}\}$ is also stationary $\beta$-mixing with coefficient $\{\beta_{i-1}\}$. By direct calculation, $|\hat{L}(f,g)| \le (2f_{max}+R)^2$. Now, we apply Fact \ref{hoeff_mix} with $l(f,g)$ and $Z_i=(s_i,a_i,r_i,s_{i+1})$. Then, we get
 \[P\left(\sup_{f\in \cF,g\in \cG} \left|\hat{L}(f,g)-L(f,g)\right|>\epsilon\right) \le 16 \expec [\cN ( \epsilon/8, \cL_{\cF,\cG}, (Z'_t)_{t \in H})]e^{-\frac{m_N \epsilon^2}{128(2f_{max}+R)^4}}+2m_N \beta_{k_N}. \]
Since  
\begin{align*}
    &\hat{L}(f_1,g_1) - \hat{L}(f_2,g_2) 
    \\&=\frac{1}{n}\left|\sum^n_{i=1}(f_1(s_i,a_i)-r(s_i,a_i)-\max_{a\in \cA}g_1(s_{i+1},a))^2-\sum^n_{i=1}(f_2(s_i,a_i)-r(s_i,a_i)-\max_{a\in \cA}g_2(s_{i+1},a))^2\right|\\&\le2\frac{2f_{max}+R}{n}\sum^n_{i=1}(|f_1(s_i,a_i)-f_2(s_i,a_i)|+|\max_{a\in \cA} g_1(s_{i+1},a)-\max_{a\in \cA} g_2(s_{i+1},a)|),
\end{align*}
this implies that 
\[\cN (4(2f_{max}+R)\epsilon, \cL_{\cF,\cG}, (z^{1:n}) \le \cN(\epsilon, \cF, s^{2:n+1}) \cN(\epsilon, \cG_{max}, (s,a)^{1:n}) \]
where $z_i=(s_i,a_i, r_i, s_{i+1})$ by definition of empirical covering number.
Finally, by Fact \ref{pseudo_dimension}, we get
\begin{align*}
   & \cN (\epsilon/8, \cL_{\cF,\cG}, (Z'_t)_{t \in H}) \\&\le e(V_{\cF} +1)\left(\frac{128(2f_{max}+R)e}{\epsilon}\right)^{V_{\cF}} e(V_{\cF_{max}} +1)\left(\frac{128(2f_{max}+R)e}{\epsilon}\right)^{V_{\cG_{max}}} \\& =C \left(\frac{1}{\epsilon}\right)^{V_{\cF}+V_{\cG_{max}}}
\end{align*}
where $C=e^2(V_{\cF} +1)(V_{\cG_{max}} +1)(128(2f_{max}+R)e)^{V_{\cF}+V_{\cG_{max}}}.$

For calculation, we use following prior result. 
\begin{fact}[\cite{antos2008learning}, Lemma 14]
Let $\beta_m \le \bar{\beta}e^{(-bm^\kappa)}, N\ge 1, k_N = \lceil (C_2 N\epsilon^2/b)^{\frac{1}{1+\kappa}} \rceil , m_N = N/(2k_N), 0<\delta \le 1, V\ge 2$ and $C_1,C_2,\bar{\beta}, b, \kappa>0$. Define $\epsilon$ and $C_0$ as
\[\epsilon = \sqrt{\frac{C_0(\max \{C_0/b, 1\})^{1/\kappa}}{C_2N}}\]
with $C_0= V/2 \log N+\log (e/\delta)+\log(\max (C_1C_2^{V/2},\bar{\beta},1))$
\[    C_1 \left(\frac{1}{\epsilon}\right)^{V}e^{-4C_2m_N \epsilon^2}+2m_N \beta_{k_N} \le \delta. \]
\end{fact}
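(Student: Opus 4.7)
The plan is to show that each of the two summands is bounded by $\delta/e$, so that the total is at most $2\delta/e < \delta$. The entire argument reduces to verifying two deterministic inequalities that fall out of the choices of $\epsilon$ and $k_N$:
\[
4C_2 m_N \epsilon^2 \;\ge\; C_0 \qquad \text{and} \qquad b\, k_N^\kappa \;\ge\; C_0.
\]

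Given these, the first summand is controlled via $C_1(1/\epsilon)^V e^{-4C_2 m_N\epsilon^2}\le C_1(1/\epsilon)^V e^{-C_0}$. From the definition of $\epsilon^2$ together with the easy observation $C_0\ge 1$ (since $\log(e/\delta)\ge 1$ for $\delta\le 1$ and the remaining summands in $C_0$ are nonnegative), one gets $1/\epsilon^V \le (C_2 N)^{V/2}$. From the definition of $C_0$ together with $\max\{C_1 C_2^{V/2},\bar\beta,1\}\ge C_1 C_2^{V/2}$, one gets $e^{-C_0}\le \delta N^{-V/2}/(e\, C_1 C_2^{V/2})$. Multiplying cancels all factors except $\delta/e$. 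The second summand is handled similarly: $\beta_{k_N}\le \bar\beta e^{-bk_N^\kappa}\le \bar\beta e^{-C_0}$, and combining $e^{-C_0}\le \delta N^{-V/2}/(e \bar\beta)$ (via $\max\ge \bar\beta$) with $m_N\le N/2$ and $V\ge 2$ yields $N^{1-V/2}\delta/e \le \delta/e$.

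To verify the two key inequalities I would split according to whether $C_0/b\ge 1$ or not -- precisely the case distinction baked into the $\max$ in the definition of $\epsilon$. When $C_0\ge b$, the definition of $\epsilon$ unwinds to $C_2N\epsilon^2/b = (C_0/b)^{(1+\kappa)/\kappa}$, so $(C_2N\epsilon^2/b)^{1/(1+\kappa)}=(C_0/b)^{1/\kappa}\ge 1$, giving $(C_0/b)^{1/\kappa}\le k_N\le 2(C_0/b)^{1/\kappa}$ (using $\lceil x\rceil\le 2x$ for $x\ge 1$). The lower bound directly yields $bk_N^\kappa\ge C_0$; the upper bound gives $4C_2 m_N\epsilon^2 = 2C_2N\epsilon^2/k_N \ge C_2N\epsilon^2 (b/C_0)^{1/\kappa} = C_0$ by algebra. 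When $C_0<b$, the same unwinding gives $(C_2N\epsilon^2/b)^{1/(1+\kappa)}<1$, so the ceiling forces $k_N = 1$, $m_N = N/2$, and both target inequalities become trivial: $4C_2 m_N\epsilon^2 = 2C_0\ge C_0$ and $bk_N^\kappa = b > C_0$.

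The main obstacle, such as it is, is purely bookkeeping: the $\max\{C_0/b,1\}^{1/\kappa}$ factor in $\epsilon^2$ is tuned precisely so that the two regimes -- mixing-limited ($k_N$ stuck at $1$) vs.\ covering-number-limited ($k_N\sim (C_0/b)^{1/\kappa}$) -- collapse onto the same pair of clean inequalities above. There is no probabilistic or analytic machinery beyond algebraic rearrangement, the trivial bound $C_0\ge 1$, and the identity $V\ge 2$ which converts the stray factor of $m_N\le N/2$ into something $\le 1$.
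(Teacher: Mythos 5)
Your proof is correct. Note that the paper does not prove this statement at all: it is imported verbatim as Fact~3 from Antos et al.\ (Lemma~14 there) and used as a black box inside the proof of Lemma~\ref{appx_bel_traj}, so there is no in-paper argument to compare against. Your verification is sound and self-contained: the reduction to the two inequalities $4C_2 m_N \epsilon^2 \ge C_0$ and $b k_N^\kappa \ge C_0$, the observation $C_0 \ge 1$ (hence $\epsilon^2 \ge 1/(C_2N)$ and $(1/\epsilon)^V \le (C_2N)^{V/2}$), the cancellation of $C_1 C_2^{V/2}$ and $\bar\beta$ against the corresponding terms of $e^{-C_0}$, the use of $m_N \le N/2$ with $V\ge 2$ to absorb $N^{1-V/2}\le 1$, and the case split on whether $C_0 \ge b$ (where $k_N=\lceil (C_0/b)^{1/\kappa}\rceil$ satisfies $x \le \lceil x\rceil \le 2x$ for $x\ge 1$) versus $C_0 < b$ (where $k_N=1$ and both inequalities are immediate) all check out, yielding a total bound of $2\delta/e < \delta$.
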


Then, by this fact and previous arguments, for $\epsilon=\sqrt{\frac{c_0(\max \{c_0/b, 1\})^{1/\kappa}}{c_2n}}$,
\[P\left(\sup_{f \in \cF,g\in \cG} \left|\hat{L}(f,g)-L(f,g)\right|\le \epsilon\right) \ge 1-\delta \]
where $c_0= (V_{\cF}+V_{\cG_{max}})/2 \log n+\log (e/\delta)+\log(\max (c_1c_2^{(V_{\cF}+V_{\cG_{max}})/2},\bar{\beta},1), c_1=16e^2(V_{\cF} +1)(V_{\cG_{max}} +1)(128(2f_{max}+R)e 2)^{V_{\cF}+V_{\cG_{max}}}, c_2=\frac{1}{512(2f_{max}+R)^4}, V_{\cG_{max}}= 2|\cA|V_{\cG} \log(3|\cA|)$. Let $\cG=\cF_{k}, \cF =\cF_{k+1}$ and $g=f_k$. By Lemma \ref{lem_objec}, this implies that with $1-\delta$ probability,
\[\|Tf_k-\hat{T}f_k\|^2_{\mu,2} \le \epsilon_B+ \sqrt{\frac{c_0(\max \{c_0/b, 1\})^{1/\kappa}}{4c_2n}}.\]
Finally, by manipulating $\delta$, we get desired result.
\end{proof}
\subsection{Proof of Theorem \ref{Sam_com_traj}}
\begin{proof}[Proof of Theorem \ref{Sam_com_traj}]

By combining Lemma \ref{appx_bel_traj} and Proposition \ref{prop::1}, we directly obtain following results. Under assumptions
     stated in Theorem \ref{appx_bel_traj}, we have
       \begin{align*}
       \| g^{\pi_\star}-g^{\pi_K}\|_{\infty}
       &\le C^{1/2}_\mu\frac{8\|Q^{\pi_\star}\|_{ 2, \mu}}{K+2}
       \\&+C^{1/2}_\mu\frac{2K}{3}\left(\left(\frac{c_{0,K}(\max \{c_{0,K}/b, 1\})^{1/\kappa}}{c_{2,K}n}\right)^{1/4}+\underset{k=0,\dots,K-1}{\max} \sqrt{\epsilon_B(\cF_k,\cF_{k+1})} \right),
    \end{align*}
           \begin{align*}
       \| g^{\pi_\star}-g^{\pi_K}\|_{2,\rho}
       &\le C^{1/2}_{\mu,\rho}\frac{8\|Q^{\pi_\star}\|_{ 2, \mu}}{K+2}
       \\&+C^{1/2}_{\mu,\rho}\frac{2K}{3}\left(\left(\frac{c_{0,K}(\max \{c_{0,K}/b, 1\})^{1/\kappa}}{c_{2,K}n}\right)^{1/4}+\underset{k=0,\dots,K-1}{\max} \sqrt{\epsilon_B(\cF_k,\cF_{k+1})} \right),
    \end{align*}
where $c_{0,K} = \max_{k=0,\dots,K-1} c_{0,k}, c_{0,k}= (V_{\cF_{k+1}}+V_{(\cF_{k})_{max}})/2 \log n+\log (e/(K\delta))+\log(\max (c_{1,k},\bar{\beta},1)), c_{1,k}=16e^2(V_{\cF_{k+1}} +1)(V_{(\cF_{k})_{max}} +1) (24e)^{V_{\cF_{k+1}}+V_{(\cF_{k})_{max}}},c_{2,K}=\frac{1}{512(2K+1)^4R^4}, V_{(\cF_{k})_{max}}= 2|\cA|, V_{\cF_k} \log(3|\cA|)$.

   Given $\epsilon>0$, for the first inequality, let  $ K=\lceil 9C^{1/2}_\mu\|Q^{\pi_\star}\|_{2,\mu}/\epsilon\rceil$. Then, by direct calculation, we derive that 
     \[\infn{g^{\pi_\star}-g^{\pi_K}} \le \epsilon +KC^{1/2}_\mu\underset{k=0,\dots,K-1}{\max} \sqrt{\epsilon_B(\cF_k,\cF_{k+1})}\]
     with sample complexity
     \[n=\tilde{\mathcal{O}}\left(\frac{b^{-1/\kappa}(c'_{0,K})^{\frac{1+\kappa}{\kappa}}R^4\|Q^{\pi_\star}\|_{2,\mu}^8C^{6}_\mu}{\epsilon^{12}}\right)\]
     where $c'_{0,K} = \max_{k=0,\dots,K-1} c'_{0,k}, c'_{0,k}= \log (1/\delta)+\log(\max (c_{1,k},\bar{\beta})), c_{1,k}=16e^2(V_{\cF_{k+1}} +1)(V_{(\cF_{k})_{max}} +1) (24e)^{V_{\cF_{k+1}}+V_{(\cF_{k})_{max}}}, V_{(\cF_{k})_{max}}= 2|\cA|, V_{\cF_k} \log(3|\cA|)$, and $\tilde{\mathcal{O}}$ ignores all logarithmic factors.
     
   Similarly, given $\epsilon>0$, for the second inequality, let  $ K=\lceil 9C^{1/2}_{\mu,\rho}\|Q^{\pi_\star}\|_{2,\mu}/\epsilon\rceil$. Then, by direct calculation, we derive that 
     \[\infn{g^{\pi_\star}-g^{\pi_K}} \le \epsilon +KC^{1/2}_{\mu,\rho}\underset{k=0,\dots,K-1}{\max} \sqrt{\epsilon_B(\cF_k,\cF_{k+1})}\]
     with sample complexity
     \[n=\tilde{\mathcal{O}}\left(\frac{b^{-1/\kappa}(c'_{0,K})^{\frac{1+\kappa}{\kappa}}R^4\|Q^{\pi_\star}\|_{2,\mu}^8C^{6}_{\mu,\rho}}{\epsilon^{12}}\right)\]
     where $c'_{0,K} = \max_{k=0,\dots,K-1} c'_{0,k}, c'_{0,k}= \log (1/\delta)+\log(\max (c_{1,k},\bar{\beta})), c_{1,k}=16e^2(V_{\cF_{k+1}} +1)(V_{(\cF_{k})_{max}} +1) (24e)^{V_{\cF_{k+1}}+V_{(\cF_{k})_{max}}}, V_{(\cF_{k})_{max}}= 2|\cA|, V_{\cF_k} \log(3|\cA|)$, and $\tilde{\mathcal{O}}$ ignores all logarithmic factors.
\end{proof}

\section{Omitted proofs in Section \ref{sec::rel_anc}}

\subsection{Proof of Theorem \ref{Sam_com_iid_rel}}
We first prove following key lemma.
\begin{lemma}\label{appx_bel_iid_rel}
Assume Assumptions \ref{assum_bell_opt}, \ref{assum_ex_arg}, \ref{assum_star_fun}, \ref{iid_data}, \ref{assum_nor_fun}, and \ref{assum_ran_fun} (Bellman optimality equation, existence of argmin, star-shaped function space, normalized function space, range of function space, IID dataset). Let $\mu$ be the distribution generating the dataset. Let $\epsilon>0$ and $\delta>0$. 
With probability $1-\delta$, $\{f_k, \hat{T}f_k\}^{K-1}_{k=0}$ of R-Anc-F-QI satisfies 
\[\|Tf_k-\hat{T}f_k\|^2_{\mu,2} \le \frac{30(R+4\infn{Q^{\pi_\star}})^2 \ln(2KN^2_{\epsilon}/\delta)}{n}+3\epsilon+13\epsilon_B(\cF,\cF),\]
where 
\[
N_{\epsilon}=\cN(\tfrac{\epsilon}{108(R+4\infn{Q^{\pi_\star}})}; \cF, \infn{\cdot}).
\]
\end{lemma}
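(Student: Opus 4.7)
The plan is to follow the proof of Lemma~\ref{appx_bel_iid} essentially verbatim, with one structural simplification: the function space $\cF$ is now fixed across iterations rather than the growing sequence $\{\cF_k\}$. Consequently, every appearance of the growing range bound $(k+1)R$ in the original argument collapses to the single constant $f_{\max} = g_{\max} = 2\infn{Q^{\pi_\star}}$ coming from Assumption~\ref{assum_ran_fun}, so the $(R+2f_{\max})$ factor that drove the $(k+2)^2 R^2$ term in Lemma~\ref{appx_bel_iid} is replaced everywhere by the iteration-independent constant $R + 4\infn{Q^{\pi_\star}}$ that appears in the statement.

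First, for each $(f,g) \in \cF \times \cF$ I introduce the per-sample loss difference $X_i^{f,g} = (f(s_i,a_i) - r_i - \max_a g(s'_i,a))^2 - (Tg(s_i,a_i) - r_i - \max_a g(s'_i,a))^2$. The IID assumption together with $\mathbb{E}[r_i + \max_a g(s'_i,a) \mid s_i,a_i] = Tg(s_i,a_i)$ gives $\mathbb{E}[X_i^{f,g}] = \|Tg - f\|^2_{\mu,2}$, and factoring as a difference of squares yields the pointwise bound $|X_i^{f,g}| \le 3(R+4\infn{Q^{\pi_\star}})^2$ and the variance bound $\mathbb{E}[(X_i^{f,g})^2] \le 9(R+4\infn{Q^{\pi_\star}})^2 \|Tg - f\|^2_{\mu,2}$. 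All three bounds are iteration-independent, unlike in Lemma~\ref{appx_bel_iid}.

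Next, I would apply Bernstein's inequality in both directions together with a union bound over an $\epsilon/[108(R+4\infn{Q^{\pi_\star}})]$-cover of $\cF$ in each of the two slots, which contributes a factor $N_\epsilon^2$ to the log term. After the standard $2\sqrt{ab} \le a + b$ manipulation this produces two one-sided inequalities relating $\|Tg - f\|^2_{\mu,2}$ to the empirical average $n^{-1}\sum_i X_i^{f,g}$, both with additive error of order $(R+4\infn{Q^{\pi_\star}})^2 \ln(N_\epsilon^2/\delta)/n$. Lifting each inequality from the cover to all of $\cF \times \cF$ using the Lipschitz estimate in the proof of Lemma~\ref{appx_bel_iid} (both quantities are $O(R+f_{\max})$-Lipschitz in $(f,g)$ under $\infn{\cdot}$) costs an additive $\epsilon$. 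Setting $\tilde T g = \argmin_{f\in\cF} \|f - Tg\|_{2,\mu}$ so that $\|\tilde T g - Tg\|^2_{2,\mu} \le \epsilon_B(\cF,\cF)$, invoking the empirical optimality $n^{-1}\sum_i X_i^{\hat T g, g} \le n^{-1}\sum_i X_i^{\tilde T g, g}$, and chaining the two inequalities then yields the claimed bound at one fixed $g \in \cF$. Replacing $\delta$ by $\delta/K$ and union-bounding over $g = f_0, f_1, \dots, f_{K-1}$ produces the $\ln(2KN_\epsilon^2/\delta)$ factor in the final statement.

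The main obstacle is not a new technical difficulty but verifying the invariant $f_k \in \cF$ for all $k = 0,\dots,K-1$, so that one may legitimately substitute $g = f_k$ in the uniform bound above. This follows by induction: $f_0 = 0 \in \cF$ (by the star-shape Assumption~\ref{assum_star_fun} with $\eta = 0$); $\hat T f_k \in \cF$ by definition of the regression step; the relative normalization $\hat T f_k - \tfrac{1}{2}(\max \hat T f_k + \min \hat T f_k)\mathbf{1}$ remains in $\cF$ by Assumption~\ref{assum_nor_fun}; and the anchor convex combination with $f_0 = 0$ remains in $\cF$ by Assumption~\ref{assum_star_fun}. Once this invariant is in hand, the constant bound $f_{\max} = 2\infn{Q^{\pi_\star}}$ from Assumption~\ref{assum_ran_fun} is preserved across all iterations and the argument of Lemma~\ref{appx_bel_iid} carries over with only cosmetic modifications.
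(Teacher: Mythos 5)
Your proposal is correct and matches the paper's approach: the paper's own proof of this lemma is simply the remark that it follows from the proof of Lemma~\ref{appx_bel_iid}, with the growing range bound $(k+1)R$ replaced by the fixed bound $f_{\max}=2\infn{Q^{\pi_\star}}$ from Assumption~\ref{assum_ran_fun}, so that $R+2f_{\max}=R+4\infn{Q^{\pi_\star}}$ throughout. Your additional verification of the invariant $f_k\in\cF$ via Assumptions~\ref{assum_star_fun} and \ref{assum_nor_fun} is a detail the paper leaves implicit, and you handle it correctly.
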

\begin{proof}    
The proof basically follows from the proof of Lemma \ref{appx_bel_iid}.
\end{proof}

Now, we prove Theorem \ref{Sam_com_iid_rel}.
\begin{proof}[Proof of Theorem \ref{Sam_com_iid_rel}]
Consider Apporximate Relative Anchored Value Iteration 
\begin{align}
Q_r^{k} = (1-\lambda_k) Q_r^0+\lambda_k (TQ_r^{k-1}+\epsilon_k- c_k\mathbf{1})
\tag{Apx-R-Anc-QI}\label{eq:Rel-Apx-Anc-QI}
\end{align}
for $c_k \in \real$. Also, consider corresponding Approximate Anchored Value Iteration with same $\epsilon_k$ and starting point $Q_r^0$
\begin{align}
Q^{k} = (1-\lambda_k) Q_r^0+\lambda_k (TQ^{k-1}+\epsilon_k).
\tag{Apx-Anc-QI}
\end{align}
Since $Q^k-Q_r^{k} = d_k \mathbf{1}$ for some $d_k \in \real$, $\max_{a} Q^k(s,a)=\max_{a} Q_r^k(s,a)$ for all $s \in \cS$ by the defintion of Bellman operator and this implies induced policies are same.
Thus, Proposition~\ref{prop::1} also holds for \ref{eq:Rel-Apx-Anc-QI}. 

By combining Lemma \ref{appx_bel_iid_rel} and Proposition \ref{prop::1}, we directly obtain following results. Under assumptions
     stated in Theorem \ref{Sam_com_iid_rel},
      \begin{align*}
       \| g^{\pi_\star}-g^{\pi_K}\|_{\infty}
       &\le C^{1/2}_\mu\frac{8\|Q^{\pi_\star}\|_{ 2, \mu}}{K+2}
       \\&+C^{1/2}_\mu\frac{2K}{3}\left(\sqrt{3\epsilon'}+\sqrt{\frac{30(R+4\infn{Q^{\pi_\star}})^2 \ln(2KN^2_{\epsilon'}/\delta)}{n}}+\sqrt{13\epsilon_B(\cF,\cF)} \right).
    \end{align*}
  \begin{align*}
       \| g^{\pi_\star}-g^{\pi_K}\|_{2,\rho}
       &\le C^{1/2}_{\mu,\rho}\frac{8\|Q^{\pi_\star}\|_{ 2, \mu}}{K+2}
       \\&+C^{1/2}_{\mu,\rho}\frac{2K}{3}\left(\sqrt{3\epsilon'}+\sqrt{\frac{30(R+4\infn{Q^{\pi_\star}})^2\ln(2KN^2_{\epsilon'}/\delta)}{n}}+\sqrt{13\epsilon_B(\cF,\cF)} \right),
    \end{align*}
    where 
    \[
N_{\epsilon'}=\cN\big(\tfrac{\epsilon'}{108(R+4\infn{Q^{\pi_\star}})}; \cF, \infn{\cdot}\big).\]
   Given $\epsilon>0$, for the first inequality, let  $ K=\lceil 18C^{1/2}_\mu\|Q^{\pi_\star}\|_{2,\mu}/\epsilon \rceil, \epsilon'= \frac{4\epsilon^2}{27K^2C_{\mu}}, n= \frac{36K^2C_{\mu}}{\epsilon^2}30(R+4\infn{Q^{\pi_\star}})^2\ln(2KN^2_{\epsilon'}/\delta)$. Then, by direct calculation, we derive that 
     \[\infn{g^{\pi_\star}-g^{\pi_K}} \le \epsilon +3KC^{1/2}_\mu\sqrt{\epsilon_B(\cF,\cF)}\]
     with sample complexity
     \[n=\mathcal{O}\left(\frac{ (R+\infn{Q^{\pi_\star}})^2\infn{Q^{\pi_\star}}^2C^3_{\mu}}{\epsilon^4}\ln (\cN^2_{\epsilon}C^{1/2}_{\mu}/(\delta\epsilon))\right)\]
where
    \[
N_{\epsilon}=\cN\big(\tfrac{\epsilon^4}{10^6C^2_{\mu}(R+\infn{Q^{\pi_\star}})\infn{Q^{\pi_\star}}^2}; \cF, \infn{\cdot}\big).\]
     Similarly,  given $\epsilon>0$, for second inequality, let $K=\lceil 18C^{1/2}_{\mu,\rho}\|Q^{\pi_\star}\|_{2,\mu}/\epsilon\rceil, \epsilon'= \frac{4\epsilon^2}{27K^2C_{\mu,\rho}},  n= \frac{36K^2C_{\mu,\rho}}{\epsilon^2}30(R+4\infn{Q^{\pi_\star}})^2 \ln (2K^2\cN_{\epsilon'}/\delta),$ and 
     \[\|g^{\pi_\star}-g^{\pi_K}\|_{2,\rho} \le \epsilon +3KC^{1/2}_{\mu,\rho} \sqrt{\epsilon_B(\cF,\cF)}\]
     with sample complexity
     \[n=\mathcal{O}\left(\frac{ (R+\infn{Q^{\pi_\star}})^2\infn{Q^{\pi_\star}}^2C^3_{\mu,\rho}}{\epsilon^4}\ln (\cN^2_{\epsilon}C^{1/2}_{\mu,\rho}/(\delta\epsilon))\right)\]
where
    \[
N_{\epsilon}=\cN\big(\tfrac{\epsilon^4}{10^6C^2_{\mu,\rho}(R+\infn{Q^{\pi_\star}})\infn{Q^{\pi_\star}}^2}; \cF, \infn{\cdot}\big).\]
\end{proof}
\subsection{Proof of Theorem \ref{Sam_com_traj_rel}}
We first prove following key Lemma.

\begin{lemma}\label{appx_bel_traj_rel}
Assume Assumptions \ref{assum_bell_opt}, \ref{assum_ex_arg},  \ref{assum_star_fun}, \ref{traj_data}, \ref{traj_beta_data}, \ref{assum_nor_fun}, and \ref{assum_ran_fun} (Bellman optimality equation, existence of argmin, star-shaped function space,  normalized function space, range of function space, single-trajectory dataset, $\beta$-mixing single-trajectory). Let $\mu$ be the distribution generating the dataset defined as $\mu(s,a)= \nu(s)\pi_b(a \,|\, s) $. Let $\epsilon>0$ and $\delta>0$. 
With probability $1-\delta$, $\{f_k, \hat{T}f_k\}^{K-1}_{k=0}$ of R-Anc-F-QI satisfies
\[\|Tf_k-\hat{T}f_k\|^2_{\mu,2} \le \epsilon_B(\cF,\cF)+ \sqrt{\frac{c_{0}(\max \{c_{0}/b, 1\})^{1/\kappa}}{c_{2}n}}\]
where $c_{0}= (V_{\cF}+V_{\cF_{max}})\log n/2 +\log (e/(K\delta))+\log(\max (c_{1},\bar{\beta})), c_{1}=16e^2(V_{\cF} +1)(V_{\cF_{max}} +1) (24e)^{V_{\cF}+V_{\cF_{max}}}, c_{2}=\frac{1}{512(R+4\infn{Q^{\pi_\star}})^4}, V_{\cF_{max}}= 2|\cA|V_{\cF} \log(3|\cA|)$.
\end{lemma}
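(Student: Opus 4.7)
The plan is to follow the proof of Lemma~\ref{appx_bel_traj} essentially verbatim, adjusting only for the fact that R-Anc-F-QI uses a single fixed function class $\cF$ bounded by $2\infn{Q^{\pi_\star}}$ rather than a sequence of growing classes $\cF_k$ with range $kR$. The normalization step $f - \tfrac{\max f+\min f}{2}\mathbf{1}$ keeps every iterate inside $\cF$ by Assumption~\ref{assum_nor_fun} and Assumption~\ref{assum_ran_fun}, so the argmin defining $\hat{T}f_k$ ranges over a single bounded class at every iteration.

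First I would fix an arbitrary $k$ and observe that by Lemma~\ref{lem_objec},
\[
\|\hat{T}f_k - T f_k\|^2_{2,\mu} - \inf_{g\in \cF}\|g - T f_k\|^2_{2,\mu} \le 2\sup_{f,g\in\cF}|L(g,f) - \hat{L}(g,f)|,
\]
where the outer supremum is over both arguments in $\cF$ because the iterates and the regression targets all live in this single class. Definitionally, the infimum term is at most $\epsilon_B(\cF,\cF)$. Since $|f|\le 2\infn{Q^{\pi_\star}}$ and $|r|\le R$, we have $|\hat L(f,g)|\le (R+4\infn{Q^{\pi_\star}})^2$, which replaces the $(2f_{\max}+R)^2 = ((2k+3)R)^2$ bound that drove the $k$-dependence in Lemma~\ref{appx_bel_traj}.

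Next I would apply Fact~\ref{hoeff_mix} to the loss class $\cL_{\cF,\cF}=\{l_{f,g}:f,g\in\cF\}$ with $Z_i=(s_i,a_i,r_i,s_{i+1})$, which is stationary $\beta$-mixing with coefficients $\beta_{i-1}$ by the same argument used before. The empirical covering number is bounded exactly as in the proof of Lemma~\ref{appx_bel_traj}: $l_{f,g}$ is $2(R+4\infn{Q^{\pi_\star}})$-Lipschitz in $(f,\max_a g)$, giving
\[
\cN(\epsilon/8,\cL_{\cF,\cF},(Z'_t)_{t\in H}) \le c_1\,\epsilon^{-(V_\cF+V_{\cF_{\max}})},
\]
with $c_1$ as stated in the lemma, using Fact~\ref{pseudo_dimension} and Lemma~\ref{lem_max_vc}. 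Plugging into the prior work's Lemma~14 calibration of $(\epsilon,k_N,m_N)$ yields the single-iteration high-probability bound
\[
\sup_{f,g\in\cF}|\hat L(f,g) - L(f,g)| \le \sqrt{\tfrac{c_0(\max\{c_0/b,1\})^{1/\kappa}}{4c_2 n}}
\]
with the stated $c_0,c_2$, at confidence $1-\delta/K$.

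Finally I would take a union bound over $k=0,1,\dots,K-1$ (which is where the $\log(e/(K\delta))$ factor inside $c_0$ comes from), combine with the Bellman error term, and conclude that with probability $1-\delta$ the claimed inequality holds for all $k$ simultaneously. The only real bookkeeping is checking that the Lipschitz/boundedness constants used to translate the metric entropy of $\cF$ into the empirical covering number of $\cL_{\cF,\cF}$ match the $(R+4\infn{Q^{\pi_\star}})$ scale, and that the union-bound-induced $\log K$ factor gets absorbed into $c_0$; this is direct and contains no new ideas beyond what was done in Lemma~\ref{appx_bel_traj}. The main (mild) obstacle is verifying that after the subtraction $\hat T f_k - \tfrac{\max+\min}{2}\mathbf{1}$, the iterate $f_{k+1}$ stays inside $\cF$ so the same regression class is used at every step — this follows from $\|f-\tfrac{\max f+\min f}{2}\mathbf{1}\|_\infty \le \infn{f}$, the anchor being $0$, and Assumptions~\ref{assum_star_fun}, \ref{assum_nor_fun}, \ref{assum_ran_fun}.
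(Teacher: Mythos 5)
Your proposal is correct and matches the paper's approach: the paper's own proof of this lemma is simply the remark that it ``basically follows from the proof of Lemma~\ref{appx_bel_traj},'' and your write-up carries out exactly that adaptation, correctly replacing the iteration-dependent bound $2f_{\max}+R=(2k+3)R$ with the fixed scale $R+4\infn{Q^{\pi_\star}}$ and noting the one point that needs checking (that the normalization and Assumptions~\ref{assum_nor_fun}--\ref{assum_ran_fun} keep every iterate in the single class $\cF$).
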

\begin{proof}    
The proof basically follows from the proof of Lemma \ref{appx_bel_traj}.
\end{proof}

Now, we prove  Theorem \ref{Sam_com_traj_rel}.
\begin{proof}[Proof of Theorem \ref{Sam_com_traj_rel}]

By combining Lemma \ref{appx_bel_traj_rel} and Proposition \ref{prop::1}, we directly obtain following results. Under assumptions
     stated in Theorem \ref{appx_bel_traj_rel}, we have
       \begin{align*}
       \| g^{\pi_\star}-g^{\pi_K}\|_{\infty}
       &\le C^{1/2}_\mu\frac{8\|Q^{\pi_\star}\|_{ 2, \mu}}{K+2}
       \\&+C^{1/2}_\mu\frac{2K}{3}\left(\left(\frac{c_{0}(\max \{c_{0}/b, 1\})^{1/\kappa}}{c_{2}n}\right)^{1/4}+ \sqrt{\epsilon_B(\cF,\cF)} \right),
    \end{align*}
           \begin{align*}
       \| g^{\pi_\star}-g^{\pi_K}\|_{2,\rho}
       &\le C^{1/2}_{\mu,\rho}\frac{8\|Q^{\pi_\star}\|_{ 2, \mu}}{K+2}
       \\&+C^{1/2}_{\mu,\rho}\frac{2K}{3}\left(\left(\frac{c_{0}(\max \{c_{0}/b, 1\})^{1/\kappa}}{c_{2}n}\right)^{1/4}+ \sqrt{\epsilon_B(\cF,\cF)} \right),
    \end{align*}
where $c_{0}= (V_{\cF}+V_{\cF_{max}})/2 \log n+\log (e/(K\delta))+\log(\max (c_{1},\bar{\beta},1)), c_{1}=16e^2(V_{\cF} +1)(V_{\cF_{max}} +1) (24e)^{V_{\cF}+V_{\cF_{max}}}, c_{2}=\frac{1}{512(R+4\infn{Q^{\pi_\star}})^4}, V_{\cF_{max}}= 2|\cA|V_{\cF} \log(3|\cA|)$.

   Given $\epsilon>0$, for the first inequality, let  $ K=\lceil 9C^{1/2}_\mu\|Q^{\pi_\star}\|_{2,\mu}/\epsilon\rceil$. Then, by direct calculation, we derive that 
     \[\infn{g^{\pi_\star}-g^{\pi_K}} \le \epsilon +KC^{1/2}_\mu \sqrt{\epsilon_B(\cF,\cF)}\]
     with sample complexity
     \[n=\tilde{\mathcal{O}}\left(\frac{b^{-1/\kappa}(c'_{0})^{\frac{1+\kappa}{\kappa}}(R+\infn{Q^{\pi_\star}})^4\|Q^{\pi_\star}\|_{\infty}^4C^{4}_\mu}{\epsilon^{8}}\right)\]
where $c'_{0}= \log (1/\delta)+\log(\max (c_{1},\bar{\beta})), c_{1}=16e^2(V_{\cF} +1)(V_{\cF_{max}} +1) (24e)^{V_{\cF}+V_{\cF_{max}}}, V_{\cF_{max}}= 2|\cA|V_{\cF} \log(3|\cA|)$, and $\tilde{\mathcal{O}}$ ignores all logarithmic factors.
     
   Similarly, given $\epsilon>0$, for the second inequality, let  $ K=\lceil 9C^{1/2}_{\mu,\rho}\|Q^{\pi_\star}\|_{2,\mu}/\epsilon\rceil$. Then, by direct calculation, we derive that 
     \[\infn{g^{\pi_\star}-g^{\pi_K}} \le \epsilon +KC^{1/2}_{\mu,\rho} \sqrt{\epsilon_B(\cF,\cF)}\]
     with sample complexity
     \[n=\tilde{\mathcal{O}}\left(\frac{b^{-1/\kappa}(c'_{0})^{\frac{1+\kappa}{\kappa}}(R+\infn{Q^{\pi_\star}})^4\|Q^{\pi_\star}\|_{\infty}^4C^{4}_{\mu,\rho}}{\epsilon^{8}}\right)\]
where $c'_{0}= \log (1/\delta)+\log(\max (c_{1},\bar{\beta})), c_{1}=16e^2(V_{\cF} +1)(V_{\cF_{max}} +1) (24e)^{V_{\cF}+V_{\cF_{max}}}, V_{\cF_{max}}= 2|\cA|V_{\cF} \log(3|\cA|)$, and $\tilde{\mathcal{O}}$ ignores all logarithmic factors.
\end{proof}

\end{document}